\documentclass{article}
\usepackage{amsmath,amscd,amsthm,amssymb}
\usepackage{graphicx,subcaption}
\usepackage[dvipsnames]{xcolor}
\usepackage{stfloats}
\usepackage{epsf,epstopdf}
\usepackage[top=0.6in,bottom=0.8in,left=0.5in,right=0.5in]{geometry}
\usepackage{fontspec}
\usepackage{fancyhdr}
\usepackage{hyperref}
\usepackage[all]{xy}
\usepackage{indentfirst} 
\usepackage{multicol}
\usepackage{gensymb}
\usepackage{bm}
\usepackage{ctex}
\usepackage{float}
\usepackage{multirow}
\usepackage{tikz}
\usepackage{pgfplots}
\usepackage{abstract}
\allowdisplaybreaks[4]
\usepackage[numbers,sort]{natbib}
\newtheorem{theorem}{Theorem} 
\newtheorem{lemma}{Lemma}
\newtheorem{proposition}{Proposition}
\newtheorem{assumption}{Assumption}
\newtheorem{remark}{Remark}

\allowdisplaybreaks[4]

\providecommand{\keywords}[1]
{
  \ \ \textbf{Keywords:} #1
}
\definecolor{light-gray}{gray}{0.92}

\title{Generalization Analysis of Distributed Kernel-based Robust Gradient Descent Algorithms$^\dag$\footnotetext{\dag~
The work described in this paper is supported by the National Natural Science Foundation of China (Grants Nos. 12071356, 12271473, 12526595). 
Yuan Mao is also supported by the Fundamental Research Funds for the Central Universities [Project No. 2662025XXQD001].
The corresponding author is Yuan Mao. Email addresses: mengjunyi@zju.edu.cn (J. Y. Meng), ymao@mail.hzau.edu.cn (Y. Mao), guozc@zju.edu.cn (Z. C. Guo).}}   
\author{Jun-Yi Meng$^1$, Yuan Mao$^2$ and Zheng-Chu Guo$^1$\\
\small $^1$ School of Mathematical Sciences, Zhejiang University, Hangzhou 310058, P. R. China \\
\small $^2$  College of Informatics, Huazhong Agricultural University, Wuhan 430070, P. R. China }
\date{}

\begin{document}
\maketitle

\begin{abstract}
    In this paper, we investigate the generalization performance of distributed gradient descent algorithms in a reproducing kernel Hilbert space under a robust loss function $l_{\sigma}$.  By exploiting the spectral characterization of gradient descent together with the intrinsic properties of robust loss functions, we establish optimal learning rates for the distributed kernel-based robust gradient descent (DKRGD) algorithm with an appropriately chosen scale parameter $\sigma$. The proposed parameter choice of $\sigma$ simultaneously alleviates the saturation phenomenon and guarantees statistical robustness. A key technical contribution is a novel error analysis that provides substantially sharper bounds for products of operators, thereby significantly relaxing existing restrictions on the maximum number of local machines while retaining optimal learning rates. Finally, we develop a communication-efficient strategy that further improves the convergence performance of DKRGD.
\end{abstract}

\keywords{Learning theory, Distributed learning, Robust regression, Gradient descent, Communication}

\hspace*{0.6em}\textbf{Mathematics Subject Classification (2020):} 68T05, 68Q32, 68W15, 62G08, 62G35, 62J02

\section{Introduction}\label{section: introdcution}
With the rapid growth of data across a wide range of applications, performing machine learning tasks on a single machine is becoming increasingly impractical. Meanwhile, privacy constraints and data isolation often prevent data collected by different organizations from being directly pooled. For example, patient records may be stored separately across hospitals, while financial institutions typically maintain their risk-management data independently. These considerations have motivated significant interest in distributed learning, where statistical models are constructed from data stored across multiple local machines without requiring direct access to the entire data set. Among various distributed learning paradigms, the ``divide-and-conquer'' approach has attracted particular attention due to its conceptual simplicity and computational efficiency \cite{zhang2015divide,lin2017distributed}.

A substantial body of work has investigated the statistical performance of distributed kernel-based learning algorithms. For distributed regularized least-squares algorithms, optimal learning rates in expectation were established in \cite{zhang2015divide,lin2017distributed}. Subsequently, \cite{guoLearningTheoryDistributed2017} and \cite{mucke2018parallelizing} extended the analysis to distributed spectral algorithms and derived optimal learning rates in expectation. More recently, \cite{zhou2020distributed} refined the theoretical analysis of distributed kernel ridge regression (DKRR), establishing optimal learning rates both in expectation and with high probability.

Despite these advances, the general frameworks developed in \cite{guoLearningTheoryDistributed2017,mucke2018parallelizing} exhibit a saturation phenomenon with respect to the maximal admissible number of local machines. More precisely, once the regularity of the regression function exceeds a certain threshold, further improvements in regularity no longer allow for an increase in the number of local machines while preserving the optimal learning rate. Moreover, the results in \cite{zhou2020distributed} are restricted to the regularity range $\frac{1}{2}\leq r\leq 1$. To alleviate this saturation phenomenon, \cite{lin2018distributed} and \cite{hu2020distributed} exploited the specific spectral structure of gradient descent algorithms to derive improved bounds on the maximum number of local machines.

Another important issue in statistical learning is robustness against outliers and heavy-tailed observations. Least-squares estimators can be sensitive to atypical observations because large residuals are penalized disproportionately heavily. To improve robustness, \cite{guo2018gradient,HongGuo2025RobustKernelBased,GuoChristmannShi2024Optimality} considered a family of robust loss functions of the form
$
l_\sigma(u)=G\left(\frac{u^2}{\sigma^2}\right),
$
where $G$ is a windowing function and $\sigma>0$ is a scale parameter. Appropriate choices of $G$ lead to a broad class of robust loss functions that reduce the influence of large residuals while maintaining desirable statistical properties.

These developments naturally raise the question of whether robust kernel-based gradient methods can be effectively combined with distributed learning. In particular, it is important to understand whether a distributed robust gradient descent algorithm can simultaneously retain statistical robustness, achieve optimal learning rates, and accommodate a sufficiently large number of local machines. The last issue is especially important because the restriction on the number of machines directly determines the extent to which a distributed method can benefit from increasing computational resources.

In this paper, we study the generalization performance of DKRGD in a reproducing kernel Hilbert space (RKHS). Each local machine applies a robust gradient descent algorithm to its own data, and the resulting local estimators are subsequently aggregated to form a global estimator. Our analysis characterizes the statistical performance of the last iterate under standard source and capacity conditions, and reveals the interaction among the regularity of the regression function, the capacity of the hypothesis space, the robust scale parameter $\sigma$, and the number of local machines.

The main contributions of this paper are summarized as follows.

First, we establish a generalization theory for DKRGD both in expectation and with high probability. With appropriate choices of the early-stopping time and the scale parameter $\sigma$, the proposed algorithm achieves the optimal learning rates determined by the regularity of the regression function and the capacity of the underlying RKHS. In particular, the scale parameter can be selected so that
the optimal learning rates do not deteriorate due to the additional error induced by the robust loss, while the estimator retains the robustness associated with the loss function $l_\sigma$.

Second, we develop a refined error analysis that yields sharper estimates for the operator products arising in the study of distributed gradient descent. The resulting bounds substantially relax the restriction on the maximum number of local machines. In particular, the improved analysis alleviates the saturation phenomenon that arises in existing general analyses of distributed spectral algorithms when the regularity of the regression function becomes sufficiently high.

Third, we develop a DKRGD algorithm with communication based on a Newton--Raphson-type correction. The proposed communication strategy further enlarges the admissible number of local machines while preserving the optimal learning rate with high probability. Moreover, the maximum number of local machines increases with the number of communication rounds, revealing a favorable trade-off between communication cost and distributed scalability.

The remainder of this paper is organized as follows. Section \ref{section: problem setting} outlines the regression setting and robust loss functions, followed by a detailed description of DKRGD and its communication strategy. Under mild assumptions, Section \ref{section:section2} establishes optimal learning rates for DKRGD both with high probability and in expectation, and optimal learning rates for DKRGD with communication under high probability. Additionally, it discusses the corresponding restrictions on the number of local machines. Section \ref{section:section3} discusses and compares the theoretical results with related distributed learning methods. Section \ref{section:section4} presents the error decomposition and auxiliary results used in the analysis. Finally, the proofs of the main results are provided in Section \ref{section:section5}.

\section{Problem Setting and Distributed Kernel-based Robust Gradient Descent}\label{section: problem setting}

In this paper, we consider the nonparametric regression problem based on i.i.d. samples $ D=\{x_i, y_i\}_{i=1}^{|D|}$
drawn from an unknown distribution $\rho$ on $Z := \mathcal{X} \times \mathcal{Y}$, where input space $\mathcal{X}$ is a separable metric space and $\mathcal{Y}\subset \mathbb{R}$ is the output space. The training data are assumed to be generated 
according to the model $Y=f_\rho(X)+\epsilon,$
where $ \mathbb{E}[\epsilon|X]=0,$ $X$ is the explanatory variable and $Y$ denotes the corresponding response variable. The regression function $f_\rho$ is defined as 
\begin{equation*}
    f_\rho(x)=\int_{\mathcal{Y}}y \mathrm{d}\rho(y|x), \quad x\in \mathcal{X},
\end{equation*}
where $\rho(y|x)$ denotes the conditional distribution at $x$ induced by $\rho$. The goal of regression is to recover $f_\rho$ from the observed random samples.

Our objective is to investigate the generalization performance of robust gradient descent algorithms in a distributed learning setting. To this end, we adopt the class of robust loss functions introduced in \cite{guo2018gradient}, defined by
\begin{equation*}
l_\sigma(u)=G(\frac{u^{2}}{\sigma^2}),
\end{equation*}  where $G \colon \mathbb{R}_+ \to \mathbb{R}$ is a windowing function and $\sigma>0$ is a scale parameter controlling the degree of robustness.
Hereafter, the windowing function $G$ is assumed to satisfy the following two conditions, which are
\begin{equation}\label{eq:windowing1}
    G'_+(0)>0,\  C_G:=\sup_{s\in (0,\infty)}|G'(s)| \quad \text{with } G'(s)>0 \ \text{for} \ s>0,
\end{equation}
and there exists some $p\geq 0$ and $c_p>0$ such that
\begin{equation}\label{eq:windowing2}
    \left|G'(s)-G'_+(0)\right|\leq c_p|s|^p,\quad \forall s>0.
\end{equation}
By choosing different windowing functions $G$, the general form
$l_\sigma(u)=G(u^2/\sigma^2)$ encompasses a variety of commonly used robust loss functions. Several representative examples for regression are presented below, where $\mathbb{I}_{\mathcal{A}}$ denotes the indicator function of a set $\mathcal{A}$.

\noindent\textbf{Example 1.} Huber's loss combines the advantages of mean squared error and mean absolute error \cite{huber1992robust}.  It takes a quadratic form for small errors and a linear form for larger errors. The balance between two terms is controlled by a threshold parameter $ \sigma $, providing robustness to outliers while maintaining precision.
$$
l_\sigma(u)=\mathbb{I}_{\{|u|\leq \sigma\}}\frac{u^2}{2\sigma^2}+\mathbb{I}_{\{|u|>\sigma\}}\left(\frac{|u|}{\sigma}-\frac{1}{2}\right),\ G(s)=\mathbb{I}_{\{s\leq 1\}}\frac{s}{2}+\mathbb{I}_{\{s> 1\}}\left(\sqrt{s}-\frac{1}{2}\right),\ p=0,\ c_p=\frac{1}{2}.
$$

\noindent\textbf{Example 2.} Fair loss focuses on balancing treatment for larger errors through a piecewise linear modeling approach. It imposes a smaller penalty on larger errors compared to mean squared error, making it suitable for scenarios where equal treatment of errors across different ranges is desired.
$$
l_\sigma(u)=\frac{|u|}{\sigma}-\log\left(1+\frac{|u|}{\sigma}\right),\ G(s)=\sqrt{s}-\log(1+\sqrt{s}),\ p=\frac{1}{2},\ c_p=\frac{1}{2}.
$$
Both loss functions above are convex, and their curves for different values of $ \sigma $ are shown in Figure \ref{fig:fig1}.
\begin{figure}[H] 
    \centering
    \begin{minipage}[t]{0.48\textwidth}
        \includegraphics[scale=0.65]{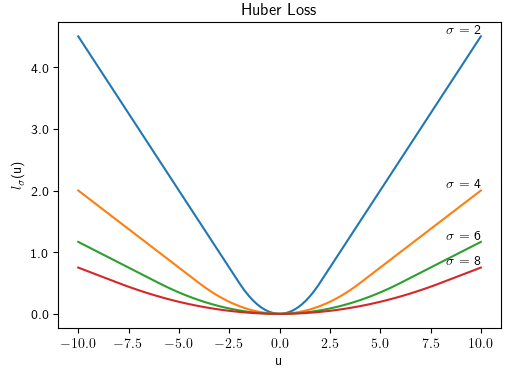}
        \subcaption{}
    \end{minipage}
    \begin{minipage}[t]{0.48\textwidth}
        \includegraphics[scale=0.65]{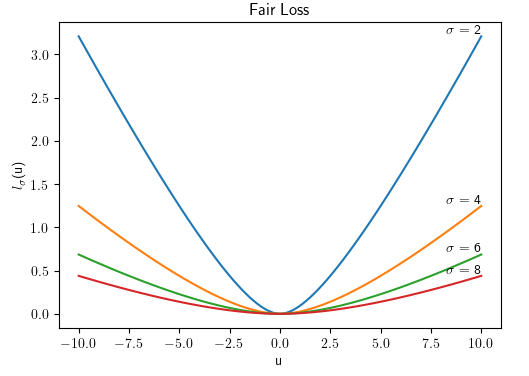}
        \subcaption{}
    \end{minipage}
    \vspace{-4pt}
    \caption{Two convex types of $l_\sigma$ with different $\sigma$. (a) Huber Loss, (b) Fair Loss.}
    \label{fig:fig1}
\end{figure}

Note that the loss function is not necessarily convex; two illustrative nonconvex losses are given below, and their plots for different values of $\sigma$ are shown in Figure \ref{fig: fig2}.

\noindent\textbf{Example 3.} Cauchy loss, motivated by the Cauchy distribution, is designed to aggressively suppress extreme errors.
Compared to Huber's loss, Cauchy loss has a slower decay in the tails, leading to a stronger suppression of outliers.
$$
l_\sigma(u)=\log\left(1+\frac{u^2}{2\sigma^2}\right),\ G(s)=\log\left(1+\frac{s}{2}\right),\ p=1,\ c_p=\frac{1}{4}.
$$

\noindent\textbf{Example 4.} Welsch loss can be induced by the well-known correntropy loss, which resembles mean squared error for small errors and adopts a square root form for larger errors \cite{Holland1977RobustRU}.
The roots of the second derivative of Welsch loss are located at $u=\pm \sigma$, indicating that the loss function is concave for $|u|>\sigma$ and convex for $|u|<\sigma$.
Therefore, adjusting scaling parameter $ \sigma $ allows rejecting outliers while keeping a similar prediction accuracy as that of least squares loss.
$$
l_\sigma(u)=1-\exp\left(-\frac{u^2}{2\sigma^2}\right),\ G(s)=1-\exp\left(-\frac{s}{2}\right),\ p=1,\ c_p=\frac{1}{4}.
$$
\begin{figure}[H] 
    \centering
    \begin{minipage}[t]{0.48\textwidth}
        \includegraphics[scale=0.65]{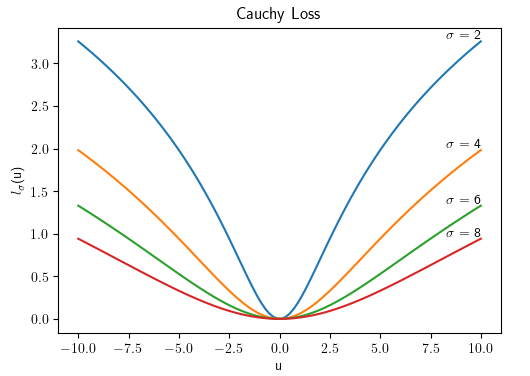}
        \subcaption{}
    \end{minipage}
    \vspace{-2pt}
    \begin{minipage}[t]{0.48\textwidth}
        \includegraphics[scale=0.65]{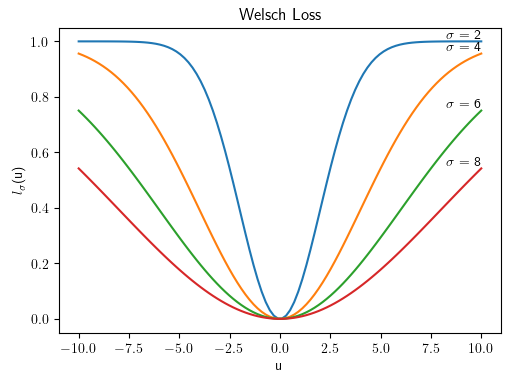}
        \subcaption{}
    \end{minipage}
    \vspace{-2pt}
    \caption{Two nonconvex types of $l_\sigma$ with different $\sigma$. (a) Cauchy Loss, (b) Welsch Loss.}
    \label{fig: fig2}
\end{figure}

\subsection{Distributed Kernel-based Robust Gradient Descent}
In this subsection, we introduce the DKRGD algorithm. We begin with some notation that will be used throughout the subsequent analysis.

Denote by $K \colon \mathcal{X}\times \mathcal{X} \to \mathbb{R}$ a Mercer kernel and $(\mathcal{H}_K,\left\| \cdot  \right\|_{K})$ the corresponding RKHS.
Given i.i.d samples $D=\left\{(x_i,y_i)\right\}_{i=1}^{|D|}$ drawn from $ \rho $, the empirical risk is defined by a robust loss function $l_\sigma$ as
\begin{equation}\label{eq:emprisk}
    \mathcal{E}_{l_\sigma} = \frac{1}{|D|}\sum_{(x_i,y_i)\in D} l_\sigma\left(f(x_i)-y_i\right),
\end{equation}
where $|D|$ is the cardinality of $D$.
Then a robust estimator can be obtained by minimizing the empirical risk $ \mathcal{E}_{l_\sigma} $ over $\mathcal{H}_K$.
Let $K_x \colon \mathcal{X} \to \mathbb{R}$ be the function defined by $K_x(\cdot)=K(x,\cdot)$ for $x\in \mathcal{X}$.
By the reproducing property of kernel $ K $ and proposition 2.1 in \cite{yao2007early}, we can compute the gradient of \eqref{eq:emprisk} and further define the kernel-based robust gradient descent algorithms as follows with $f_{1,D}=0$ and 
\begin{equation}\label{eq:algorithm}
    f_{t+1,D}=f_{t,D}-\frac{\eta}{|D|}\sum_{i=1}^{|D|} G'(\xi_{t,D,\sigma}(z_i))(f_{t,D}(x_i)-y_i)K_{x_i}, \quad \forall t\geqslant 1,
\end{equation}
where $\eta >0$ is the step size, $z_i=(x_i,y_i),$  and $$\xi_{t,D,\sigma}(z_i)=\frac{(f_{t,D}(x_i)-y_i)^2}{\sigma^2}.$$ 

We next extend (\ref{eq:algorithm}) to a distributed learning setting. Specifically, the full data set $D$ is partitioned into $m$ disjoint subsets $\{D_j\}_{j=1}^m$ such that $D=\bigcup_{j=1}^m D_j$ and $D_i \cap D_j = \emptyset$ for $i\neq j$. On the $j$-th local machine, the robust gradient descent iteration \eqref{eq:algorithm} is performed using only the local sample $D_j$, yielding the local estimator $f_{t,D_j}$. Subsequently, the local estimators are aggregated through an average weighted by the local sample sizes to produce a global estimator
\begin{equation}\label{eq:equation4}
    \bar{f}_{t,D}=\sum_{j=1}^m \frac{|D_j|}{|D|} f_{t,D_j}.
\end{equation}
When each local machine performs $T$ gradient descent iterations, our analysis focuses on the last iterate
$\bar{f}_{T+1,D}$, where $T=T(|D|):\mathbb{N}\to \mathbb{N}$ serves as an early-stopping rule to prevent overfitting.

The distributed estimator \eqref{eq:equation4} follows the standard divide-and-conquer paradigm: the local machines perform their computations independently, and communication is required only for the final aggregation of the local estimators. Although such a strategy is simple and communication-efficient, weighted aggregation alone may fail to fully compensate for the statistical loss caused by splitting the full sample across multiple local machines \cite{zhou2020distributed}. Consequently, to retain the optimal learning rate, the number of local machines typically has to satisfy a restrictive upper bound.

A natural approach to alleviating this restriction is to allow additional communication among the local machines. Existing distributed learning methods achieve this by exchanging various types of information, including data \cite{bellet2015distributed}, gradients \cite{zeng2018nonconvex,zhou2020distributed,yin2021distributed}, and local estimators \cite{huang2019distributed,LiuShi2025DistributedFunctional,WangHuLei2025DistributedRobust}. In particular, \cite{zhou2020distributed} developed a Newton--Raphson-based communication strategy for DKRR, which substantially relaxed the restriction on the admissible number of local machines. Motivated by this idea, we develop an analogous communication strategy for DKRGD in the next subsection.

One of the main objectives of this paper is to establish optimal learning rates for the distributed estimator \eqref{eq:equation4}, both in expectation and with high probability. By exploiting the concentration inequality developed in \cite{minsker2017some}, our analysis allows a substantially larger number of local machines than that permitted by existing analyses. Furthermore, we introduce a DKRGD algorithm with communication that further relaxes the restriction on the number of local machines, without requiring additional unlabeled data as in \cite{caponnetto2010cross,lin2018distributed}.

\subsection{DKRGD with Communication}\label{section:section20}
In this subsection, we propose a novel communication strategy for DKRGD with the aim of further relaxing the restriction on the number of local machines. Our construction is motivated by the Newton--Raphson-type communication strategy introduced in \cite{zhou2020distributed}. To facilitate the development of the algorithm, we first derive two operator representations of the robust gradient descent iteration \eqref{eq:algorithm}.

Define the empirical integral operator $L_{K,D}$ as follows,
$$
L_{K,D}(f)=\frac{1}{|D|}\sum_{(x_i,y_i)\in D} f(x_i)K_{x_i},\quad \quad \forall f\in \mathcal{H}_K,
$$
and denote 
\begin{equation}
    \label{eq:EtD}
    E_{t,D,\sigma}=\frac{1}{|D|}\sum_{(x_i,y_i)\in D} (G'_+(0)-G'(\xi_{t,D,\sigma}(z_i)))(f_{t,D}(x_i)-y_i)K_{x_i},
\end{equation}
then the algorithm (\ref{eq:algorithm}) can be rewritten as
\begin{equation*}
    \begin{aligned}
    f_{t+1,D}&=f_{t,D}-\frac{\eta}{|D|}\sum_{i=1}^{|D|} G'_+(0)(f_{t,D}(x_i)-y_i)K_{x_i}+\eta E_{t,D,\sigma}\\
    &=(I-\eta G'_+(0) L_{K,D}) f_{t,D}+\eta G'_+(0) \hat{f}_{K,D}+\eta E_{t,D,\sigma},
    \end{aligned}
\end{equation*}
where $I$ denotes the identity operator, and $\hat{f}_{K,D}=\frac{1}{|D|}\sum_{(x_i,y_i)\in D} y_iK_{x_i}$.

By adding and subtracting the population integral operator $L_K$, defined in \eqref{eq:integral}, we also obtain
\begin{equation*}
    \begin{aligned}
    f_{t+1,D}= (I-\eta G'_+(0) L_{K}) f_{t,D}+\eta G'_+(0) (L_{K}-L_{K,D}) f_{t,D}+\eta G'_+(0) \hat{f}_{K,D}+\eta E_{t,D,\sigma}.
    \end{aligned}
\end{equation*}
Then by induction, we can derive two representations for $f_{t+1,D}$ as 
\begin{equation}\label{eq:algorithm1}
    \begin{aligned}
    f_{t+1,D}&= \sum_{i=1}^t \eta G'_+(0) \left(I-\eta G'_+(0) L_{K,D}\right)^{t-i}\hat{f}_{K,D} + \sum_{i=1}^t \eta \left(I-\eta G'_+(0) L_{K,D}\right)^{t-i}E_{i,D,\sigma},
    \end{aligned}
\end{equation}
and 
\begin{equation}\label{eq:algorithm12}
    \begin{aligned}
    f_{t+1,D}&= \sum_{i=1}^t \eta G'_+(0) \left(I-\eta G'_+(0) L_{K}\right)^{t-i}\hat{f}_{K,D} + \sum_{i=1}^t \eta G'_+(0) \left(I-\eta G'_+(0) L_{K}\right)^{t-i} (L_K-L_{K,D}) f_{i,D}\\
    &+ \sum_{i=1}^t \eta \left(I-\eta G'_+(0) L_{K}\right)^{t-i}E_{i,D,\sigma}.
    \end{aligned}
\end{equation}

Based on the representation (\ref{eq:algorithm1}), we can design a communication strategy for DKRGD by the Newton-Raphson iteration applied in \cite{zhou2020distributed}.
For the sake of convenience, we introduce a polynomial denoted by
\begin{equation}\label{eq:gt1}
    g_{t}(x)=\sum_{i=1}^{t}\eta G'_+(0)\left(1-\eta G'_+(0)x\right)^{t-i},\quad \forall t>0,
\end{equation}
and $g_t(L_{K,D})$ is defined by the spectral calculus.
The invertibility of $g_t(L_{K,D})$ can be guaranteed by the specific choice of $\eta$, which will be given in the main results.
Then for any $ f\in \mathcal{H}_K $, we can rewrite $f_{t+1,D}$ as
\begin{equation}\label{eq:newton12}
    \begin{aligned}
    f_{t+1,D}
    &=g_t(L_{K,D})\hat{f}_{K,D}+ \sum_{i=1}^t \eta  \left(I-\eta G'_+(0) L_{K,D}\right)^{t-i}E_{i,D,\sigma}\\
    &=f-g_t(L_{K,D})\left[g_t^{-1}(L_{K,D})f-\hat{f}_{K,D}- g_t^{-1}(L_{K,D})\sum_{i=1}^t \eta  \left(I-\eta G'_+(0) L_{K,D}\right)^{t-i}E_{i,D,\sigma}\right],
    \end{aligned}
\end{equation}
where the last equation can be seen as the well known Newton-Raphson iteration.
Further, a variant of the global gradient for the empirical risk (\ref{eq:emprisk}) on $\mathcal{H}_K$ over $f$ can be expressed as
\begin{equation}\label{eq:totgradient}
    G_{t+1,D}(f)=g_t^{-1}(L_{K,D})f-\hat{f}_{K,D}-g_t^{-1}(L_{K,D})\sum_{i=1}^t \eta  \left(I-\eta G'_+(0) L_{K,D}\right)^{t-i}E_{i,D,\sigma},
\end{equation}
where $E_{i,D,\sigma}$ remains invariant to the change of $f$.
In distributed setting, denote the global estimator (\ref{eq:equation4}) without communication by
$\bar{f}_{t+1,D}^0:=\bar{f}_{t+1,D}=\sum_{j=1}^m \frac{|D_j|}{|D|}f_{t+1,D_j}$. 
Then by applying \eqref{eq:newton12} to each  $D_j$, we can get
\begin{equation*}
    \begin{aligned}
    \bar{f}_{t+1,D}^0
    &=f-\sum_{j=1}^m \frac{|D_j|}{|D|}g_t(L_{K,D_j})\left[g_t^{-1}(L_{K,D_j})f-\hat{f}_{K,D_j}- g_t^{-1}(L_{K,D_j})\sum_{i=1}^t \eta \left(I-\eta G'_+(0) L_{K,D_j}\right)^{t-i} E_{i,D_j,\sigma}\right].
    \end{aligned}
\end{equation*}
For any $l \in \mathbb{N}_{+}$, let
\begin{equation*}
    \beta_{D_j,l-1}=g_t(L_{K,D_j})G_{t+1,D} (\bar{f}_{t+1,D}^{l-1}).
\end{equation*}
Then we can use the Newton-Raphson iteration to design our communication strategy as
\begin{equation}\label{eq:ftl11}
    \begin{aligned}
        \bar{f}_{t+1,D}^l&=\bar{f}_{t+1,D}^{l-1}-\sum_{j=1}^m \frac{|D_j|}{|D|}\beta_{D_j,l-1}\\
        &=\bar{f}_{t+1,D}^{l-1}-\sum_{j=1}^m \frac{|D_j|}{|D|}g_t(L_{K,D_j})\left[g_t^{-1}(L_{K,D})\bar{f}_{t+1,D}^{l-1}-\hat{f}_{K,D}\right]\\
        &+ \sum_{j=1}^m \frac{|D_j|}{|D|}g_t(L_{K,D_j})g_t^{-1}(L_{K,D})\sum_{i=1}^t \eta \left(I-\eta G'_+(0) L_{K,D}\right)^{t-i} E_{i,D,\sigma}.
    \end{aligned}
\end{equation}
Thus, each communication round applies a Newton--Raphson-type correction to the current global estimator by combining information from the local empirical operators. As will be shown in the subsequent analysis, these additional communication rounds enable DKRGD to tolerate a substantially larger number of local machines while retaining the desired generalization performance.

It is worth noting that the global quantity $G_{t+1,D}$ in \eqref{eq:totgradient} cannot be recovered by simply taking a weighted average of its local counterparts in general. Consequently, an exact implementation of \eqref{eq:ftl11} requires the estimation or communication of certain global operator quantities. Developing more efficient aggregation and operator-estimation procedures for this communication scheme is an interesting direction for future work.

\section{Main Results}\label{section:section2}
In this section, we present generalization bounds for DKRGD both with high probability and in expectation. We then investigate DKRGD with communication and show that, under suitable regularity conditions, communication can further relax the restriction on the number of local machines while preserving the optimal learning rate. To state the main results, we first introduce several standard assumptions concerning the boundedness of the output, the regularity of the regression function, and the capacity of the RKHS $ \mathcal{H}_K $.

Throughout this paper, we assume that the kernel $K$ is a Mercer kernel and $\mathcal{X}$ is compact, then $K$ is bounded and define $ \kappa = \sqrt{\sup_{x\in\mathcal{X}}K(x,x)} <\infty $.
Without loss of generality, we will assume $ \kappa \geq 1 $ in the subsequent analysis.
Let $ \rho_{X} $ be the marginal distribution of $ \rho $ and $L^2_{\rho_X}$ be the Hilbert space of $\rho_X$ square integrable functions on $\mathcal{X}$, with norm denoted by$\|\cdot \|_\rho$.
The kernel $K$ induces the integral operator $L_K$ defined on $L^2_{\rho_X}$ (or the RKHS $\mathcal{H}_K$) by
\begin{equation}
    \label{eq:integral}
    L_K(f)=\int_{\mathcal{X}}f(x)K_x \mathrm{d}\rho_X, \quad f\in L^2_{\rho_X}\ (\text{or}\ f \in \mathcal{H}_K).
\end{equation}
We first impose a standard boundedness condition on the output variable.
\begin{assumption}
    \label{assum:1}
    There exists a constant $ M>0 $ such that $ |y|\leq M $ almost surely with respect to $ \rho $.
\end{assumption}
The boundedness of the outputs is a common assumption in the analysis of regression problems, which is also adopted in \cite{guo2018gradient,guoLearningTheoryDistributed2017}.
Although it is slightly stricter than some moment conditions in more general settings \cite{caponnetto2007optimal}, our analysis in this paper can be easily extended to the case by assuming moment conditions on the outputs.

The next assumption characterizes the regularity of the regression function.
\begin{assumption}
    \label{assum:2}
    There exist $r>0$ and $u_\rho\in L^2_{\rho_X}$ such that
    \begin{equation}\label{eq:regularity}
        f_\rho = L_K^r u_\rho,
    \end{equation}
    where $L_K^r$ denotes the r-th power of $L_K \colon L_{\rho_X}^2 \to L_{\rho_X}^2$ as a compact and positive operator.
\end{assumption}
The regularity condition (\ref{eq:regularity}) of the regression function is a common assumption in the analysis of kernel-based learning algorithms \cite{caponnetto2007optimal,guo2018gradient,guoLearningTheoryDistributed2017,zhou2020distributed}, which is also known as the source condition in the context of inverse problems.
It states that $ f_{\rho} $ lies in the range of $ L_K^r $, and the special case $ r=1 / 2 $ corresponds to the situation where $ f_{\rho} \in \mathcal{H}_K $. 
Intuitively, larger values of $r$ indicate higher regularity of 
$ f_{\rho} $, potentially resulting in improved learning rates.

The complexity of the hypothesis space $ \mathcal{H}_K $ with respect to the measure $ \rho_{X} $ is measured by the effective dimension defined as
\begin{equation*}\label{eq:effective}
    \mathcal{N}(\lambda) = \mathrm{Tr}\left((\lambda I+L_K)^{-1}L_K\right), \quad \lambda > 0.
\end{equation*}
And the capacity assumption is given by the polynomial decay of the effective dimension.
\begin{assumption}
    \label{assume:3}
    There exists some $s\in (0,1]$ and a constant $C_0\geq 1$ which is independent of $\lambda$, such that 
    \begin{equation}
        \label{eq:effdimension}
        \mathcal{N}(\lambda) \leq C_0 \lambda^{-s}, \quad \forall \lambda > 0.
    \end{equation}
\end{assumption}
Condition (\ref{eq:effdimension}) with $ s=1 $ is always satisfied by taking the constant $ C_0 = \mathrm{Tr}(L_K) \leqslant \kappa^{2} $. 
Now we are ready to present the main results, the first of which to be proved in section \ref{section:section5} exhibits optimal learning rates for DKRGD with high probability.
Note that the generalization performance of target function estimation is measured by the $\rho-$ distance between the estimator and the regression function, i.e., $\left\|\bar{f}_{T+1,D}-f_\rho\right\|_\rho$.

\begin{theorem}\label{thm:theorem1}
    Let $0<\delta<1$, $p> 0$ and $0<\eta \leq \frac{1}{\kappa^2 \max\{G'_+(0),C_G\}}$. Under Assumption \ref{assum:1}-\ref{assume:3} with $r > \frac{1}{2}$ and $0<s\leq 1$, if $\lambda=|D|^{-\frac{1}{2r+s}}$, $T=\left\lceil |D|^{\frac{1}{2r+s}} \right\rceil$, $|D_1|=\cdots=|D_m|$ and
    \begin{equation}\label{eq:m1}
        m\leq \frac{|D|^{\frac{2r+s-1}{4r+2s}}}{\left( \left(\log |D|\right)^5+1 \right)^{2}},
    \end{equation}
   then with confidence at least $1-\delta$, there holds
   \begin{equation}\label{eq:prob1}
    \left\|\bar{f}_{T+1,D}-f_\rho\right\|_\rho \leq \tilde{C}_1\max\left\{|D|^{-\frac{r}{2r+s}},\frac{|D|^{\frac{p+1}{2r+s}}}{{\sigma^{2p}}}\right\}\left(\log \frac{48}{\delta}\right)^6,
    \end{equation}
    where $\tilde{C}_1$ is a constant independent of $|D|$ or $m$, and will be given explicitly in the proof.
\end{theorem}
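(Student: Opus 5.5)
We compare $\bar f_{T+1,D}$ with the distributed estimator produced by ordinary least-squares kernel gradient descent, and treat the robust loss as a perturbation.

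\textbf{Step 1: split off the robust perturbation.} Let $f^{\mathrm{ls}}_{T+1,D_j}=g_T(L_{K,D_j})\hat f_{K,D_j}$ be the least-squares gradient descent iterate on $D_j$, with step size $\eta G'_+(0)$. By \eqref{eq:algorithm1},
\[
\bar f_{T+1,D}-f_\rho=\Big(\sum_{j}\tfrac{|D_j|}{|D|}f^{\mathrm{ls}}_{T+1,D_j}-f_\rho\Big)+\sum_{j}\tfrac{|D_j|}{|D|}\sum_{i=1}^T\eta\big(I-\eta G'_+(0)L_{K,D_j}\big)^{T-i}E_{i,D_j,\sigma}.
\]

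\textbf{Step 2: bound the robust term.} By \eqref{eq:windowing2}, $|G'_+(0)-G'(\xi)|\le c_p\xi^p$. Hence
\[
\|E_{i,D_j,\sigma}\|_K\le c_p\kappa\,\sigma^{-2p}\max_k|f_{i,D_j}(x_k)-y_k|^{2p+1}.
\]
We therefore need a uniform bound $\|f_{i,D_j}\|_\infty\le \kappa\|f_{i,D_j}\|_K\lesssim \sqrt{\eta i}\,M$. This follows by induction from the contraction $\|(I-\eta G'_+(0)L_{K,D})^k\|\le1$ (guaranteed by the step-size condition) together with the fact that $|G'|\le C_G$.

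The $\rho$-norm of the resulting sum is controlled using
\[
\big\|L_K^{1/2}(I-\eta G'_+(0)L_{K,D_j})^{T-i}\big\|\lesssim \big\|L_K^{1/2}(L_{K,D_j}+\lambda)^{-1/2}\big\|\big((T-i)^{-1/2}+\lambda^{1/2}\big).
\]
Summing over $i\le T$ then gives an estimate of order $T^{p+1}\sigma^{-2p}=|D|^{(p+1)/(2r+s)}\sigma^{-2p}$, up to logarithmic and operator-ratio factors. This matches the second entry of the maximum in \eqref{eq:prob1}. The exponent bookkeeping may need the sharper bound $\|f_{i}\|_K\lesssim1$, which holds when $r>1/2$ via the bias decomposition below; I would use whichever form yields exactly $T^{p+1}$.

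\textbf{Step 3: the least-squares distributed part.} Write $f^{\mathrm{ls}}-f_\rho$ using \eqref{eq:algorithm12}, i.e., with $L_K$ in place of $L_{K,D_j}$. This gives three pieces.
\begin{itemize}
\item A population bias $(I-\eta G'_+(0)L_K)^T f_\rho$, bounded by $\lambda^{r}$ through the spectral qualification of gradient descent, which has no saturation.
\item A global sample-error term $g_T(L_K)(\hat f_{K,D}-L_Kf_\rho)$, since averaging the $\hat f_{K,D_j}$ yields $\hat f_{K,D}$. This is bounded by $\sqrt{\mathcal N(\lambda)/|D|}\log(1/\delta)$ using a Hilbert-space Bernstein inequality.
\item The averaged operator-difference terms $\sum_j\frac{|D_j|}{|D|}\sum_i\eta G'_+(0)(I-\eta G'_+(0)L_K)^{T-i}(L_K-L_{K,D_j})f_{i,D_j}$.
\end{itemize}
For the third piece, replace $f_{i,D_j}$ by its population counterpart $f_i^\ast$. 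The part involving $(L_K-L_{K,D_j})f_i^\ast$ has mean zero across machines, so averaging yields the global $1/\sqrt{|D|}$ rate. The remainder $(L_K-L_{K,D_j})(f_{i,D_j}-f_i^\ast)$ is a product of two local deviations, each of order $\sqrt{\mathcal N(\lambda)/|D_j|}$ up to logarithms. Its contribution is therefore about $m\,\mathcal N(\lambda)\lambda^{-1/2}/|D|$ times logarithmic factors. This is at most $|D|^{-r/(2r+s)}$ precisely under \eqref{eq:m1}.

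\textbf{Main obstacle.} The hard step is this sharp bound on the operator products
\[
\big\|(L_K+\lambda)^{1/2}(L_{K,D_j}+\lambda)^{-1/2}\big\| \quad\text{and}\quad \big\|(L_K+\lambda)^{-1/2}(L_K-L_{K,D_j})\big\|,
\]
uniformly over $m$ machines and $T$ iterations. Here I would apply Minsker's intrinsic-dimension Bernstein inequality \cite{minsker2017some}, which costs only a $\log$ factor rather than a $\lambda^{-1}$ factor. I would then take a union bound over the $m$ machines; this is what produces the $(\log|D|)^{10}$ in \eqref{eq:m1} and the $(\log\frac{48}{\delta})^6$ in \eqref{eq:prob1}. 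Collecting the three pieces with the robust term, and substituting $\lambda=|D|^{-1/(2r+s)}$, $T=\lceil\lambda^{-1}\rceil$, and $\mathcal N(\lambda)\le C_0\lambda^{-s}$, gives \eqref{eq:prob1}.
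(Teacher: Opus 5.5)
Your decomposition is a genuinely different route from the paper's. The paper compares $\bar f_{T+1,D}$ with the robust data-free iterates $f_t$ and carries $E_{i,D_j,\sigma}-E_{i,\sigma}$ through all the operator-difference terms (Propositions \ref{prop:prop2}--\ref{prop:prop4}). You instead peel off the whole robust perturbation via \eqref{eq:algorithm1} and analyze plain least-squares distributed gradient descent. That split is legitimate and arguably cleaner, since the robust error never enters the cross terms, and Step 2 does yield $\mathcal{A}_{D_j,\lambda}T^{p+1}\sigma^{-2p}$.

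One correction in Step 2: the growth $\|f_{i,D_j}\|_K\lesssim\sqrt{i}$ does not follow from a contraction argument. Contraction only gives $\|f_{t+1,D}\|_K\le\|f_{t,D}\|_K+\eta C_G\kappa M$, i.e.\ linear growth, and hence a robust term of order $T^{2p+3/2}\sigma^{-2p}$. You need the completed-square energy bound $\|f_{t+1,D}\|_K^2\le\|f_{t,D}\|_K^2+\eta C_GM^2$, as in the proof of Lemma \ref{lemma:lemma2}.

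The genuine gap is the cross term in Step 3, which is exactly where the theorem improves on earlier work. Take $\lambda=|D|^{-\frac{1}{2r+s}}$ and $\mathcal{N}(\lambda)\asymp\lambda^{-s}$. Then your estimate $m\,\mathcal{N}(\lambda)\lambda^{-1/2}/|D|$ equals $m|D|^{-\frac{4r-1}{4r+2s}}$. Requiring this to be at most $|D|^{-\frac{r}{2r+s}}$ gives only $m\lesssim|D|^{\frac{2r-1}{4r+2s}}$. That is the restriction of \cite{lin2018distributed}, smaller than \eqref{eq:m1} by the factor $|D|^{\frac{s}{4r+2s}}$, so the claim that it matches \eqref{eq:m1} is an arithmetic slip.

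The scaling $\mathcal{N}(\lambda)\lambda^{-1/2}$ is what any one-sided factorization produces. Examples are $\mathcal{Q}_{D_j,\lambda}\|f_{i,D_j}-f_i^{\ast}\|_K$ with $\|f_{i,D_j}-f_i^{\ast}\|_K\lesssim\lambda^{-1/2}\sqrt{\mathcal{N}(\lambda)/|D_j|}$, or the elementary bound $\mathcal{R}_{D_j,\lambda}\le\mathcal{B}_{|D_j|,\lambda}/\sqrt{\lambda}$. Applying Minsker's inequality to $\mathcal{Q}_{D_j,\lambda}$, as your last paragraph proposes, does not fix this. What that inequality buys is replacing $\mathcal{N}(\lambda)$ by $\log\mathcal{N}(\lambda)$ in the \emph{two-sided} normalized deviation $\mathcal{R}_{D_j,\lambda}$; it does not remove a $\lambda^{-1}$ factor.

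The missing step is the factorization
\[
(\lambda I+L_K)^{-\frac{1}{2}}(L_K-L_{K,D_j})(f_{i,D_j}-f_i^{\ast})=\Big[(\lambda I+L_K)^{-\frac{1}{2}}(L_K-L_{K,D_j})(\lambda I+L_K)^{-\frac{1}{2}}\Big]\,(\lambda I+L_K)^{\frac{1}{2}}(f_{i,D_j}-f_i^{\ast}).
\]
Here the function deviation is measured in the norm where it is $\lesssim\mathcal{A}_{D_j,\lambda}^2\mathcal{B}_{|D_j|,\lambda}\sim\sqrt{\mathcal{N}(\lambda)/|D_j|}$, by the analogue of Proposition \ref{prop:prop1}. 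The $\lambda^{-1/2}$ is carried by $\mathcal{R}_{D_j,\lambda}\lesssim\sqrt{(1+\log\mathcal{N}(\lambda))/(\lambda|D_j|)}$ (Lemma \ref{lemma:lemma4}). The cross term is then of order $m\sqrt{\mathcal{N}(\lambda)\log\mathcal{N}(\lambda)}\,\lambda^{-1/2}/|D|\sim m|D|^{-\frac{4r+s-1}{4r+2s}}\log|D|$, up to further logarithms. This is $\lesssim|D|^{-\frac{r}{2r+s}}$ precisely under \eqref{eq:m1}.

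You also need this $\mathcal{R}$ bound to keep $\mathcal{A}_{D_j,\lambda}=O(1)$ via the second-order expansion of Lemma \ref{lemma:lemmaa}. For $r<\frac{1+s}{2}$, \eqref{eq:m1} permits $m$ beyond $|D|^{\frac{2r-1}{2r+s}}$. In that range the elementary bound $\mathcal{A}_{D_j,\lambda}\lesssim\mathcal{B}_{|D_j|,\lambda}/\sqrt{\lambda}+1$ diverges, and both your Step 2 and the cross-term bound rely on $\mathcal{A}_{D_j,\lambda}$ being bounded. With these changes your least-squares-plus-perturbation scheme should go through.
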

Here and throughout the paper, $\lceil x\rceil$ denotes the smallest integer greater than or equal to $x$.
As shown in Theorem \ref{thm:theorem1}, by choosing $\sigma\geq |D|^{\frac{p+1+r}{(2r+s)2p}}$ and neglecting the logarithmic factor, the high-probability upper bound in (\ref{eq:prob1}) yields the optimal learning rate $\mathcal{O}(|D|^{-\frac{r}{2r+s}})$ in $ L_{\rho_{X}}^{2} $, which achieves the mini-max lower bound proved in \cite{steinwart2009optimal,caponnetto2007optimal}.
\begin{remark}
    Theorem \ref{thm:theorem1} establishes a high-probability learning rate for DKRGD under the regularity condition $r > \frac{1}{2}$ of $ f_{\rho} $.
    In the boundary case $ r=\frac{1}{2}$, where $ f_{\rho} \in \mathcal{H}_K $, the resulting bound incurs an additional logarithmic factor $ \log |D| $ compared with \eqref{eq:prob1}.
    The logarithmic factor arises at the final stage of the error aggregation. Since the proof follows the same line as that of Theorem \ref{thm:theorem1}, the details are omitted.
\end{remark}

We next turn to the generalization performance of DKRGD in expectation. Compared with the high-probability analysis, the expectation bound allows a weaker restriction on the number of local machines.
\begin{theorem}\label{thm:theorem2}
    Let $p> 0$ and $0<\eta \leq \frac{1}{\kappa^2 \max\{G'_+(0),C_G\}}$. Under Assumption \ref{assum:1}-\ref{assume:3} with $r > \frac{1}{2}$ and $0<s\leq 1$, if $\lambda=|D|^{-\frac{1}{2r+s}}$, $T=\left\lceil |D|^{\frac{1}{2r+s}} \right\rceil$, $|D_1|=\cdots=|D_m|$ and 
    \begin{equation}\label{eq:m2}
    m\leq \left\{
    \begin{array}{ll}
        |D|^{\frac{2r+\frac{s}{2}-1}{2r+s}}\left(\left(\log |D|\right)^2+1\right)^{-1},&\frac{1}{2} < r \leq 1,\\
        |D|^{\frac{2r-1}{2r+s}}\left(\left(\log |D|\right)^2+1\right)^{-1},&1 < r \leq \frac{3}{2},\\
        |D|^{\frac{2}{2r+s}}\left(\left(\log |D|\right)^2+1\right)^{-1},&\frac{3}{2}<r\leq \frac{5-s}{2},\\
        |D|^{\frac{2r+s-1}{4r+2s}}(\left(\log |D|\right)^8+1)^{-1},&r>\frac{5-s}{2},
    \end{array}    
\right.
    \end{equation}
    then
    \begin{equation}\label{eq:expec1}
        \mathbb{E}\left[\left\|\bar{f}_{T+1,D}-f_\rho\right\|_\rho^2\right] \leq \tilde{C}_2 \max\left\{|D|^{-\frac{2r}{2r+s}},\frac{|D|^{\frac{2p+2}{2r+s}}}{\sigma^{4p}}\right\},
    \end{equation}
    where $\tilde{C}_2$ is a constant independent of $|D|$ or $m$, and will be given explicitly in the proof.
\end{theorem}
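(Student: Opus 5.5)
We plan to prove Theorem~\ref{thm:theorem2} by the standard decomposition of the distributed error into a ``global'' part and a ``local-to-global'' part, with the robust correction terms $E_{i,D_j,\sigma}$ handled separately. Write $f_{t+1,D}^{\ast}:=g_T(L_{K,D})\hat f_{K,D}$ for the least-squares gradient-descent iterate on the full sample, and $f^{\ast}_{T+1,D_j}$ for its local counterparts. Then
\begin{equation*}
\bar f_{T+1,D}-f_\rho=\Big(\sum_{j=1}^m\tfrac{|D_j|}{|D|}\big(f_{T+1,D_j}-f^\ast_{T+1,D_j}\big)\Big)+\Big(\sum_{j=1}^m\tfrac{|D_j|}{|D|}f^\ast_{T+1,D_j}-f_\rho\Big).
\end{equation*}
The first bracket is the \emph{robustness error}. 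For it we use \eqref{eq:windowing2} to bound $|G'_+(0)-G'(\xi)|\le c_p\,\xi^p$. We also use a uniform bound $\|f_{t,D_j}\|_\infty\lesssim \kappa^2 M\eta t$, which follows by induction from the step-size condition $\eta\le 1/(\kappa^2\max\{G'_+(0),C_G\})$ and the contraction $\|I-\eta G'_+(0)L_{K,D}\|\le 1$. Together these give $\|E_{i,D_j,\sigma}\|_K\lesssim (\eta i)^{2p+1}\sigma^{-2p}$. Summing the second term of \eqref{eq:algorithm1} against the operators $(I-\eta G'_+(0)L_{K,D_j})^{T-i}$, and converting to the $\rho$-norm via $\|L_K^{1/2}\cdot\|_K$, should yield the $|D|^{(p+1)/(2r+s)}\sigma^{-2p}$ term. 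Squaring it gives the second entry of \eqref{eq:expec1}. This part is deterministic given bounded $y$, so it passes to expectation directly.

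For the second bracket we follow the spectral-algorithm route of \cite{guoLearningTheoryDistributed2017}. We split it as
\begin{equation*}
\mathbb{E}\Big\|\sum_j\tfrac{|D_j|}{|D|}f^\ast_{T+1,D_j}-f_\rho\Big\|_\rho^2\le \sum_j\tfrac{|D_j|^2}{|D|^2}\mathbb{E}\|f^\ast_{T+1,D_j}-f_\rho\|_\rho^2+\sum_j\tfrac{|D_j|}{|D|}\big\|\mathbb{E}f^\ast_{T+1,D_j}-f_\rho\big\|_\rho^2,
\end{equation*}
using independence of the $D_j$. The first (variance) sum is $m^{-1}$ times a local error. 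That local error is bounded by $\lambda^{2r}+\frac{\mathcal N(\lambda)}{|D_j|}$ multiplied by powers of $\mathcal Q_{D_j}:=\|(L_K+\lambda I)^{1/2}(L_{K,D_j}+\lambda I)^{-1/2}\|$, where $\lambda=1/(\eta T)$. The $m^{-1}$ factor compensates for $|D_j|=|D|/m$, so this sum is $O(|D|^{-2r/(2r+s)})$ provided the moments of $\mathcal Q_{D_j}$ stay bounded. The second (bias) sum is the crux, and we bound it through representation \eqref{eq:algorithm12}. The aim is to write $\mathbb{E}f^\ast_{T+1,D_j}-f_\rho$ as the population iterate's bias $(I-\eta G'_+(0)L_K)^T f_\rho=O(\lambda^r)$ plus terms of the form $\sum_i\eta(I-\eta G'_+(0)L_K)^{T-i}\mathbb{E}[(L_K-L_{K,D_j})(f^\ast_{i,D_j}-f_i)]$, where $f_i$ is the population iterate.

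The main obstacle is this product-of-operators bias term, because it is what produces the case-dependent restriction \eqref{eq:m2}. We will first bound $\mathbb{E}\|(L_K+\lambda I)^{-1/2}(L_K-L_{K,D_j})\|^2\lesssim \mathcal N(\lambda)/|D_j|$ and estimate the moments of $\mathcal Q_{D_j}$ via the Minsker-type concentration \cite{minsker2017some}. Then we pair the two first-order differences by Cauchy--Schwarz, so the bias picks up a factor $\mathcal N(\lambda)/|D_j|$ (or $1/(\lambda|D_j|)$) times $\lambda^{\min(r,\cdot)}$. Exploiting the GD filter qualification, $\|(I-\eta G'_+(0)L_K)^{T-i}L_K^{a}\|\lesssim (\eta(T-i))^{-a}$, gives different exponents depending on whether $r\le 1$, $1<r\le 3/2$, or $r$ is larger. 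Requiring the result to be $\le |D|^{-2r/(2r+s)}$ translates exactly into the four regimes for $m$. In the last regime, the bias is instead controlled by a crude high-probability-type bound, which reproduces the condition of Theorem~\ref{thm:theorem1}. We expect the summation over $i$ to generate the $\log|D|$ factors, and those factors are absorbed by the denominators in \eqref{eq:m2}. Finally, we collect the constants into $\tilde C_2$.
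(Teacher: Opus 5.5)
There are two genuine gaps. Your skeleton (the split \eqref{eq:expectationeq}, with the variance controlled through moments of $\mathcal{A}_{D_j,\lambda}$, which is your $\mathcal{Q}_{D_j}$) matches the paper.

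\textbf{The bias step cannot produce \eqref{eq:m2}.} Bounding $\mathbb{E}[(L_K-L_{K,D_j})(f^\ast_{i,D_j}-f_i)]$ by Cauchy--Schwarz inside the expectation is essentially the term $\mathcal{K}_{D_j,\lambda}$ of Proposition \ref{prop:prop2}. Its leading part is $\log T\,\mathcal{R}_{D_j,\lambda}\mathcal{A}^2_{D_j,\lambda}(\mathcal{P}_{D_j,\lambda}+\mathcal{Q}_{D_j,\lambda}\|f_\rho\|_K)\approx m|D|^{-\frac{4r+s-1}{4r+2s}}(\log|D|)^2$. The first-order factor carries no power of $\lambda$, so there is no $\lambda^{\min(r,\cdot)}$ gain. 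The noise part has zero conditional mean given the inputs, but that cancellation is lost once you take norms. Requiring this term to be $O(|D|^{-\frac{r}{2r+s}})$ gives at best $m\lesssim|D|^{\frac{2r+s-1}{4r+2s}}$, the condition of Theorem \ref{thm:theorem1}. That is strictly more restrictive than the first three cases of \eqref{eq:m2}; for $r\le1$ it is smaller by a factor $|D|^{\frac{r-1/2}{2r+s}}$. Filter qualification cannot create breakpoints at $r=1,\frac32$ either, since gradient descent has unlimited qualification.

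The missing idea is to condition on the inputs first: $\|\mathbb{E}f_{T+1,D_j}-f_\rho\|_\rho\le\mathbb{E}\|\mathbb{E}[f_{T+1,D_j}\,|\,D_j(x)]-f_\rho\|_\rho$. Here the conditional mean minus $f_\rho$ equals $-(I-\eta G'_+(0)L_{K,D_j})^Tf_\rho$ plus a robust term, so the bias contains no product of fluctuations. The source condition is then moved onto $L_{K,D_j}$ via \eqref{eq:property1}, which holds only for exponents in $[0,1]$:
\begin{itemize}
\item for $r\le1$ this gives $\lambda^r\mathcal{A}^{2r}_{D_j,\lambda}$, bounded by Lemma \ref{lemma:lemmaa} when $m\lesssim|D|^{\frac{2r+s/2-1}{2r+s}}$;
\item for $1<r\le\frac32$ it gives $\lambda^r\tilde{\mathcal{A}}^{r}_{D_j,\lambda}$, bounded when $\mathcal{B}_{|D_j|,\lambda}\lesssim\sqrt{\lambda}$, i.e. $m\lesssim|D|^{\frac{2r-1}{2r+s}}$;
\item for $\frac32<r\le\frac{5-s}{2}$, \eqref{eq:opdecom1} adds a remainder $\lambda\mathcal{B}_{|D_j|,\lambda}$, which needs $m\lesssim|D|^{\frac{2}{2r+s}}$;
\item for $r>\frac{5-s}{2}$ the paper simply integrates the tail of \eqref{eq:prob1}.
\end{itemize}
That is where the four cases come from.

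\textbf{The robustness bracket does not give the claimed term.} From $\|f_{t,D_j}\|_\infty\lesssim\kappa^2M\eta t$ you only get $\|E_{i,D_j,\sigma}\|_K\lesssim i^{2p+1}\sigma^{-2p}$. Even with the best summation (Lemma \ref{lemma:lemma6}) this yields $T^{2p+3/2}\sigma^{-2p}$, i.e. $|D|^{\frac{4p+3}{2r+s}}\sigma^{-4p}$ after squaring, instead of $|D|^{\frac{2p+2}{2r+s}}\sigma^{-4p}$. You need the $\sqrt{t}$ growth $\|f_{t,D}\|_K\le M\sqrt{\eta C_G(t-1)}$ from the completing-the-square argument (as in the proof of Lemma \ref{lemma:lemma2}). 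That gives $\|E_{t,D,\sigma}\|_K\le\kappa c_p(2M)^{2p+1}t^{p+\frac12}\sigma^{-2p}$.

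This part is also not deterministic. Lemma \ref{lemma:lemma6} controls sums of $\|(\lambda I+L_{K,D_j})^{1/2}(I-\eta G'_+(0)L_{K,D_j})^{T-i}\|$, and replacing $L_K^{1/2}$ by $(\lambda I+L_{K,D_j})^{1/2}$ costs the random factor $\mathcal{A}_{D_j,\lambda}$. Using the trivial bound $\kappa$ instead loses a further $T^{1/2}$. So moments of $\mathcal{A}_{D_j,\lambda}$ are needed here too, as in \eqref{eq:ftd-ft}.
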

Analogously, Theorem \ref{thm:theorem2} establishes that with the choice of $\sigma\geq |D|^{\frac{p+1+r}{(2r+s)2p}}$, one can obtain the optimal learning rate $\mathcal{O}(|D|^{-\frac{2r}{2r+s}})$ from the expected error bound in (\ref{eq:expec1}).
As a comparison between Theorem \ref{thm:theorem1} and Theorem \ref{thm:theorem2}, we can see that the upper bound of \eqref{eq:m1} is tighter than that of \eqref{eq:m2} for all possible values of $r$, which shows a stricter restriction on $ m $ to guarantee the optimal learning rate with high probability than that in expectation.
Note that when the regularity of $f_\rho$ exceeds a certain level, the restriction on $m$ in Theorem \ref{thm:theorem1} and Theorem \ref{thm:theorem2} reduce to the same order $|D|^{\frac{2r+s-1}{4r+2s}}$, differing only by a logarithmic factor.
\begin{remark}  
    An logarithmic factor also arises in Theorem \ref{thm:theorem2} for the expected error bound.
    At the critical level $ r=\frac{1}{2} $, the resulting bound incurs an additional $ \log^{2} |D| $ factor compared with \eqref{eq:expec1}. This quadratic logarithmic factor results from converting the high-probability estimate into an expectation bound: the logarithmic factor in the high-probability estimate appears squared when moving from tail probability to expectation bounds.
    The derivation closely parallels that of Theorem \ref{thm:theorem2} and is omitted for brevity.
\end{remark}

We finally consider DKRGD equipped with the communication strategy introduced in Subsection \ref{section:section20}. The following theorem quantifies how communication can enlarge the admissible number of local machines while retaining the optimal high-probability learning rate. 
\begin{theorem}\label{thm:theorem3}
    Let $0<\delta<1$, $p> 0$, $0<\eta < \frac{1}{\kappa^2 \max\{G'_+(0),C_G\}}$ and $l \in \mathbb{N}_{+}$. Under Assumption \ref{assum:1}-\ref{assume:3} with $r> \frac{1}{2}$ and $0<s\leq 1$, if $\lambda=|D|^{-\frac{1}{2r+s}}$, $T=\left\lceil |D|^{\frac{1}{2r+s}} \right\rceil$, $|D_1|=\cdots=|D_m|$ and
    \begin{equation}\label{eq:m3}
        m\leq \min\left\{|D|^{\frac{2r-1}{2r+s}},|D|^{\frac{(2r-1)(l+1)+s}{(2r+s)(l+2)}}\right\}\left(\left(\log |D|\right)^{7}+1\right)^{-1},
    \end{equation}
    then with confidence at least $1-\delta$, there holds
   \begin{equation}\label{eq:comm1}
    \left\|\bar{f}_{T+1,D}^l-f_\rho\right\|_\rho\leq \tilde{C}_3\max\left\{|D|^{-\frac{r}{2r+s}},\frac{|D|^{\frac{p+1}{2r+s}}}{\sigma^{2p}}\right\} \left(\log \frac{96}{\delta}\right)^{6+3l},
    \end{equation}
    where $\tilde{C}_3$ is a constant independent of $|D|$ or $m$, and will be given explicitly in the proof.
\end{theorem}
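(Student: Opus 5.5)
We prove Theorem \ref{thm:theorem3} by reducing the communication estimator to the global (non-distributed) robust gradient descent estimator $f_{T+1,D}$ plus a remainder that contracts geometrically in $l$. The first step is an algebraic identity. Write $h:=f_{T+1,D}$ for the estimator obtained by running \eqref{eq:algorithm} on the whole sample $D$. By \eqref{eq:newton12} applied to $D$, $h$ is a fixed point of $G_{T+1,D}$, i.e. $G_{T+1,D}(h)=0$. Since $G_{T+1,D}$ is affine in $f$ with linear part $g_T^{-1}(L_{K,D})$, \eqref{eq:ftl11} gives
\begin{equation*}
\bar f^{l}_{T+1,D}-h=\Big(I-\sum_{j=1}^m\tfrac{|D_j|}{|D|}g_T(L_{K,D_j})g_T^{-1}(L_{K,D})\Big)\big(\bar f^{l-1}_{T+1,D}-h\big).
\end{equation*}
Iterating this $l$ times yields
\begin{equation*}
\bar f^{l}_{T+1,D}-h=\Big[\sum_{j}\tfrac{|D_j|}{|D|}\big(g_T(L_{K,D})-g_T(L_{K,D_j})\big)g_T^{-1}(L_{K,D})\Big]^{l}\big(\bar f^{0}_{T+1,D}-h\big).
\end{equation*}
Hence
\begin{equation*}
\|\bar f^l_{T+1,D}-f_\rho\|_\rho\le \|h-f_\rho\|_\rho+\|L_K^{1/2}\,\mathcal{A}^l(\bar f^0-h)\|_K ,
\end{equation*}
where $\mathcal{A}$ denotes the bracketed operator. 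The term $\|h-f_\rho\|_\rho$ is the case $m=1$ of Theorem \ref{thm:theorem1}, which gives the stated rate including the $\sigma$-term. The remaining work is to bound the second term.

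The second step handles the remainder by inserting $(\lambda I+L_K)^{\pm1/2}$ factors. We write
\begin{equation*}
L_K^{1/2}\mathcal{A}^l(\bar f^0-h)=L_K^{1/2}(\lambda I+L_K)^{-1/2}\prod\big[(\lambda I+L_K)^{1/2}\mathcal{A}(\lambda I+L_K)^{-1/2}\big]\,(\lambda I+L_K)^{1/2}(\bar f^0-h).
\end{equation*}
For each factor we need a spectral-calculus bound on $g_T$ with $\lambda\sim 1/(\eta G'_+(0)T)$. Such a bound has the form $\|(\lambda I+L_{K,D})^{1/2}g_T^{-1}(L_{K,D})(\lambda I+L_{K,D})^{1/2}\|\lesssim 1$, using $g_T(x)\asymp \min\{T,1/x\}$ and the invertibility granted by $\eta<1/(\kappa^2\max\{G'_+(0),C_G\})$. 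We combine it with a Lipschitz-type perturbation estimate
\begin{equation*}
\|(\lambda I+L_K)^{-1/2}\big(g_T(L_{K,D})-g_T(L_{K,D_j})\big)(\lambda I+L_K)^{-1/2}\|\lesssim \|(\lambda I+L_K)^{-1/2}(L_{K,D}-L_{K,D_j})(\lambda I+L_K)^{-1/2}\|.
\end{equation*}
This estimate follows from the telescoping identity for $\sum_i(1-\eta G'_+(0)a)^{T-i}-(1-\eta G'_+(0)b)^{T-i}$ together with the operator-product bounds developed in Section \ref{section:section4}. It converts all quantities to the norms $Q_{D_j}:=\|(\lambda I+L_K)^{-1/2}(L_K-L_{K,D_j})(\lambda I+L_K)^{-1/2}\|$ and $\|(\lambda I+L_K)^{1/2}(\lambda I+L_{K,D_j})^{-1/2}\|$.

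The third step applies the concentration estimates. Using the Minsker-type concentration \cite{minsker2017some} invoked for Theorem \ref{thm:theorem1}, with confidence $1-\delta$ and uniformly over $j$ after a union bound, we have
\begin{equation*}
Q_{D_j}\lesssim \mathcal{B}_{|D|/m,\lambda}\log\tfrac{1}{\delta},\qquad \mathcal{B}_{n,\lambda}=\tfrac{1}{n\lambda}+\sqrt{\tfrac{\mathcal N(\lambda)}{n\lambda}}\sim \sqrt{m|D|^{-\frac{2r-1}{2r+s}}}\,|D|^{-\frac{r-1/2}{2r+s}}\cdots .
\end{equation*}
The first constraint in \eqref{eq:m3}, $m\le |D|^{\frac{2r-1}{2r+s}}$, ensures that $(\lambda I+L_K)^{1/2}(\lambda I+L_{K,D_j})^{-1/2}$ is bounded by a constant. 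Each factor of $\mathcal A$ is then $\lesssim \mathcal{B}_{|D|/m,\lambda}\log^3(1/\delta)$, which accounts for the $3l$ in the exponent of the logarithm. The initial difference satisfies $\|(\lambda I+L_K)^{1/2}(\bar f^0-h)\|_K\lesssim$ (the distributed error from the proof of Theorem \ref{thm:theorem1}, without restriction on $m$), which is of order $|D|^{-\frac{r}{2r+s}}\big(1+\sqrt{m}\,\mathcal B\cdots\big)\log^6$. Multiplying gives a quantity like $|D|^{-\frac{r}{2r+s}}(m\,|D|^{-\frac{2r-1}{2r+s}})^{(l+1)/2}\cdot(\ldots)$. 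Requiring this to be at most $|D|^{-\frac{r}{2r+s}}$ is expected to produce exactly the second constraint $m\le |D|^{\frac{(2r-1)(l+1)+s}{(2r+s)(l+2)}}$ in \eqref{eq:m3}; matching the exponents precisely is part of this step.

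The main obstacle is the initial term $\bar f^0-h$. It contains the nonlinear robust errors $E_{i,D_j,\sigma}-E_{i,D,\sigma}$, which depend on the iterates themselves. These must be bounded in the $(\lambda I+L_K)^{1/2}$-weighted norm by $|D|^{\frac{p+1}{2r+s}}\sigma^{-2p}$ using \eqref{eq:windowing2} and a uniform bound $\|f_{t,D_j}\|_\infty\lesssim\kappa\sqrt{t\eta}M$. This is done by induction exactly as in Theorem \ref{thm:theorem1}. The difficulty is to keep the extra factor $\mathcal{A}^l$ from degrading that term; for this we use $\|\mathcal{A}\|\lesssim 1$ in the weighted norm.
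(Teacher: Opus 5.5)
Your architecture is the paper's. The contraction identity is \eqref{eq:tt} iterated, and $\|f_{T+1,D}-f_\rho\|_\rho$ is the case $m=1$. The difference $g_T(L_{K,D_j})-g_T(L_{K,D})$ is telescoped and combined with $\|g_T^{-1}(L_{K,D})g_{k-1}(L_{K,D})\|\le 1$. The initial gap $\bar f^0_{T+1,D}-f_{T+1,D}$ is split through $f_{T+1}$ and bounded by Propositions \ref{prop:prop1} and \ref{prop:prop2}.

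Conjugating by $(\lambda I+L_K)^{\pm 1/2}$ is a sound idea, and more careful than the paper. The paper passes from \eqref{eq:tt} to \eqref{eq:communication} via $\|Xf\|_\rho\le\|X\|\,\|f\|_\rho$, using the $\mathcal{H}_K$-operator norm of an $X$ that does not commute with $L_K$. However, both estimates you propose have the weights reversed.

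\begin{itemize}
\item Since $g_T(x)\asymp(\lambda+x)^{-1}$, the useful bound is $\|(\lambda I+L_{K,D})^{-1/2}g_T^{-1}(L_{K,D})(\lambda I+L_{K,D})^{-1/2}\|\lesssim 1$. Yours, with exponent $+\frac12$, is true but useless.
\item $g_T(L_{K,D})-g_T(L_{K,D_j})$ is comparable to $\mathcal{R}$-type norms only when sandwiched between $(\lambda I+L_K)^{+1/2}$ factors. With $(\lambda I+L_K)^{-1/2}$ on both sides, your inequality cannot hold with a $T$-independent constant. Already in one dimension the ratio of the two sides is $|g_T(a_1)-g_T(a_2)|/|a_1-a_2|$, which tends to $|g_T'(0)|=(\eta G'_+(0))^2T(T-1)/2$ as $a_1,a_2\to 0$.
\item Composed as written, your two bounds control $(\lambda I+L_K)^{-1/2}\mathcal{A}(\lambda I+L_K)^{1/2}$, not the operator $(\lambda I+L_K)^{1/2}\mathcal{A}(\lambda I+L_K)^{-1/2}$ in your product.
\end{itemize}

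With the weights corrected, use the telescoping formula, Lemma \ref{lemma:lemma6} and the factors $\mathcal{A}_{D,\lambda},\mathcal{A}_{D_j,\lambda}$. Each round is then bounded by bounded factors times $(\mathcal{R}_{D,\lambda}+\mathcal{R}_{D_j,\lambda})\log T$. Your crude bound $Q_{D_j}\lesssim\mathcal{B}_{|D|/m,\lambda}$ turns this into the paper's factor, of order $\sqrt{m}\,|D|^{-\frac{2r-1}{2(2r+s)}}$ up to logarithms. The first constraint in \eqref{eq:m3} is exactly what makes this factor $\lesssim 1$.

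The genuine gap is the step that should produce the second constraint in \eqref{eq:m3}, which is the real content of the theorem. You leave it as ``expected'', and the exponents you write cannot give it. In your normalization $\mathcal{B}_{|D|/m,\lambda}\asymp\sqrt{m|D|^{-\frac{2r-1}{2r+s}}}$; the extra factor $|D|^{-\frac{r-1/2}{2r+s}}$ in your display is spurious. Moreover, $(m|D|^{-\frac{2r-1}{2r+s}})^{(l+1)/2}\le 1$ merely restates the first constraint.

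What is missing is the correct size of the initial gap. Its constraint-determining part is the $\mathcal{K}_{D_j,\lambda}$ term of Proposition \ref{prop:prop2}, namely $\max_j\mathcal{R}_{|D_j|,\lambda}\mathcal{B}_{|D_j|,\lambda}\log T\asymp m\,|D|^{-\frac{2r+s-1}{2(2r+s)}}\,|D|^{-\frac{r}{2r+s}}$ up to logarithms (with the paper's $\mathcal{B}$). This is the same term that forces \eqref{eq:m1} in Theorem \ref{thm:theorem1}. Multiplying by $l$ per-round factors and requiring the result to be $\lesssim|D|^{-\frac{r}{2r+s}}$ gives
\[
m^{\frac{l+2}{2}}\,|D|^{-\frac{(2r-1)l+(2r+s-1)}{2(2r+s)}}\lesssim 1 \iff m\lesssim |D|^{\frac{(2r-1)(l+1)+s}{(2r+s)(l+2)}} .
\]
The $+s$ comes from using the Minsker-type bound of Lemma \ref{lemma:lemma4} for $\mathcal{R}_{D_j,\lambda}$ inside the initial gap. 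With your crude bound there, one only gets $m\lesssim|D|^{\frac{(2r-1)(l+1)}{(2r+s)(l+2)}}$.

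Finally, the robust-error contributions $E_{i,D_j,\sigma}-E_{i,\sigma}$ and $E_{i,D,\sigma}-E_{i,\sigma}$ are not the main obstacle. Lemma \ref{lemma:lemma5} controls them uniformly, they only produce the $T^{p+1}\sigma^{-2p}$ term, and that term is harmless once the per-round factor is $\lesssim 1$. The real obstacle is the $\mathcal{R}_{D_j,\lambda}$-term.
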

It should be pointed that the optimal learning rate $\mathcal{O}(|D|^{-\frac{r}{2r+s}})$ can be also obtained by setting $\sigma\geq |D|^{\frac{p+1+r}{(2r+s)2p}}$ in \eqref{eq:comm1}.
By comparing (\ref{eq:m1}) with (\ref{eq:m3}), the proposed communication strategy relaxes the restriction on $ m $ from order
$|D|^{\frac{2r+s-1}{4r+2s}}$ to order $\min\left\{|D|^{\frac{2r-1}{2r+s}},|D|^{\frac{(2r-1)(l+1)+s}{(2r+s)(l+2)}}\right\}$ up to a logarithmic factor.
Further, the upper bound of $m$ in (\ref{eq:m3}) is increasing with respect to the number of communications $l$, and it tends to 
order $|D|^{\frac{2r-1}{2r+s}}$ up to a logarithmic factor as $l\to \infty$, which is significantly larger than the upper bound of $m$ in (\ref{eq:m1}) and coincides with that in \cite{guoLearningTheoryDistributed2017,mucke2018parallelizing,lin2017distributed,lin2018distributed}.
Although this limit is not better than that in Theorem \ref{thm:theorem2} for every value of $ r $, it should be highlighted that the optimal learning rate in Theorem \ref{thm:theorem3} is achieved with high probability, which is stronger than that in expectation in Theorem \ref{thm:theorem2}.
To conclude, Theorem \ref{thm:theorem3} conducts the generalization analysis with high probability for DKRGD with communication strategy, which significantly relaxes the restriction on the number of local machines to ensure optimal learning rates, and thus enhances the practical applicability of DKRGD in distributed learning.

\section{Related Work}\label{section:section3}

As a special instance of spectral algorithms, gradient descent (GD) has received extensive attention in the literature. A comprehensive overview of gradient-based optimization methods is provided in \cite{ruder2016overview}, covering widely used variants of gradient descent, such as mini-batch and stochastic gradient descent, adaptive optimization methods including Adam and AdaMax, and distributed stochastic optimization frameworks such as Hogwild! and TensorFlow. In the context of nonparametric learning, the statistical properties of gradient descent have been rigorously investigated in \cite{bauer2007regularization,caponnetto2010cross,dicker2017kernel}, where optimal learning rates were established under suitable regularity and capacity conditions. More recently, \cite{lin2018distributed} studied kernel-based gradient descent in a distributed setting and established optimal learning rates in probability. An important observation therein is that the intrinsic regularization properties of gradient descent can be exploited to alleviate the saturation phenomenon that typically arises in distributed spectral algorithms.

Despite these favorable theoretical properties, standard gradient descent based on the squared loss is generally sensitive to outliers and heavy-tailed noise, and its performance may deteriorate substantially in such settings. Consequently, several approaches have been developed to improve its robustness. Early-stopping strategies, which serve as an implicit regularization mechanism for controlling overfitting to noisy observations, have been extensively studied in \cite{yao2007early,raskutti2014early}. Alternatively, robust loss functions can be employed in place of the squared loss. In particular, \cite{guo2018gradient} investigated gradient descent equipped with a broad class of robust losses, while \cite{hu2020kernel} analyzed the convergence behavior of kernel gradient descent under the maximum correntropy criterion.

Motivated by these developments in robust gradient descent and distributed learning
\cite{guo2018gradient,lin2018distributed,zhou2020distributed,HongGuo2025RobustKernelBased,GuoChristmannShi2024Optimality},
we investigate the generalization performance of DKRGD, a problem that has received comparatively little theoretical attention to date. A particularly relevant starting point is \cite{guo2018gradient}, which demonstrates that robust gradient descent, combined with an appropriate early-stopping rule and a suitable choice of the scale parameter $\sigma$, achieves optimal convergence rates in $L^2_{\rho_X}$. More precisely, under Assumptions \ref{assum:1}--\ref{assum:2} and an eigenvalue decay condition slightly stronger than Assumption \ref{assume:3}, the following estimate holds with confidence at least $1-\delta$:
$$
\left\|f_{T+1}-f_\rho\right\|_\rho\leq \mathcal{O}\! \left( \max\left\{|D|^{-\frac{r}{2r+s}},\frac{|D|^{\frac{p+1}{2r+s}}}{\sigma^{2p}}\right\}\left[\log (6/ \delta) \right]^3 \right).
$$
Consequently, choosing $\sigma\geq |D|^{\frac{p+1+r}{(2r+s)2p}}$ yields the optimal convergence rate $$\mathcal{O}(|D|^{-\frac{r}{2r+s}})$$ in the $L^2_{\rho_X}$-norm.
Under Assumptions \ref{assum:2}--\ref{assume:3} and a condition implied by Assumption \ref{assum:1}, optimal learning rate in probability for robust gradient descent algorithms was established in \cite{lin2018distributed}:  if $r > \frac{1}{2}$ and $ m\leq |D|^\frac{r-\frac{1}{2}}{2r+s}/[(\log |D|)^5+1] $, then with confidence at least $1-\delta$ there holds
$$
\left\|\bar{f}_{T+1,D}-f_\rho\right\|_\rho \leq \mathcal{O} \!\left( |D|^{-\frac{r}{2r+s}}\left[\log (12 / \delta) \right]^4 \right).
$$
This result shows that optimal learning rates can be retained in the distributed setting, provided that the number of local machines is sufficiently controlled.

In contrast, by deriving sharper estimates for certain operator products through the concentration inequality developed in \cite{minsker2017some}, we establish optimal learning rates in probability for DKRGD while substantially relaxing the restriction on the number of local machines. More specifically, the admissible number $m$ of local machines can be enlarged to $ |D|^{\frac{2r+s-1}{4r+2s}}/[(\log |D|)^5+1]^{2}$.
Moreover, the result of \cite{lin2018distributed} permits a nontrivial number of local machines only when $r>\frac12$, and therefore does not cover the boundary case $r=\frac12$. This limitation is largely overcome by Theorem \ref{thm:theorem1}, although an additional logarithmic factor $\log |D|$ appears in the resulting learning rate. 

As for the optimal learning rates in expectation, we further loosen the restriction on $m$.
Compared with \cite{guoLearningTheoryDistributed2017}, the maximum number $m$ is relaxed slightly from $ |D|^{\frac{2r-1}{2r+s}} $ to $ |D|^{\frac{2r+\frac{s}{2}-1}{2r+s}} [ \left(\log |D|\right)^2+1 ]^{-1} $ when $\frac{1}{2} \leq r\leq 1$. 
In addition, the saturation phenomenon inherent to regularized least squares \cite{guoLearningTheoryDistributed2017} is effectively mitigated by leveraging the results in Theorem \ref{thm:theorem1} and the standard tail-expectation formula $\mathbb{E}[\xi]=\int_{0}^{\infty} \mathrm{Prob}(\xi>t) \mathrm{d}t$.  
Specifically, while maintaining optimal learning rates in expectation, the admissible upper bound on the number of local machines can scale up with increasing $r$ when $r>(5-s)/2$.

In distributed learning, numerous studies have focused on relaxing the restriction on the number of local machines by utilizing unlabeled data \cite{caponnetto2010cross,lin2018distributed} or introducing communication strategies \cite{zhou2020distributed,bellet2015distributed,yin2021distributed}. Such approaches can compensate for the statistical loss caused by data partitioning and thereby improve the practical applicability of distributed algorithms.
For nonparametric regression, the most relevant work is \cite{zhou2020distributed}, in which a communication strategy based on the Newton-Raphson iteration is proposed for distributed kernel ridge regression. Inspired by this approach, we propose a communication-enhanced version of DKRGD that further relaxes the restriction on the number of local machines without requiring additional unlabeled data. Although our condition on the number of local machines for attaining the optimal learning rate is slightly more restrictive than that obtained in \cite{zhou2020distributed}, our approach offers two complementary advantages. First, owing to the intrinsic regularization properties of gradient descent, our theoretical results extend beyond the usual range $1/2\leq r\leq1$. More precisely, while the results in \cite{zhou2020distributed} are restricted to $1/2\leq r\leq1$, Theorem \ref{thm:theorem3} applies to the entire regime
$r>\frac12.$ Second, the use of a robust loss makes the resulting distributed algorithm more resistant to outliers and heavy-tailed noise. It would also be interesting to investigate whether the proposed communication strategy can be adapted to other distributed learning frameworks, such as deep distributed convolutional neural networks \cite{Zhou2018DeepDistributedUniversality} and distributed gradient descent functional learning \cite{Yu2024a}.

\section{Error decomposition}\label{section:section4}
In this section, we conduct the error analysis of DKRGD using the integral operator approach \cite{guoLearningTheoryDistributed2017,lin2018distributed,zhou2020distributed,Smale2007,Guo2019a}. We first introduce several preliminary lemmas and propositions that will be repeatedly used in the subsequent analysis. We then derive two distinct error decompositions for DKRGD, which serve as the basis for establishing generalization bounds both with high probability and in expectation, respectively. Finally, we develop a corresponding error decomposition for DKRGD with communication and use it to derive a high-probability generalization bound.

\subsection{Preliminaries}
As the intermediate function in the error decomposition, we first introduce the data-free sequence $\{f_t\}_{t \geq 1}$ which can be regarded as the limit of $\{f_{t,D}\}_{t \geq 1}$ when the sample size $|D|$ goes to infinity, and then we will derive two representations for $f_{t+1}$ by using the integral operator approach.
Specifically, it is defined as follows,
\begin{equation}\label{eq:ft1}
    f_{t+1}=f_t-\eta\int_{\mathcal{Z}}G'(\xi_{t,\sigma}(z))(f_t(x)-y)K_x\mathrm{d}\rho,
\end{equation}
where $f_1 = 0$, $1\leq t\leq T$ and $\xi_{t,\sigma}(z)=\frac{(y-f_t(x))^2}{\sigma^2}$, $(x,y) \in D$.
For the simplicity of subsequent calculations, we can further rewrite $f_{t+1}$ as
\begin{equation*}
    \begin{aligned}
    f_{t+1}&=f_t- \eta G'_+(0) L_K(f_t-f_\rho)+\eta \int_{\mathcal{Z}}(G'_+(0)-G'(\xi_{t,\sigma}(z)))(f_t(x)-y)K_x\mathrm{d}\rho\\
    &=\left(I-\eta G'_+(0) L_{K}\right)f_t +\eta G'_+(0) L_K f_\rho+\eta E_{t,\sigma},
    \end{aligned}
\end{equation*}
and
\begin{equation*}
    \begin{aligned}
    f_{t+1}&=\left(I-\eta G'_+(0) L_{K,D}\right)f_t+\eta G'_+(0) (L_{K,D}-L_K)f_t+\eta G'_+(0) L_K f_\rho+\eta E_{t,\sigma},
    \end{aligned}
\end{equation*}
where  $E_{t,\sigma}=\int_{\mathcal{Z}}(G'_+(0)-G'(\xi_{t,\sigma}(z)))(f_t(x)-y)K_x\mathrm{d}\rho$.

Analogous to Equation (\ref{eq:algorithm1}) and (\ref{eq:algorithm12}), we can employ mathematical induction to derive the following two representations for $f_{t+1}$ as
\begin{equation}\label{eq:datafree1}
    \begin{aligned}
    f_{t+1}&=\sum_{i=1}^t \eta G'_+(0) \left(I-\eta G'_+(0) L_{K}\right)^{t-i}L_K f_\rho + \sum_{i=1}^t \eta \left(I-\eta G'_+(0) L_{K}\right)^{t-i}E_{i,\sigma},
    \end{aligned}
\end{equation}
and
\begin{equation}\label{eq:datafree2}
    \begin{aligned}
    f_{t+1}&=\sum_{i=1}^t \eta G'_+(0) \left(I-\eta G'_+(0) L_{K,D}\right)^{t-i}L_K f_\rho + \sum_{i=1}^t \eta G'_+(0) \left(I-\eta G'_+(0) L_{K,D}\right)^{t-i} (L_{K,D}-L_K) f_i\\
    & \qquad+ \sum_{i=1}^t \eta \left(I-\eta G'_+(0) L_{K,D}\right)^{t-i}E_{i,\sigma}.
    \end{aligned}
\end{equation}
By combining (\ref{eq:algorithm1}) and (\ref{eq:datafree2}), it is easy to obtain the representation for the difference between $f_{T+1,D}$ and $f_{T+1}$ as
\begin{equation}\label{eq:representation}
    f_{T+1,D}-f_{T+1}= \sum_{i=1}^T \eta G'_+(0) \left(I-\eta G'_+(0) L_{K,D}\right)^{T-i}\left\{\hat{f}_{K,D}-L_K f_\rho+\frac{1}{G'_+(0)}(E_{i,D,\sigma}-E_{i,\sigma})+(L_K-L_{K,D})f_i\right\}.
\end{equation}
Moreover, we can combine (\ref{eq:algorithm12}) and (\ref{eq:datafree1}) to yield another representation for $f_{T+1,D} − f_{T+1}$ as
\begin{equation}\label{eq:representation2}
    f_{T+1,D}-f_{T+1}= \sum_{i=1}^T \eta G'_+(0) \left(I-\eta G'_+(0) L_{K}\right)^{T-i}\left\{\hat{f}_{K,D}-L_K f_\rho+\frac{1}{G'_+(0)}(E_{i,D,\sigma}-E_{i,\sigma})+(L_K-L_{K,D})f_{i,D}\right\}.
\end{equation}

Building on the operator representations introduced above, we now establish key estimates for operator products and the data-free iterative sequence $\{f_t\}_{t \geq 1}$, which form the foundation of our subsequent error analysis.
Throughout this paper, we simply denote the operator norm as $\|\cdot\|$.
The following two lemmas are obtained by adapting the proof in \cite[Prop. 4.1, Thm. 1(i)]{guo2018gradient} to accommodate our constant step-size $ \eta $.
For brevity, detailed proofs are deferred to the Appendix.
\begin{lemma}\label{lemma:lemma2}
    Define $\{f_t\}_{t\geq 1}$ as (\ref{eq:ft1}) and $ f_1=0 $. If $0< \eta \leq \frac{1}{\kappa^2 \max\{G'_+(0),C_G\}}$, then
    \begin{equation}\label{eq:lemma2}
        \|f_{t}\|_K\leq M\sqrt{\eta C_G (t-1)}\leq \frac{M}{\kappa}(t-1)^\frac{1}{2},\quad \forall\ t \geq 1.
    \end{equation}
\end{lemma}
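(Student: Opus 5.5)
We prove Lemma \ref{lemma:lemma2} by a standard energy (descent) argument for gradient descent on the population robust risk
\[
\mathcal{E}(f)=\int_{\mathcal{Z}}\frac{\sigma^2}{2}\, G\!\left(\frac{(f(x)-y)^2}{\sigma^2}\right)\mathrm{d}\rho ,
\]
whose gradient in $\mathcal{H}_K$ is exactly $\int_{\mathcal{Z}} G'(\xi_{t,\sigma}(z))(f(x)-y)K_x\,\mathrm{d}\rho$. Iteration \eqref{eq:ft1} is therefore $f_{t+1}=f_t-\eta\nabla\mathcal{E}(f_t)$. Since $\mathcal{E}$ need not be convex (Cauchy and Welsch losses), we do not use smoothness-based descent in the usual form. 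Instead, following \cite[Prop. 4.1]{guo2018gradient}, we work directly with the increments.

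\textbf{Step 1 (triangle inequality).} Write
\[
f_t=-\eta\sum_{k=1}^{t-1}g_k,\qquad g_k=\int_{\mathcal{Z}} G'(\xi_{k,\sigma})(f_k(x)-y)K_x\,\mathrm{d}\rho .
\]
By Cauchy--Schwarz over the $t-1$ terms,
\[
\|f_t\|_K\le \eta\sqrt{t-1}\,\Big(\sum_{k=1}^{t-1}\|g_k\|_K^2\Big)^{1/2}.
\]
It therefore suffices to show that $\eta\sum_{k}\|g_k\|_K^2\le C_G M^2$, i.e., a telescoping bound
\[
\eta\|g_k\|_K^2\le \mathcal{E}(f_k)-\mathcal{E}(f_{k+1})
\]
up to constants, combined with
\[
\mathcal{E}(f_1)=\mathcal{E}(0)\le \tfrac{\sigma^2}{2}\,\mathbb{E}\big[G(y^2/\sigma^2)\big]\le \tfrac{C_G}{2}M^2,
\]
which uses $G(0)=0$ (WLOG), $|G'|\le C_G$, $G(s)\le C_G s$, and $|y|\le M$. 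Then $\mathcal{E}\ge 0$, since $G'>0$ gives $G\ge G(0)$.

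\textbf{Step 2 (one-step decrease).} The key inequality is
\[
\mathcal{E}(f_{k+1})\le \mathcal{E}(f_k)-\tfrac{\eta}{2}\|g_k\|_K^2 .
\]
For this we use the pointwise inequality for $\phi(u)=\tfrac{\sigma^2}{2}G(u^2/\sigma^2)$, whose derivative is $\phi'(u)=G'(u^2/\sigma^2)\,u$. We would like $\phi(v)\le\phi(u)+\phi'(u)(v-u)+\tfrac{C_G}{2}(v-u)^2$. If $G$ is concave this holds: $\phi$ is a tangent-quadratic majorizer of the Huber/IRLS type, because $G(b)\le G(a)+G'(a)(b-a)$ with $a=u^2/\sigma^2$ and $b=v^2/\sigma^2$ gives
\[
\phi(v)\le\phi(u)+\tfrac12 G'(a)(v^2-u^2)=\phi(u)+G'(a)u(v-u)+\tfrac12G'(a)(v-u)^2 .
\]
In general I would follow Guo et al., who obtain the decrease via the mean value theorem together with the bound $|G'|\le C_G$. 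Integrating over $\rho$ with $v-u=f_{k+1}(x)-f_k(x)=-\eta g_k(x)$, and using $\|g_k\|_\infty\le\kappa\|g_k\|_K$ together with $\int (g_k(x))^2\,\mathrm{d}\rho_X=\langle L_Kg_k,g_k\rangle_K\le\kappa^2\|g_k\|_K^2$, we get
\[
\mathcal{E}(f_{k+1})\le\mathcal{E}(f_k)-\eta\|g_k\|_K^2+\tfrac{C_G}{2}\eta^2\kappa^2\|g_k\|_K^2 .
\]
The step size condition $\eta\kappa^2C_G\le1$ then yields the decrease by $\tfrac{\eta}{2}\|g_k\|_K^2$.

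\textbf{Step 3 (conclusion).} Summing Step 2 gives
\[
\eta\sum_{k=1}^{t-1}\|g_k\|_K^2\le 2\mathcal{E}(0)\le C_GM^2 .
\]
Hence $\|f_t\|_K\le\sqrt{\eta(t-1)}\sqrt{C_GM^2}=M\sqrt{\eta C_G(t-1)}$. Finally $\eta C_G\le 1/\kappa^2$ gives the second inequality in \eqref{eq:lemma2}.

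The main obstacle is Step 2 for non-concave windowing functions $G$, where the quadratic majorization constant must still be controlled by $C_G$. I would first try the concave/tangent argument and then adapt the argument of \cite[Prop. 4.1]{guo2018gradient} to the constant step size.
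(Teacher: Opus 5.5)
Your Step~2 is where the argument breaks. The one-step decrease needs the one-sided smoothness bound $\phi(v)\le\phi(u)+\phi'(u)(v-u)+\frac{C_G}{2}(v-u)^2$ for $\phi(u)=\frac{\sigma^2}{2}G(u^2/\sigma^2)$, i.e.\ $\phi''(u)=G'(s)+2sG''(s)\le C_G$ with $s=u^2/\sigma^2$. Your tangent-line argument proves this correctly when $G$ is concave, which covers the Huber, Fair, Cauchy and Welsch examples.

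The lemma, however, assumes only \eqref{eq:windowing1}--\eqref{eq:windowing2}: $0<G'\le C_G$ plus a H\"older-type condition at $0$. Nothing stops $G'$ from increasing steeply at some $s_0>0$, and there $\phi''$ is unbounded above. The mean value theorem with $|G'|\le C_G$ cannot repair this. It produces $(\phi'(\zeta)-\phi'(u))(v-u)$, which is $O((v-u)^2)$ only under a Lipschitz bound on $\phi'$, i.e.\ control of $G''$, and the hypotheses give none.

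In fact Step~2, and even the summed bound $\eta\sum_k\|g_k\|_K^2\le 2\mathcal{E}(0)$ used in Step~3, are false in general. Take $\mathcal{X}=\{x_0\}$, $K\equiv 1$, $y\equiv M=\sigma\sqrt{s_0}$ and $\eta=1/C_G$. Let $G$ be smooth with $G(0)=0$, and let $G'$ equal $C_G$ on $[0,\frac12]$, equal some $c\in(0,C_G)$ on $[1,s_0-1]$, return to $C_G$ at $s_0$, and stay in $[c,C_G]$ throughout. This $G$ satisfies \eqref{eq:windowing1}--\eqref{eq:windowing2}. Then $g_1=-C_GM$, $f_2=M$, and $g_k=0$ for $k\ge 2$, so $\eta\sum_k\|g_k\|_K^2=C_G\sigma^2 s_0$. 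On the other hand $2\mathcal{E}(0)=\sigma^2G(s_0)\le\sigma^2(2C_G+cs_0)$, which is smaller as soon as $s_0(C_G-c)>2C_G$. The lemma itself holds in this example, since $\|f_2\|_K=M=M\sqrt{\eta C_G}$. So it is the energy route that fails, not the statement.

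The way around this is to drop the risk functional and track $\|f_t\|_K^2$ directly, as the paper's proof (adapted from the cited result of Guo et al.) does.
\begin{itemize}
\item Set $H_t=\int_{\mathcal{Z}}G'(\xi_{t,\sigma}(z))(f_t(x)-y)K_x\,\mathrm{d}\rho$ and expand $\|f_{t+1}\|_K^2=\|f_t\|_K^2-2\eta\langle f_t,H_t\rangle_K+\eta^2\|H_t\|_K^2$.
\item Apply the reproducing property to the cross term, and Jensen together with $K(x,x)\le\kappa^2$ to the last term.
\item Write $w=f_t(x)-y$ and $a=\eta\kappa^2G'(\xi_{t,\sigma}(z))\in(0,1]$. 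The increment is then at most $\int_{\mathcal{Z}}\eta G'(\xi_{t,\sigma}(z))\big[(a-2)w^2-2wy\big]\,\mathrm{d}\rho$.
\item Maximizing this concave quadratic in $w$ bounds the bracket by $y^2/(2-a)\le M^2$.
\end{itemize}
Hence $\|f_{t+1}\|_K^2\le\|f_t\|_K^2+\eta C_GM^2$. This uses only $0<G'\le C_G$ and the step-size condition, with no curvature information on $G$ and no Cauchy--Schwarz over the iterates. Your energy argument reaches the same constant, but it is a valid proof only under the extra hypothesis that $G$ is concave, or more generally that $G'(s)+2sG''(s)\le C_G$.
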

\begin{lemma}\label{lemma:lemma1}
    For $\lambda>0$, $0< \eta \leq \frac{1}{\kappa^2 \max\{G'_+(0),C_G\}}$, and $T\in \mathbb{N}$, we have
    \begin{equation*}
        \begin{aligned}
            &\max\left\{\left\|\sum_{i=1}^T \eta G'_+(0) (\lambda I+L_K) \left(I-\eta G'_+(0) L_{K}\right)^{T-i}\right\|,\left\| \sum_{i=1}^T \eta G'_+(0) (\lambda I+L_{K,D}) \left(I-\eta G'_+(0) L_{K,D}\right)^{T-i}\right\|\right\}\\
            \leq &\eta G'_+(0)\lambda T+1.
        \end{aligned}
    \end{equation*}
\end{lemma}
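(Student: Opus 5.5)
The statement to prove is Lemma \ref{lemma:lemma1}. The plan is to reduce it to a scalar inequality via spectral calculus. Set $a:=\eta G'_+(0)$ and let $A$ be either $L_K$ or $L_{K,D}$. Both are positive self-adjoint operators on $\mathcal{H}_K$ with $\|A\|\leq\kappa^2$; this holds for $L_{K,D}$ because $\langle L_{K,D}f,f\rangle_K=\frac{1}{|D|}\sum f(x_i)^2$ and $|f(x)|\leq\kappa\|f\|_K$, and similarly for $L_K$. The step-size condition gives $a\kappa^2\leq 1$, so the spectrum of $aA$ lies in $[0,1]$.

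The operator $\sum_{i=1}^T a(\lambda I+A)(I-aA)^{T-i}$ is a polynomial in $A$, hence self-adjoint. Its norm is therefore bounded by
$$\sup_{0\leq x\leq \kappa^2}\Bigl|(\lambda+x)\sum_{j=0}^{T-1}a(1-ax)^j\Bigr|.$$

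For $x\in[0,\kappa^2]$ we have $0\leq 1-ax\leq 1$, so every term is nonnegative. Split the function into two parts:
$$\lambda\sum_{j=0}^{T-1}a(1-ax)^j \;+\; x\sum_{j=0}^{T-1}a(1-ax)^j .$$
The first part is at most $\lambda a T$, since each summand $(1-ax)^j$ is at most $1$. The second part is a telescoping geometric sum, $ax\sum_{j=0}^{T-1}(1-ax)^j=1-(1-ax)^T\leq 1$, valid for $x>0$ and trivially true at $x=0$. Adding the two bounds gives $\eta G'_+(0)\lambda T+1$. This bound holds for both $A=L_K$ and $A=L_{K,D}$, so it holds for the maximum.

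There is no real obstacle here. The only point needing care is checking $\|L_{K,D}\|\leq\kappa^2$ and confirming that $\eta\leq 1/(\kappa^2G'_+(0))$ keeps $1-ax$ nonnegative, so that the terms can be bounded without absolute-value issues. The maximum with $C_G$ in the step-size condition is not needed for this lemma.
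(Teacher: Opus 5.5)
Your proof is correct and is essentially the argument the paper relies on. The paper only notes that $\|(I-\eta G'_+(0)L_K)^{T-i}\|\leq 1$ under the step-size condition and then defers to the proof of Theorem 1(i) in the cited robust gradient descent work (with $\theta=0$, $\eta_1=\eta$). That proof performs exactly your split: a $\lambda$-part bounded by $\eta G'_+(0)\lambda T$, and an operator part that telescopes to $I-(I-\eta G'_+(0)L)^T$, which is the same identity as \eqref{eq:eqrg}; your spectral-calculus write-up makes this self-contained, including the checks $\|L_{K,D}\|\leq\kappa^2$ and $0\leq 1-\eta G'_+(0)x\leq 1$.
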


In the end of this subsection, we derive an error decomposition based on \eqref{eq:representation}, which is crucial for the subsequent error analysis of DKRGD in both the high-probability and expectation settings.
\begin{proposition}\label{prop:prop1}
    Let $\lambda>0$, $0< \eta \leq \frac{1}{\kappa^2 \max\{G'_+(0),C_G\}}$, and $T\in \mathbb{N}$. If Assumption \ref{assum:2} holds for $r\geq \frac{1}{2}$, we have
    \begin{equation}\label{eq:max}
        \begin{aligned}
            \left\|(\lambda I+L_{K})^{\frac{1}{2}}(f_{T+1,D}-f_{T+1})\right\|_K&\leq (1+\eta G'_+(0) \lambda T)\mathcal{A}_{D,\lambda}^2(\mathcal{P}_{D,\lambda}+\mathcal{Q}_{D,\lambda}\left\|f_\rho\right\|_K)\\
        &\qquad+\sum_{i=1}^T \eta \left\|(\lambda I+L_{K,D})^{\frac{1}{2}}    \left(I-\eta G'_+(0) L_{K,D}\right)^{T-i}\right\| \mathcal{A}_{D,\lambda}\left\|E_{i,D,\sigma}-E_{i,\sigma}\right\|_K\\
        &\qquad+\sum_{i=1}^T \eta G'_+(0) \left\|(\lambda I+L_{K,D})\left(I-\eta G'_+(0) L_{K,D}\right)^{T-i}\right\|\mathcal{A}_{D,\lambda}^2\mathcal{Q}_{D,\lambda}\left\|f_i-f_\rho\right\|_K,
        \end{aligned}
    \end{equation}
    where 
    \begin{equation*}
        \begin{aligned}
            \mathcal{A}_{D,\lambda}= &\left\|(\lambda I+L_{K})^{\frac{1}{2}}(\lambda I+L_{K,D})^{-\frac{1}{2}}\right\|,\\
            \mathcal{P}_{D,\lambda}= &\left\|(\lambda I+L_{K})^{-\frac{1}{2}}(\hat{f}_{K,D}-L_K f_\rho)\right\|_K, \\
            \mathcal{Q}_{D,\lambda}=&\left\|(\lambda I+L_{K})^{-\frac{1}{2}}(L_K-L_{K,D})\right\|.
        \end{aligned}
    \end{equation*}
\end{proposition}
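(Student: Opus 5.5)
We start from the representation \eqref{eq:representation}, apply $(\lambda I+L_K)^{1/2}$ to both sides and use the triangle inequality. This splits the quantity into three sums, one for each kind of term inside the braces:
\[
\hat f_{K,D}-L_Kf_\rho,\qquad \tfrac{1}{G'_+(0)}(E_{i,D,\sigma}-E_{i,\sigma}),\qquad (L_K-L_{K,D})f_i .
\]

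In every term we first insert the identity $(\lambda I+L_{K,D})^{-1/2}(\lambda I+L_{K,D})^{1/2}$ directly after $(\lambda I+L_K)^{1/2}$. The product $\|(\lambda I+L_K)^{1/2}(\lambda I+L_{K,D})^{-1/2}\|$ then produces one factor $\mathcal{A}_{D,\lambda}$. The operators $(\lambda I+L_{K,D})^{1/2}$ and $(I-\eta G'_+(0)L_{K,D})^{T-i}$ commute, since both are functions of $L_{K,D}$, so we can rearrange them freely.

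For the term $\hat f_{K,D}-L_Kf_\rho$ we do not split the sum over $i$. Instead we insert a second identity, $(\lambda I+L_{K,D})^{-1/2}(\lambda I+L_K)^{1/2}(\lambda I+L_K)^{-1/2}$, in front of $\hat f_{K,D}-L_Kf_\rho$. This gives the bound
\[
\mathcal{A}_{D,\lambda}\,\Big\|\sum_{i=1}^T \eta G'_+(0)(\lambda I+L_{K,D})(I-\eta G'_+(0)L_{K,D})^{T-i}\Big\|\,\mathcal{A}_{D,\lambda}\,\mathcal{P}_{D,\lambda}.
\]
Here we use $\|(\lambda I+L_{K,D})^{-1/2}(\lambda I+L_K)^{1/2}\|=\mathcal{A}_{D,\lambda}$, because the norm of the adjoint equals the norm of the operator. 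Lemma~\ref{lemma:lemma1} bounds the middle factor by $1+\eta G'_+(0)\lambda T$.

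For the third term we write $f_i=(f_i-f_\rho)+f_\rho$; this is legitimate because $r\ge 1/2$ gives $f_\rho\in\mathcal{H}_K$.
\begin{itemize}
\item The $f_\rho$ part does not depend on $i$, so the same treatment applies, with $\mathcal{P}_{D,\lambda}$ replaced by $\|(\lambda I+L_K)^{-1/2}(L_K-L_{K,D})\|\,\|f_\rho\|_K=\mathcal{Q}_{D,\lambda}\|f_\rho\|_K$. This gives the first line of \eqref{eq:max}.
\item For the $f_i-f_\rho$ part we apply the triangle inequality term by term in $i$. The same insertions yield $\eta G'_+(0)\|(\lambda I+L_{K,D})(I-\eta G'_+(0)L_{K,D})^{T-i}\|\,\mathcal{A}_{D,\lambda}^2\,\mathcal{Q}_{D,\lambda}\|f_i-f_\rho\|_K$, which is the third line.
\end{itemize}

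For the robust error term we use only one insertion. The factor $\eta G'_+(0)\cdot\frac{1}{G'_+(0)}=\eta$, and after commuting we get
\[
\mathcal{A}_{D,\lambda}\,\big\|(\lambda I+L_{K,D})^{1/2}(I-\eta G'_+(0)L_{K,D})^{T-i}\big\|\,\|E_{i,D,\sigma}-E_{i,\sigma}\|_K ,
\]
summed over $i$. This is the second line of \eqref{eq:max}.

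There is no deep obstacle in this argument. The only points needing care are:
\begin{itemize}
\item the sum over $i$ must be kept together before taking norms in the $\mathcal{P}$ and $\mathcal{Q}\|f_\rho\|_K$ terms, so that Lemma~\ref{lemma:lemma1} gives the factor $1+\eta G'_+(0)\lambda T$ rather than something growing like $T$;
\item the adjoint-norm identity is what justifies the second factor $\mathcal{A}_{D,\lambda}$.
\end{itemize}
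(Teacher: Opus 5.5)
Your proposal is correct and follows essentially the same route as the paper: you start from \eqref{eq:representation}, extract one factor $\mathcal{A}_{D,\lambda}$ by inserting $(\lambda I+L_{K,D})^{-1/2}(\lambda I+L_{K,D})^{1/2}$, and keep the $i$-sum together in the $\hat f_{K,D}-L_Kf_\rho$ and $f_\rho$ parts so that Lemma~\ref{lemma:lemma1} applies. You also obtain the second $\mathcal{A}_{D,\lambda}$ from the adjoint-norm identity and split $f_i=(f_i-f_\rho)+f_\rho$, exactly as in the paper's treatment of its terms $A_1$, $A_2$, $A_3$.
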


\begin{proof}
    With the representation \eqref{eq:representation} for $f_{T+1,D}-f_{T+1}$, we have
    \begin{equation}\label{eq:proposition1}
        \begin{aligned}
            &\quad\left\|(\lambda I+L_{K})^{\frac{1}{2}}(f_{T+1,D}-f_{T+1})\right\|_K \leq  \mathcal{A}_{D,\lambda} \Biggl\|\sum_{i=1}^T \eta G'_+(0)(\lambda I+L_{K,D}) \left(I-\eta G'_+(0) L_{K,D}\right)^{T-i} \cdot \\
            & \qquad \qquad \qquad \qquad \qquad(\lambda I+L_{K,D})^{-\frac{1}{2}}\left\{\hat{f}_{K,D}-L_K f_\rho+\frac{E_{i,D,\sigma}-E_{i,\sigma}}{G'_+(0)}+(L_K-L_{K,D})f_i\right\}\Biggr\|_K \\
            & \leq \mathcal{A}_{D,\lambda}\Biggl\{ \left\|\sum_{i=1}^T \eta (\lambda I+L_{K,D})\left(I-\eta G'_+(0) L_{K,D}\right)^{T-i}(\lambda I+L_{K,D})^{-\frac{1}{2}}(E_{i,D,\sigma}-E_{i,\sigma})\right\|_K\\
            & \qquad \qquad + \left\|\sum_{i=1}^T \eta G'_+(0)(\lambda I+L_{K,D})\left(I-\eta G'_+(0) L_{K,D}\right)^{T-i} (\lambda I+L_{K,D})^{-\frac{1}{2}}(\hat{f}_{K,D}-L_K f_\rho)\right\|_K\\
            & \qquad \qquad \qquad+ \left\|\sum_{i=1}^T \eta G'_+(0)(\lambda I+L_{K,D})\left(I-\eta G'_+(0) L_{K,D}\right)^{T-i} (\lambda I+L_{K,D})^{-\frac{1}{2}}(L_K-L_{K,D})f_i\right\|_K \Biggr\}\\
            & =:\mathcal{A}_{D,\lambda}(A_1+A_2+A_3). 
        \end{aligned}
    \end{equation}
    For $ A_1 $, applying the triangle inequality of $ K $-norm, we have
    \begin{equation*}
        \begin{aligned}
        A_1&\leq \sum_{i=1}^T \eta \left\|(\lambda I+L_{K,D})^{\frac{1}{2}}\left(I-\eta G'_+(0) L_{K,D}\right)^{T-i}\right\| \|E_{i,D,\sigma}-E_{i,\sigma}\|_K.
        \end{aligned}
    \end{equation*}
    To estimate $ A_2 $, Lemma \ref{lemma:lemma1} yields
    \begin{equation*}
        \begin{aligned}
        A_2&\leq \left\|\sum_{i=1}^T \eta G'_+(0)(\lambda I+L_{K,D})\left(I-\eta G'_+(0) L_{K,D}\right)^{T-i}\right\| \left\|(\lambda I+L_{K,D})^{-\frac{1}{2}}(\hat{f}_{K,D}-L_K f_\rho)\right\|_K\\
        &\leq (1+\eta G'_+(0)\lambda T)\left\|(\lambda I+L_{K,D})^{-\frac{1}{2}}(\lambda I+L_{K})^{\frac{1}{2}}\right\| \left\|(\lambda I+L_{K})^{-\frac{1}{2}}(\hat{f}_{K,D}-L_K f_\rho)\right\|_K\\
        &= (1+\eta G'_+(0)\lambda T)\mathcal{A}_{D,\lambda} \mathcal{P}_{D,\lambda},
        \end{aligned}
    \end{equation*}
    where the last equality holds since $ \left\| L_1L_2 \right\| =\left\| (L_1L_2)^* \right\|=\left\| L_2L_1 \right\| $ for any self-adjoint operators $L_1$ and $L_2$ on Hilbert spaces.
    Concerning $ A_3 $, by the decomposition $f_i=f_i-f_\rho+f_\rho$, we have
    \begin{equation*}
        \begin{aligned}
            & A_3\leq \sum_{i=1}^T \eta G'_+(0)\left\|(\lambda I+L_{K,D})\left(I-\eta G'_+(0) L_{K,D}\right)^{T-i}\right\|\mathcal{A}_{D,\lambda}\mathcal{Q}_{D,\lambda}\|f_i-f_\rho\|_K \\
            & \qquad \qquad  + \left\|\sum_{i=1}^T \eta G'_+(0)(\lambda I+L_{K,D})\left(I-\eta G'_+(0) L_{K,D}\right)^{T-i}\right\|\mathcal{A}_{D,\lambda}\mathcal{Q}_{D,\lambda}\|f_\rho\|_K \\
            \leq &\sum_{i=1}^T \eta G'_+(0)\left\|(\lambda I+L_{K,D})\left(I-\eta G'_+(0) L_{K,D}\right)^{T-i}\right\| \mathcal{A}_{D,\lambda}\mathcal{Q}_{D,\lambda}\|f_i-f_\rho\|_K +(1+\eta G'_+(0)\lambda T)\mathcal{A}_{D,\lambda} \mathcal{Q}_{D,\lambda} \|f_\rho\|_K.
        \end{aligned}
    \end{equation*}
    Finally, the proof of Proposition \ref{prop:prop1} is completed by substituting the bounds on $A_1$, $A_2$, and $A_3$ into (\ref{eq:proposition1}).
\end{proof}

\subsection{Error decomposition I for DKRGD}
To derive the generalization bounds in probability, we establish an error decomposition for DKRGD that will be used to prove Theorem~\ref{thm:theorem1}. Specifically, by introducing the data-free sequence $\{f_t\}_{t \geq 1}$, we decompose the error $\left\|\bar{f}_{T+1,D}-f_\rho\right\|_\rho$ into two parts: $\left\|\bar{f}_{T+1,D}-f_{T+1}\right\|_\rho$ and $\left\|f_{T+1}-f_\rho\right\|_\rho$. These two components are then estimated in the following two propositions, respectively.

\begin{proposition}\label{prop:prop2}
    For $\lambda>0$, $0< \eta \leq \frac{1}{\kappa^2 \max\{G'_+(0),C_G\}}$, and $T\in \mathbb{N}$. If Assumption \ref{assum:2} holds for $r\geq \frac{1}{2}$, we have
    \begin{equation*}
        \begin{aligned}
        \left\|\bar{f}_{T+1,D}-f_{T+1}\right\|_\rho &\leq (1+\eta G'_+(0)\lambda T) (\mathcal{P}_{D,\lambda}+\mathcal{Q}_{D,\lambda}\|f_\rho\|_K)\\
        &\qquad+\sum_{i=1}^T \eta\left\|(\lambda I+L_K)^{\frac{1}{2}} \left(I-\eta G'_+(0) L_{K}\right)^{T-i}\right\| \sum_{j=1}^{m}\frac{|D_j|}{|D|}\left\|E_{i,D_j,\sigma}-E_{i,\sigma}\right\|_K\\
        &\qquad+\sum_{i=1}^T\eta G'_+(0) \left\|(\lambda I+L_K) \left(I-\eta G'_+(0) L_{K}\right)^{T-i}\right\|\mathcal{Q}_{D,\lambda}\|f_i-f_\rho\|_K+\max_{1\leq j\leq m} \mathcal{K}_{D_j,\lambda},
        \end{aligned}
    \end{equation*}
    where
    \begin{equation*}
        \begin{aligned}
        \mathcal{K}_{D_j,\lambda}=&\sum_{i=2}^T \eta G'_+(0)\left\|(\lambda I+L_K) \left(I-\eta G'_+(0) L_{K}\right)^{T-i}\right\|\mathcal{R}_{D_j,\lambda}\mathcal{A}_{D_j,\lambda} \cdot\\
        & \qquad\biggl\{(1+\eta G'_+(0) \lambda (i-1))\mathcal{A}_{D_j,\lambda}(\mathcal{P}_{D_j,\lambda}+\mathcal{Q}_{D_j,\lambda}\|f_\rho\|_K)  \\
        &\qquad +\sum_{l=1}^{i-1} \eta \left\|(\lambda I+L_{K,D_j})^{\frac{1}{2}}\left(I-\eta G'_+(0) L_{K,D_j}\right)^{i-l-1}\right\| \left\|E_{l,D_j,\sigma}-E_{l,\sigma}\right\|_K\\
        &\qquad +\sum_{l=1}^{i-1} \eta G'_+(0) \left\|(\lambda I+L_{K,D_j})\left(I-\eta G'_+(0) L_{K,D_j}\right)^{i-l-1}\right\|\mathcal{A}_{D_j,\lambda}\mathcal{Q}_{D_j,\lambda}\|f_l-f_\rho\|_K\biggr\},
        \end{aligned}
    \end{equation*}
    and
    \begin{equation*}
        \mathcal{R}_{D,\lambda}:=\left\|(\lambda I+L_{K})^{-\frac{1}{2}}(L_K-L_{K,D})(\lambda I+L_{K})^{-\frac{1}{2}}\right\|.
    \end{equation*} 
\end{proposition}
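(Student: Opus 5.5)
We will work from the second representation \eqref{eq:representation2}, averaged over the local machines. Since $\sum_j |D_j|/|D|=1$, $\hat f_{K,D}=\sum_j\frac{|D_j|}{|D|}\hat f_{K,D_j}$ and $L_{K,D}=\sum_j \frac{|D_j|}{|D|}L_{K,D_j}$, averaging \eqref{eq:representation2} over $j$ gives
\begin{equation*}
\bar f_{T+1,D}-f_{T+1}=\sum_{i=1}^T \eta G'_+(0)\left(I-\eta G'_+(0)L_K\right)^{T-i}\Bigl\{\hat f_{K,D}-L_Kf_\rho+\sum_{j=1}^m\tfrac{|D_j|}{|D|}\Bigl[\tfrac{E_{i,D_j,\sigma}-E_{i,\sigma}}{G'_+(0)}+(L_K-L_{K,D_j})f_{i,D_j}\Bigr]\Bigr\}.
\end{equation*}
We then use $\|g\|_\rho=\|L_K^{1/2}g\|_K\le\|(\lambda I+L_K)^{1/2}g\|_K$. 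The key point is that the operator $(I-\eta G'_+(0)L_K)^{T-i}$ here is deterministic. This means the global quantities $\mathcal P_{D,\lambda},\mathcal Q_{D,\lambda}$ can be used directly, with no factors $\mathcal A_{D,\lambda}$.

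We split $f_{i,D_j}=f_\rho+(f_i-f_\rho)+(f_{i,D_j}-f_i)$, which produces four groups of terms.

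\textbf{First group.} This collects the terms $\hat f_{K,D}-L_Kf_\rho$ and $(L_K-L_{K,D})f_\rho$; the latter comes from averaging the $f_\rho$ piece, since $\sum_j\frac{|D_j|}{|D|}(L_K-L_{K,D_j})=L_K-L_{K,D}$. We insert $(\lambda I+L_K)(\lambda I+L_K)^{-1/2}$ and apply Lemma \ref{lemma:lemma1} in its $L_K$ version, which gives the bound $(1+\eta G'_+(0)\lambda T)(\mathcal P_{D,\lambda}+\mathcal Q_{D,\lambda}\|f_\rho\|_K)$.

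\textbf{Second group.} The $E$ terms are bounded termwise by $\eta\|(\lambda I+L_K)^{1/2}(I-\eta G'_+(0)L_K)^{T-i}\|$ times the weighted average of $\|E_{i,D_j,\sigma}-E_{i,\sigma}\|_K$.

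\textbf{Third group.} The $f_i-f_\rho$ piece again averages to $(L_K-L_{K,D})(f_i-f_\rho)$. Writing it as $(\lambda I+L_K)\cdot(\lambda I+L_K)^{-1/2}(L_K-L_{K,D})\cdot(f_i-f_\rho)$ and taking norms yields the $\mathcal Q_{D,\lambda}\|f_i-f_\rho\|_K$ sum.

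\textbf{Remaining term.} This is $\sum_j\frac{|D_j|}{|D|}\sum_i \eta G'_+(0)(\lambda I+L_K)^{1/2}(I-\eta G'_+(0)L_K)^{T-i}(L_K-L_{K,D_j})(f_{i,D_j}-f_i)$. The $i=1$ term vanishes because $f_{1,D_j}=f_1=0$, which is why $\mathcal K_{D_j,\lambda}$ starts at $i=2$. We bound the weighted average by the maximum over $j$. For each $i$ we factor
\begin{equation*}
(L_K-L_{K,D_j})(f_{i,D_j}-f_i)=(\lambda I+L_K)^{1/2}\bigl[(\lambda I+L_K)^{-1/2}(L_K-L_{K,D_j})(\lambda I+L_K)^{-1/2}\bigr](\lambda I+L_K)^{1/2}(f_{i,D_j}-f_i).
\end{equation*}
This gives $\|(\lambda I+L_K)(I-\eta G'_+(0)L_K)^{T-i}\|\,\mathcal R_{D_j,\lambda}\,\|(\lambda I+L_K)^{1/2}(f_{i,D_j}-f_i)\|_K$. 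The last factor is then bounded by Proposition \ref{prop:prop1} applied with $D_j$ and $T$ replaced by $i-1$.

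The main obstacle is matching the powers of $\mathcal A_{D_j,\lambda}$ in this last step. Proposition \ref{prop:prop1} as stated carries $\mathcal A^2$ on its first term and $\mathcal A$ on its $E$-term, while $\mathcal K_{D_j,\lambda}$ displays an outer factor $\mathcal A_{D_j,\lambda}$ and, inside the braces, only one $\mathcal A$ on the first and last terms and none on the $E$-term. These coincide, since an outer $\mathcal A$ times the inner factors reproduces exactly the powers in Proposition \ref{prop:prop1}. So the bookkeeping is consistent once $\mathcal A_{D_j,\lambda}$ is pulled out. If a mismatch does arise, we would recheck the derivation of \eqref{eq:proposition1} rather than alter the claimed form.

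Summing the four groups gives the stated bound.
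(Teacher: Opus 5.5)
Your proposal is correct and follows essentially the same route as the paper: you average representation \eqref{eq:representation2} over the local machines and split $f_{i,D_j}=(f_{i,D_j}-f_i)+(f_i-f_\rho)+f_\rho$. You then use Lemma~\ref{lemma:lemma1} for the $\mathcal{P}_{D,\lambda}$ and $\mathcal{Q}_{D,\lambda}\|f_\rho\|_K$ terms, and control the cross term through $\mathcal{R}_{D_j,\lambda}$ and Proposition~\ref{prop:prop1} on $D_j$ with horizon $i-1$; your bookkeeping of the $\mathcal{A}_{D_j,\lambda}$ powers is right. The only differences are cosmetic: you merge the paper's $I_1$ and $I_{3,3}$ into one group before invoking Lemma~\ref{lemma:lemma1}, and you leave implicit that $r\geq\frac12$ gives $f_\rho\in\mathcal{H}_K$.
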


\begin{proof}
    Based on the definition \eqref{eq:equation4} of $\bar{f}_{t,D}$, we can apply (\ref{eq:representation2}) to $D_j$ for each fixed $j\in {1,...,m}$ to have that
    \begin{equation*}
        \begin{aligned}
        &\bar{f}_{T+1,D}-f_{T+1} =\sum_{j=1}^m \frac{|D_j|}{|D|}(f_{T+1,D_j}-f_{T+1})\\
        =&\sum_{j=1}^m \frac{|D_j|}{|D|}\left\{ \sum_{i=1}^T \eta G'_+(0) \left(I-\eta G'_+(0) L_{K}\right)^{T-i}\left[\hat{f}_{K,D_j}-L_K f_\rho+\frac{1}{G'_+(0)}(E_{i,D_j,\sigma}-E_{i,\sigma})+(L_K-L_{K,D_j})f_{i,D_j}\right]\right\}.
        \end{aligned}
    \end{equation*}
    Since $\sum_{j=1}^m \frac{|D_j|}{|D|}=1$ and 
    $$
    \sum_{j=1}^{m}\frac{|D_j|}{|D|}\hat{f}_{K,D_j}=\sum_{j=1}^{m}\frac{|D_j|}{|D|}\frac{1}{|D_j|}\sum_{(x_i,y_i)\in D_j} y_iK_{x_i}=\frac{1}{|D|}\sum_{(x_i,y_i)\in D} y_iK_{x_i}=\hat{f}_{K,D},
    $$
    we have
    \begin{equation}\label{eq:ft}
        \begin{aligned}
        \left\|\bar{f}_{T+1,D}-f_{T+1}\right\|_\rho &\leq \left\|\sum_{i=1}^T \eta G'_+(0) \left(I-\eta G'_+(0) L_{K}\right)^{T-i}(\hat{f}_{K,D}-L_K f_\rho)\right\|_\rho \\
        &\qquad+\left\|\sum_{i=1}^T \eta \left(I-\eta G'_+(0) L_{K}\right)^{T-i}\sum_{j=1}^{m}\frac{|D_j|}{|D|}( E_{i,D_j,\sigma}-E_{i,\sigma})\right\|_\rho\\
        &\qquad+\left\|\sum_{i=1}^T \eta G'_+(0) \left(I-\eta G'_+(0) L_{K}\right)^{T-i}\sum_{j=1}^{m}\frac{|D_j|}{|D|}(L_K-L_{K,D_j})f_{i,D_j}\right\|_\rho\\
        &=:I_1+I_2+I_3.
        \end{aligned}
    \end{equation}

    Since $ \left\| f \right\|_{\rho}=\left\| L_{K}^{\frac{1}{2}} f \right\|_{K} $ for any $f\in \mathcal{H}_K$ and $ \left\| L_{K}^{\frac{1}{2}} (\lambda I +L_{K}) ^{-\frac{1}{2}} \right\| \leq 1$ for any $\lambda>0$, we have
    \begin{equation}\label{eq:I1}
        \begin{aligned}
        I_1&= \left\|\sum_{i=1}^T \eta G'_+(0) (\lambda I+L_K) \left(I-\eta G'_+(0) L_{K}\right)^{T-i}\right\| \left\|(\lambda I+L_K)^{-1}(\hat{f}_{K,D}-L_K f_\rho)\right\|_{\rho}\\
        &\leq \left\|\sum_{i=1}^T \eta G'_+(0) (\lambda I+L_K) \left(I-\eta G'_+(0) L_{K}\right)^{T-i}\right\| \left\|(\lambda I+L_K)^{-\frac{1}{2}}(\hat{f}_{K,D}-L_K f_\rho)\right\|_K\\
        &\leq (1+\eta G'_+(0)\lambda T) \mathcal{P}_{D,\lambda},
        \end{aligned}
    \end{equation}
    where the last inequality holds by Lemma \ref{lemma:lemma1}.
    For $I_2$, it's easy to derive that
    \begin{equation}\label{eq:I2}
        \begin{aligned}
        I_2&\leq \sum_{i=1}^T \eta\left\|(\lambda I+L_K)^{\frac{1}{2}} \left(I-\eta G'_+(0) L_{K}\right)^{T-i}\right\| \sum_{j=1}^{m}\frac{|D_j|}{|D|}\left\|E_{i,D_j,\sigma}-E_{i,\sigma}\right\|_K.
        \end{aligned}
    \end{equation}
    For $I_3$, it can be bounded by decomposing $f_{i,D_j}$ as $f_{i,D_j}=f_{i,D_j}-f_i+f_i-f_\rho+f_\rho$ as follows
    \begin{equation}\label{eq:I3}
        \begin{aligned}
        I_3&\leq \left\|\sum_{i=1}^T \eta G'_+(0)(\lambda I+L_K) \left(I-\eta G'_+(0) L_{K}\right)^{T-i}\sum_{j=1}^{m}\frac{|D_j|}{|D|}(\lambda I+L_K)^{-\frac{1}{2}}(L_K-L_{K,D_j})(f_{i,D_j}-f_i)\right\|_K\\
        &\qquad+\left\|\sum_{i=1}^T \eta G'_+(0)(\lambda I+L_K) \left(I-\eta G'_+(0) L_{K}\right)^{T-i}(\lambda I+L_K)^{-\frac{1}{2}}(L_K-L_{K,D})(f_i-f_\rho)\right\|_K\\
        &\qquad+\left\|\sum_{i=1}^T \eta G'_+(0)(\lambda I+L_K) \left(I-\eta G'_+(0) L_{K}\right)^{T-i}(\lambda I+L_K)^{-\frac{1}{2}}(L_K-L_{K,D})f_\rho\right\|_K\\
        &=:I_{3,1}+I_{3,2}+I_{3,3},
        \end{aligned}
    \end{equation}
    where $ I_{3,2} $ and $ I_{3,3} $ is derived from the fact that for any $f\in \mathcal{H}_K$, there holds
    $$
    \sum_{j=1}^{m}\frac{|D_j|}{|D|}L_{K,D_j}f=\sum_{j=1}^{m}\frac{|D_j|}{|D|}\frac{1}{|D_j|}\sum_{x_i\in D_j} f(x_i)K_{x_i}=\frac{1}{|D|}\sum_{x_i\in D} f(x_i)K_{x_i}=L_{K,D} f.
    $$
    For $I_{3,2}$, since $f_i \in \mathcal{H}_K$ and Assumption \ref{assum:2} with $r\geq \frac{1}{2}$ implies $f_\rho \in \mathcal{H}_K$, then
    \begin{equation}\label{eq:I32}
        \begin{aligned}
            I_{3,2}&\leq \sum_{i=1}^T\eta G'_+(0) \left\|(\lambda I+L_K) \left(I-\eta G'_+(0) L_{K}\right)^{T-i}\right\|\mathcal{Q}_{D,\lambda}\left\|f_i-f_\rho\right\|_K.
        \end{aligned}
    \end{equation}
    Analogously, $I_{3,3}$ can be bounded by Lemma \ref{lemma:lemma1} as
    \begin{equation}\label{eq:I33}
        \begin{aligned}
            I_{3,3}&\leq \left\|\sum_{i=1}^T\eta G'_+(0) (\lambda I+L_K) \left(I-\eta G'_+(0) L_{K}\right)^{T-i}\right\| \left\|(\lambda I+L_K)^{-\frac{1}{2}}(L_K-L_{K,D})\right\| \|f_\rho\|_K\\
            & = (1+\eta G'_+(0)\lambda T)\mathcal{Q}_{D,\lambda}\|f_\rho\|_K.
        \end{aligned}
    \end{equation}
    To bound $I_{3,1}$, we use $f_1=f_{1,D}=0$ and Jensen's inequality to obtain    
    \begin{equation*}
        \begin{aligned}
        I_{3,1}&\leq \sum_{i=2}^T \eta G'_+(0)\left\|(\lambda I+L_K) \left(I-\eta G'_+(0) L_{K}\right)^{T-i}\right\|  \left\|\sum_{j=1}^{m}\frac{|D_j|}{|D|}(\lambda I+L_K)^{-\frac{1}{2}}(L_K-L_{K,D_j})(f_{i,D_j}-f_i)\right\|_K\\
        &\leq \sum_{j=1}^{m}\frac{|D_j|}{|D|}\sum_{i=2}^T \eta G'_+(0)\left\|(\lambda I+L_K) \left(I-\eta G'_+(0) L_{K}\right)^{T-i}\right\| \cdot\\
        & \qquad \left\|(\lambda I+L_K)^{-\frac{1}{2}}(L_K-L_{K,D_j})(\lambda I+L_K)^{-\frac{1}{2}}\right\| \left\|(\lambda I+L_K)^{\frac{1}{2}}(f_{i,D_j}-f_i)\right\|_K\\
        &\leq \max_{1\leq j\leq m}\sum_{i=2}^T \eta G'_+(0)\left\|(\lambda I+L_K) \left(I-\eta G'_+(0) L_{K}\right)^{T-i}\right\|\mathcal{R}_{D_j,\lambda} \left\|(\lambda I+L_K)^{\frac{1}{2}}(f_{i,D_j}-f_i)\right\|_K.
        \end{aligned}
    \end{equation*}
    Recall the proof of Proposition \ref{prop:prop1} with $D$ and $T+1$  replaced by $D_j$ and $i$, we have
    \begin{equation*}
        \begin{aligned}
        \left\|(\lambda I+L_K)^{\frac{1}{2}}(f_{i,D_j}-f_i)\right\|_K&\leq (1+\eta G'_+(0) \lambda (i-1))\mathcal{A}_{D_j,\lambda}^2(\mathcal{P}_{D_j,\lambda}+\mathcal{Q}_{D_j,\lambda}\|f_\rho\|_K)\\
        &\qquad+\sum_{l=1}^{i-1} \eta \left\|(\lambda I+L_{K,D_j})^{\frac{1}{2}}\left(I-\eta G'_+(0) L_{K,D_j}\right)^{i-l-1}\right\| \mathcal{A}_{D_j,\lambda}\left\|E_{l,D_j,\sigma}-E_{l,\sigma}\right\|_K\\
        &\qquad+\sum_{l=1}^{i-1} \eta G'_+(0) \left\|(\lambda I+L_{K,D_j})\left(I-\eta G'_+(0) L_{K,D_j}\right)^{i-l-1}\right\|\mathcal{A}_{D_j,\lambda}^2\mathcal{Q}_{D_j,\lambda}\|f_l-f_\rho\|_K.
        \end{aligned}
    \end{equation*}
    It follows that 
    \begin{equation}\label{eq:I31}
        \begin{aligned}
        I_{3,1} &\leq \max_{1\leq j\leq m}\sum_{i=2}^T \eta G'_+(0)\left\|(\lambda I+L_K) \left(I-\eta G'_+(0) L_{K}\right)^{T-i}\right\|\mathcal{R}_{D_j,\lambda}\mathcal{A}_{D_j,\lambda}\cdot  \\
        & \qquad \biggl\{(1+\eta G'_+(0) \lambda (i-1))\mathcal{A}_{D_j,\lambda}(\mathcal{P}_{D_j,\lambda}+\mathcal{Q}_{D_j,\lambda}\|f_\rho\|_K)\\
        & \qquad+\sum_{l=1}^{i-1} \eta \left\|(\lambda I+L_{K,D_j})^{\frac{1}{2}}\left(I-\eta G'_+(0) L_{K,D_j}\right)^{i-l-1}\right\| \left\|E_{l,D_j,\sigma}-E_{l,\sigma}\right\|_K\\
        &\qquad+\sum_{l=1}^{i-1} \eta G'_+(0) \left\|(\lambda I+L_{K,D_j})\left(I-\eta G'_+(0) L_{K,D_j}\right)^{i-l-1}\right\|\mathcal{A}_{D_j,\lambda}\mathcal{Q}_{D_j,\lambda}\|f_l-f_\rho\|_K\biggr\}\\
        &=\max_{1\leq j\leq m}\mathcal{K}_{D_j,\lambda}.
        \end{aligned}
    \end{equation}
    By substituting (\ref{eq:I32}), (\ref{eq:I33}) and (\ref{eq:I31}) into (\ref{eq:I3}), we have
    \begin{equation}\label{eq:I3_1}
        \begin{aligned}
        I_{3} &\leq \max_{1\leq j\leq m}\mathcal{K}_{D_j,\lambda}+(1+\eta G'_+(0)\lambda T)\mathcal{Q}_{D,\lambda}\|f_\rho\|_K\\
        & \qquad+\sum_{i=1}^T\eta G'_+(0) \left\|(\lambda I+L_K) \left(I-\eta G'_+(0) L_{K}\right)^{T-i}\right\|\mathcal{Q}_{D,\lambda}\left\|f_i-f_\rho\right\|_K.
        \end{aligned}
    \end{equation}
    Finally, the proof of Proposition \ref{prop:prop2} is completed by substituting (\ref{eq:I1}), (\ref{eq:I2}) and (\ref{eq:I3_1}) into (\ref{eq:ft}).
\end{proof}

We are now ready to bound the generalization error $\|f_{T+1}-f_\rho\|_\rho$, as formalized in the following proposition. The proof adapts the analytical framework established in \cite{engl2000regularization,yao2007early} to our setting.
\begin{proposition}\label{prop:prop3}
    Define $\{f_t\}_{t\geq 1}$ as (\ref{eq:ft1}) and $ f_1=0 $.
    If Assumption \ref{assum:2} holds for $r\geq \frac{1}{2}$, then for any $t\geq 2$, we have
    \begin{equation}\label{eq:prop3}
        \|f_t-f_\rho\|_K \leq \left(\frac{r-\frac{1}{2}}{e\eta G'_+(0)}\right)^{r-\frac{1}{2}} \|u_\rho\|_\rho (t-1)^{-(r-\frac{1}{2})}+\sum_{i=1}^{t-1} \eta \|E_{i,\sigma}\|_K,
    \end{equation}
    and
    \begin{equation}\label{eq:prop31}
        \|f_{t}-f_\rho\|_\rho \leq \left(\frac{r}{e\eta G'_+(0)}\right)^{r} \|u_\rho\|_\rho (t-1)^{-r}+\sum_{i=1}^{t-1} \eta \left\|(\lambda I+L_K)^{\frac{1}{2}}\left(I-\eta G'_+(0)L_{K}\right)^{t-i-1}\right\| \|E_{i,\sigma}\|_K.
    \end{equation}
\end{proposition}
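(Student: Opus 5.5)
We start from the representation \eqref{eq:datafree1} of the data-free iterate. With $L_K f_\rho$ in the first sum, it reads
\begin{equation*}
f_t=g_{t-1}(L_K)L_Kf_\rho+\sum_{i=1}^{t-1}\eta\left(I-\eta G'_+(0)L_K\right)^{t-1-i}E_{i,\sigma}.
\end{equation*}
Here $g_{t-1}$ is the polynomial \eqref{eq:gt1}. Summing the geometric series gives
\begin{equation*}
g_{t-1}(x)\,x=1-\left(1-\eta G'_+(0)x\right)^{t-1}.
\end{equation*}
Hence
\begin{equation*}
f_t-f_\rho=-\left(I-\eta G'_+(0)L_K\right)^{t-1}f_\rho+\sum_{i=1}^{t-1}\eta\left(I-\eta G'_+(0)L_K\right)^{t-1-i}E_{i,\sigma}.
\end{equation*}
The plan is to bound the two terms separately: the first is a pure approximation (bias) term and the second a robustness perturbation term. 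We do this once in the $K$-norm and once in the $\rho$-norm.

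\textbf{Bias term.} Use $f_\rho=L_K^r u_\rho$ together with the isometries $\|L_K^{1/2}g\|_K=\|g\|_\rho$ for $g\in L^2_{\rho_X}$ and $\|h\|_\rho=\|L_K^{1/2}h\|_K$ for $h\in\mathcal H_K$. For the $K$-norm this gives
\begin{equation*}
\left\|\left(I-\eta G'_+(0)L_K\right)^{t-1}L_K^{r}u_\rho\right\|_K\le\left\|\left(I-\eta G'_+(0)L_K\right)^{t-1}L_K^{r-\frac12}\right\|\,\|u_\rho\|_\rho,
\end{equation*}
with the operator norm taken on $L^2_{\rho_X}$. In the $\rho$-norm the exponent $r-\frac12$ is replaced by $r$. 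By spectral calculus, since $\eta\le 1/(\kappa^2G'_+(0))$, the spectrum of $\eta G'_+(0)L_K$ lies in $[0,1]$. We then need the elementary estimate
\begin{equation*}
\sup_{0\le x\le\kappa^2}(1-ax)^{k}x^{\alpha}\le\sup_{x\ge 0}e^{-akx}x^{\alpha}=\left(\frac{\alpha}{eak}\right)^{\alpha},\qquad a=\eta G'_+(0),\ k=t-1.
\end{equation*}
It follows from $1-u\le e^{-u}$ and maximizing $x^\alpha e^{-akx}$ at $x=\alpha/(ak)$. Taking $\alpha=r-\frac12$ and $\alpha=r$ yields exactly the first terms in \eqref{eq:prop3} and \eqref{eq:prop31}; the case $\alpha=0$, i.e.\ $r=\frac12$, is trivially bounded by $\|u_\rho\|_\rho$.

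\textbf{Perturbation term.} For \eqref{eq:prop3}, use the triangle inequality and $\|I-\eta G'_+(0)L_K\|\le 1$ on $\mathcal H_K$ to get $\sum_{i=1}^{t-1}\eta\|E_{i,\sigma}\|_K$. For \eqref{eq:prop31}, write
\begin{equation*}
\|h\|_\rho=\|L_K^{1/2}h\|_K\le\|(\lambda I+L_K)^{1/2}h\|_K,
\end{equation*}
which holds because $L_K\le\lambda I+L_K$. Then bound each summand by
\begin{equation*}
\eta\left\|(\lambda I+L_K)^{\frac12}\left(I-\eta G'_+(0)L_K\right)^{t-i-1}\right\|\|E_{i,\sigma}\|_K.
\end{equation*}

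The only real care points are the following. First, the step size condition guarantees that $I-\eta G'_+(0)L_K$ is a positive contraction. Second, the spectral estimate must be applied on $L^2_{\rho_X}$, since $u_\rho$ need not lie in $\mathcal H_K$. So the main obstacle is justifying the norm transfer, via $\|L_K^{1/2}\cdot\|_K=\|\cdot\|_\rho$ and the commuting of functions of $L_K$ across the two spaces; the rest is routine.
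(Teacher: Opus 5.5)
Your proposal is correct and follows essentially the same route as the paper. Both arguments use the identity $g_{t-1}(x)x=1-(1-\eta G'_+(0)x)^{t-1}$ to isolate the bias term $-(I-\eta G'_+(0)L_K)^{t-1}f_\rho$, bound it spectrally via $1-u\le e^{-u}$ with the maximum at $x=\alpha/(\eta G'_+(0)(t-1))$, and control the perturbation sum by the contraction bound (respectively by $L_K\le\lambda I+L_K$). Your explicit use of the step-size condition and your care in transferring norms between $\mathcal H_K$ and $L^2_{\rho_X}$ fill in details the paper leaves implicit. There $L_K^{1/2}$ is strictly only a partial isometry, but the upper bound you need still holds.
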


\begin{proof}
    Firstly, for the sake of convenience, we adopt the notation $g_{t}(\cdot)$ defined in~\eqref{eq:gt1}.
    Since for any $ t\geqslant 2 $, 
    \begin{equation}\label{eq:eqrg}
        \begin{aligned}
        g_{t-1}(x)\cdot x&=\sum_{k=1}^{t-1}\eta G'_+(0) x\left(1-\eta G'_+(0)x\right)^{t-k-1} = \sum_{k=1}^{t-1}\left(1-(1-\eta G'_+(0) x)\right)\left(1-\eta G'_+(0)x\right)^{t-k-1}\\
        &=\sum_{k=1}^{t-1}\left[\left(1-\eta G'_+(0)x\right)^{t-k-1}-\left(1-\eta G'_+(0)x\right)^{t-k}\right] = 1-\left(1-\eta G'_+(0)x\right)^{t-1},
        \end{aligned}
    \end{equation}
    we have that for any adjoint operator $A$,
    \[
        I-g_{t-1}(A)A=\left(I-\eta G'_+(0)A\right)^{t-1}.
    \]
    With the representation (\ref{eq:datafree1}) of $ f_t  $, we can rewrite $f_t-f_\rho$ as
    \begin{equation*}
        \begin{aligned}
        f_t-f_\rho&=g_{t-1}(L_K)L_Kf_\rho-f_\rho+\sum_{i=1}^{t-1} \eta \left(I-\eta G'_+(0) L_{K}\right)^{t-i-1} E_{i,\sigma}\\
        &=-\left(I-\eta G'_+(0)L_K\right)^{t-1}f_\rho+\sum_{i=1}^{t-1} \eta \left(I-\eta G'_+(0) L_{K}\right)^{t-i-1} E_{i,\sigma}.
        \end{aligned}
    \end{equation*}
    Further, it can be derived that   
    \begin{equation*}
        \begin{aligned}
        \|f_t-f_\rho\|_K&\leq \|\left(I-\eta G'_+(0)L_K\right)^{t-1}f_\rho\|_K+\sum_{i=1}^{t-1} \eta \left\|\left(I-\eta G'_+(0) L_{K}\right)^{t-i-1} E_{i,\sigma}\right\|_K\\
        &\leq \left\|L_K^{r-\frac{1}{2}} \left(I-\eta G'_+(0)L_K\right)^{t-1}\right\| \|u_\rho\|_\rho+\sum_{i=1}^{t-1} \eta \left\|\left(I-\eta G'_+(0) L_{K}\right)^{t-i-1} \right\| \|E_{i,\sigma}\|_K\\
        &\leq  \left\|L_K^{r-\frac{1}{2}} \left(I-\eta G'_+(0)L_K\right)^{t-1}\right\| \|u_\rho\|_\rho+\sum_{i=1}^{t-1} \eta \|E_{i,\sigma}\|_K,
        \end{aligned}
    \end{equation*}
    where the second inequality holds by Assumption \ref{assum:2} with $ r \geq \frac{1}{2} $ and $\|u_\rho\|_\rho<\infty$, and the last inequality holds by the fact that $\left\|\left(I-\eta G'_+(0) L_{K}\right)^{t-i-1} \right\|\leq 1$ for any $i\in \{1,...,t-1\}$.

    Since $L_K$ is a compact and positive operator, it admits a spectral decomposition with non-negative eigenvalues $\{\gamma_j\}_{j=1}^\infty$.
    Then
    \begin{equation*}
        \begin{aligned}
        \left\|L_K^{r-\frac{1}{2}} \left(I-\eta G'_+(0)L_K\right)^{t-1}\right\|&\leq \sup_{j\geq 1} \gamma_j^{r-\frac{1}{2}} \left(1-\eta G'_+(0)\gamma_j\right)^{t-1}\\
        &=  \sup_{j\geq 1} \exp \left\{(t-1) \log (1-\eta G'_+(0)\gamma_j)+(r-\frac{1}{2}) \log \gamma_j\right\}\\
        &\leq  \sup_{j\geq 1} \exp \left\{-(t-1) \eta G'_+(0)\gamma_j+(r-\frac{1}{2}) \log \gamma_j\right\}.
        \end{aligned}
    \end{equation*}
    For the function 
    $$
    f(\gamma)=-(t-1)\eta G'_+(0)\gamma+(r-\frac{1}{2}) \log \gamma,\quad \gamma>0,
    $$
    it is easy to know that the maximum is attained at $\gamma^*=\frac{r-\frac{1}{2}}{(t-1)\eta G'_+(0)}$ and
    \[
        f(\gamma^*)=-(r-\frac{1}{2})+(r-\frac{1}{2})\log (r-\frac{1}{2})-(r-\frac{1}{2})\log [(t-1)\eta G'_+(0)].
    \]
    Hence, it yields that
    \begin{equation*}
        \begin{aligned}
        \left\|L_K^{r-\frac{1}{2}} \left(I-\eta G'_+(0)L_K\right)^{t-1}\right\|&\leq \exp \{ f(\gamma^*) \} =\left(\frac{r-\frac{1}{2}}{e\eta G'_+(0)}\right)^{r-\frac{1}{2}} (t-1)^{-(r-\frac{1}{2})}.
        \end{aligned}
    \end{equation*}
    Finally, we have
    \begin{equation*}
        \begin{aligned}
        \|f_t-f_\rho\|_K&\leq \left(\frac{r-\frac{1}{2}}{e\eta G'_+(0)}\right)^{r-\frac{1}{2}} \|u_\rho\|_\rho (t-1)^{-(r-\frac{1}{2})}+\sum_{i=1}^{t-1} \eta \|E_{i,\sigma}\|_K.
        \end{aligned}
    \end{equation*}
    
    On the other hand, by following a similar proof strategy, we can likewise obtain
    \begin{equation*}
        \begin{aligned}
        \|f_{t}-f_\rho\|_\rho&\leq \|L_K^{r} \left(I-\eta G'_+(0)L_K\right)^{t-1}\| \|u_\rho\|_\rho+\sum_{i=1}^{t-1} \eta \left\|(\lambda I+L_K)^{\frac{1}{2}}\left(I-\eta G'_+(0) L_{K}\right)^{t-i-1}\right\| \|E_{i,\sigma}\|_K\\
        &\leq \left(\frac{r}{e\eta G'_+(0)}\right)^{r} \|u_\rho\|_\rho (t-1)^{-r}+\sum_{i=1}^{t-1} \eta \left\|(\lambda I+L_K)^{\frac{1}{2}}\left(I-\eta G'_+(0)L_{K}\right)^{t-i-1}\right\| \|E_{i,\sigma}\|_K.
        \end{aligned}
    \end{equation*}
    This completes the proof of Proposition \ref{prop:prop3}.
\end{proof}

Finally, by combining these two propositions with the triangle inequality and letting $ t=T+1 $, we can easily derive the following error decomposition for DKRGD.
\begin{proposition}\label{prop:prop4}
    For $\lambda>0$, $0< \eta \leq \frac{1}{\kappa^2 \max\{G'_+(0),C_G\}}$, and $T\in \mathbb{N}$. If Assumption \ref{assum:2} holds for $r\geq \frac{1}{2}$, we have
    \begin{equation}\label{eq:prop4}
        \begin{aligned}
        \left\|\bar{f}_{T+1,D}-f_\rho\right\|_\rho&\leq (1+\eta G'_+(0)\lambda T) (\mathcal{P}_{D,\lambda}+\mathcal{Q}_{D,\lambda}\|f_\rho\|_K)\\
        &\qquad+\sum_{i=1}^T \eta\left\|(\lambda I+L_K)^{\frac{1}{2}} \left(I-\eta G'_+(0) L_{K}\right)^{T-i}\right\| \sum_{j=1}^{m}\frac{|D_j|}{|D|}\left\|E_{i,D_j,\sigma}-E_{i,\sigma}\right\|_K\\
        &\qquad+\sum_{i=1}^T\eta G'_+(0) \left\|(\lambda I+L_K) \left(I-\eta G'_+(0) L_{K}\right)^{T-i}\right\|\mathcal{Q}_{D,\lambda}\|f_i-f_\rho\|_K+\max_{1\leq j\leq m} \mathcal{K}_{D_j,\lambda}\\
        &\qquad+\left(\frac{r}{e\eta G'_+(0)}\right)^{r} \|u_\rho\|_\rho T^{-r}+\sum_{i=1}^{T} \eta \left\|(\lambda I+L_K)^{\frac{1}{2}}\left(I-\eta G'_+(0) L_{K}\right)^{T-i}\right\| \|E_{i,\sigma}\|_K.
        \end{aligned}
    \end{equation}
\end{proposition}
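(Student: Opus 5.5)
Proposition \ref{prop:prop4} follows by splitting the error around the data-free iterate $f_{T+1}$ of \eqref{eq:ft1}. The triangle inequality in $L^2_{\rho_X}$ gives
\[
\left\|\bar{f}_{T+1,D}-f_\rho\right\|_\rho\leq \left\|\bar{f}_{T+1,D}-f_{T+1}\right\|_\rho+\left\|f_{T+1}-f_\rho\right\|_\rho .
\]
This is legitimate because Assumption \ref{assum:2} with $r\geq\frac12$ gives $f_\rho\in\mathcal{H}_K$, and every iterate lies in $\mathcal{H}_K$. The two pieces are then bounded separately by the two preceding propositions, so no new estimate is required.

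For the first term, the hypotheses of Proposition \ref{prop:prop4} ($\lambda>0$, the step-size restriction $0<\eta\leq \frac{1}{\kappa^2\max\{G'_+(0),C_G\}}$, $T\in\mathbb{N}$, and $r\geq\frac12$) are exactly those of Proposition \ref{prop:prop2}. Applying that proposition verbatim produces the first three lines of \eqref{eq:prop4}:
\begin{itemize}
\item the term $(1+\eta G'_+(0)\lambda T)(\mathcal{P}_{D,\lambda}+\mathcal{Q}_{D,\lambda}\|f_\rho\|_K)$;
\item the weighted sum of $\|E_{i,D_j,\sigma}-E_{i,\sigma}\|_K$;
\item the $\mathcal{Q}_{D,\lambda}\|f_i-f_\rho\|_K$ sum;
\item the term $\max_j\mathcal{K}_{D_j,\lambda}$.
\end{itemize}

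For the second term, we apply \eqref{eq:prop31} of Proposition \ref{prop:prop3} with $t=T+1$, which requires $T\geq1$. The factor $(t-1)^{-r}$ becomes $T^{-r}$. The sum $\sum_{i=1}^{t-1}$ becomes $\sum_{i=1}^{T}$, and the exponent $t-i-1$ becomes $T-i$. This gives exactly the last line of \eqref{eq:prop4}, and adding the two bounds completes the argument.

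The only step needing care is the bookkeeping: the index shift $t=T+1$ must match the operator powers in the final sum. We also need the $L^2_{\rho_X}$ bound in Proposition \ref{prop:prop3} to hold for every $\lambda>0$, which it does because $\|L_K^{1/2}A\|\leq\|(\lambda I+L_K)^{1/2}A\|$.
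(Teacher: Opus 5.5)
Your proposal is correct and is the same argument the paper uses. It applies the triangle inequality around $f_{T+1}$, bounds $\|\bar{f}_{T+1,D}-f_{T+1}\|_\rho$ by Proposition \ref{prop:prop2}, and bounds $\|f_{T+1}-f_\rho\|_\rho$ by \eqref{eq:prop31} of Proposition \ref{prop:prop3} with $t=T+1$; your index shift checks out.
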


\subsection{Error decomposition II for DKRGD}\label{sec:5.3}

As established in~\cite{lin2018distributed}, gradient descent can be interpreted as a special instance of spectral algorithms. To derive the generalization error bounds in expectation for DKRGD, we adopt the general error decomposition framework provided in~\cite{guoLearningTheoryDistributed2017}.
\begin{lemma}
    Let $\bar{f}_{T+1,D}=\sum_{j=1}^m \frac{|D_j|}{|D|}f_{T+1,D_j}$, we have
    \begin{equation}\label{eq:expectationeq}
        \begin{aligned}
        \mathbb{E}\left[\left\|\bar{f}_{T+1,D}-f_\rho\right\|_\rho^2\right] &\leq \sum_{j=1}^{m} \frac{|D_j|^2}{|D|^2} \mathbb{E}\left[\left\|f_{T+1,D_j}-f_\rho\right\|_\rho^2\right]+\sum_{j=1}^{m} \frac{|D_j|}{|D|} \left\|\mathbb{E}\left[f_{T+1,D_j}\right]-f_\rho\right\|_\rho^2.
        \end{aligned}
    \end{equation}
\end{lemma}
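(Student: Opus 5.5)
We prove the decomposition \eqref{eq:expectationeq} with a bias--variance argument in the Hilbert space $L^2_{\rho_X}$. The only structural facts needed are that the subsets $D_1,\dots,D_m$ are disjoint, so the local estimators $f_{T+1,D_j}$ are independent, and that the weights $w_j:=|D_j|/|D|$ are nonnegative and sum to one.

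\textbf{Step 1: centering.} Set $g_j := f_{T+1,D_j}-\mathbb{E}[f_{T+1,D_j}]$. Since $\sum_j w_j=1$,
\[
\bar f_{T+1,D}-f_\rho=\sum_{j=1}^m w_j g_j+\sum_{j=1}^m w_j\left(\mathbb{E}[f_{T+1,D_j}]-f_\rho\right).
\]
Expand $\mathbb{E}\|\cdot\|_\rho^2$ of this sum. The cross term between the random part and the deterministic part vanishes, because $\mathbb{E}[g_j]=0$ (a Bochner expectation). The expectations exist because $f_{T+1,D_j}$ is bounded in $\mathcal{H}_K$, hence in $L^2_{\rho_X}$: under Assumption \ref{assum:1} and the step-size condition, a bound analogous to Lemma \ref{lemma:lemma2} holds for the empirical iterates, so everything is integrable. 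We are left with two terms: $\mathbb{E}\|\sum_j w_j g_j\|_\rho^2$ and $\|\sum_j w_j(\mathbb{E}f_{T+1,D_j}-f_\rho)\|_\rho^2$.

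\textbf{Step 2: the variance term.} By independence of the $D_j$ and $\mathbb{E}g_j=0$, the cross terms satisfy $\mathbb{E}\langle g_j,g_k\rangle_\rho=\langle \mathbb{E}g_j,\mathbb{E}g_k\rangle_\rho=0$ for $j\neq k$. Hence
\[
\mathbb{E}\Bigl\|\sum_j w_j g_j\Bigr\|_\rho^2=\sum_j w_j^2\,\mathbb{E}\|g_j\|_\rho^2 .
\]
Next apply the variance inequality $\mathbb{E}\|g_j\|_\rho^2=\mathbb{E}\|f_{T+1,D_j}-f_\rho\|_\rho^2-\|\mathbb{E}f_{T+1,D_j}-f_\rho\|_\rho^2\le \mathbb{E}\|f_{T+1,D_j}-f_\rho\|_\rho^2$. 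This gives the first sum in \eqref{eq:expectationeq}.

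\textbf{Step 3: the bias term.} By convexity of $\|\cdot\|_\rho^2$ and $\sum_j w_j=1$ (Jensen's inequality),
\[
\Bigl\|\sum_j w_j(\mathbb{E}f_{T+1,D_j}-f_\rho)\Bigr\|_\rho^2\le\sum_j w_j\|\mathbb{E}f_{T+1,D_j}-f_\rho\|_\rho^2 ,
\]
which is the second sum in \eqref{eq:expectationeq}.

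The only point needing care is justifying the interchange of expectation and inner product. This is where the measurability and integrability of $f_{T+1,D_j}$ as an $L^2_{\rho_X}$-valued random variable are used; the deterministic boundedness of the iterates noted in Step 1 handles it. Everything else is elementary.
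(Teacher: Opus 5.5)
Your proof is correct: centering each local estimator, using independence of the disjoint subsamples to remove the cross terms, and dropping the nonpositive contribution $-\sum_{j}\frac{|D_j|^2}{|D|^2}\|\mathbb{E}[f_{T+1,D_j}]-f_\rho\|_\rho^2$ yield the first sum, and Jensen's inequality on the bias yields the second. This is the standard argument for this decomposition. The paper gives no proof of its own but imports the lemma from the framework of \cite{guoLearningTheoryDistributed2017}, which expands $\|\sum_j \frac{|D_j|}{|D|}(f_{T+1,D_j}-f_\rho)\|_\rho^2$ directly instead of centering first; the two computations are algebraically equivalent.
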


To estimate the first term on the right-hand side of (\ref{eq:expectationeq}), we introduce the following proposition.

\begin{proposition}\label{prop:prop6}
    For $\lambda>0$, $0< \eta \leq \frac{1}{\kappa^2 \max\{G'_+(0),C_G\}}$, and $T\in \mathbb{N}$. 
    If Assumption \ref{assum:2} holds for $r\geq \frac{1}{2}$, we have
    \begin{equation}\label{eq:prop6}
        \begin{aligned}
        \|f_{T+1,D_j}-f_\rho\|_\rho &\leq (1+\eta G'_+(0) \lambda T)\mathcal{A}_{D_j,\lambda}^2(\mathcal{P}_{D_j,\lambda}+\mathcal{Q}_{D_j,\lambda}\|f_\rho\|_K)\\
        &\qquad+\sum_{i=1}^T \eta \left\|(\lambda I+L_{K,D_j})^{\frac{1}{2}}    \left(I-\eta G'_+(0) L_{K,D_j}\right)^{T-i}\right\| \mathcal{A}_{D_j,\lambda}\left\|E_{i,D_j,\sigma}-E_{i,\sigma}\right\|_K\\
        &\qquad+\sum_{i=1}^T \eta G'_+(0) \left\|(\lambda I+L_{K,D_j})\left(I-\eta G'_+(0) L_{K,D_j}\right)^{T-i}\right\|\mathcal{A}_{D_j,\lambda}^2\mathcal{Q}_{D_j,\lambda}\|f_i-f_\rho\|_K\\
        &\qquad+\left(\frac{r}{e\eta G'_+(0)}\right)^{r} \|u_\rho\|_\rho T^{-r}+\sum_{i=1}^{T} \eta \left\|(\lambda I+L_K)^{\frac{1}{2}}\left(I-\eta G'_+(0) L_{K}\right)^{T-i}\right\| \|E_{i,\sigma}\|_K.
        \end{aligned}
    \end{equation}
\end{proposition}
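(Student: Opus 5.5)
Proposition \ref{prop:prop6} is Proposition \ref{prop:prop1} applied to a single local data set, combined with Proposition \ref{prop:prop3}. The plan is to apply the triangle inequality with the data-free iterate $f_{T+1}$ as the intermediate function:
\begin{equation*}
\|f_{T+1,D_j}-f_\rho\|_\rho\leq \|f_{T+1,D_j}-f_{T+1}\|_\rho+\|f_{T+1}-f_\rho\|_\rho ,
\end{equation*}
and to bound the two terms separately.

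For the first term, we use $\|f\|_\rho=\|L_K^{1/2}f\|_K$ for $f\in\mathcal{H}_K$. Since $\|L_K^{1/2}(\lambda I+L_K)^{-1/2}\|\leq 1$, this gives
\begin{equation*}
\|f_{T+1,D_j}-f_{T+1}\|_\rho\leq \left\|(\lambda I+L_K)^{\frac12}(f_{T+1,D_j}-f_{T+1})\right\|_K .
\end{equation*}
We then invoke Proposition \ref{prop:prop1} with $D$ replaced by $D_j$. This is legitimate because Proposition \ref{prop:prop1} holds for an arbitrary sample set, and its hypotheses on $\lambda$, $\eta$, $T$ and $r\geq\frac12$ coincide with those assumed here. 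The result reproduces exactly the first three lines of \eqref{eq:prop6}, namely the $\mathcal{A}_{D_j,\lambda}^2(\mathcal{P}_{D_j,\lambda}+\mathcal{Q}_{D_j,\lambda}\|f_\rho\|_K)$ term, the $E_{i,D_j,\sigma}-E_{i,\sigma}$ sum, and the $\|f_i-f_\rho\|_K$ sum.

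For the second term, we apply \eqref{eq:prop31} of Proposition \ref{prop:prop3} with $t=T+1$, which requires $t\geq 2$ and therefore $T\geq1$. This yields
\begin{equation*}
\left(\frac{r}{e\eta G'_+(0)}\right)^{r}\|u_\rho\|_\rho T^{-r}+\sum_{i=1}^{T}\eta\left\|(\lambda I+L_K)^{\frac12}\left(I-\eta G'_+(0)L_K\right)^{T-i}\right\|\|E_{i,\sigma}\|_K ,
\end{equation*}
which is the last line of \eqref{eq:prop6}. Adding the two bounds completes the argument.

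I expect no real obstacle here. The only points needing care are two checks. First, Proposition \ref{prop:prop1} must indeed be stated for a generic sample set $D$. Second, the norm conversion $\|\cdot\|_\rho\le\|(\lambda I+L_K)^{1/2}\cdot\|_K$ must be used in the right direction. The role of the proposition is that, unlike Proposition \ref{prop:prop4}, it bounds a single local estimator, which is what the first term of \eqref{eq:expectationeq} requires.
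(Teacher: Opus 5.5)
Your proposal is correct and matches the paper's proof step for step. The paper also splits the error through the data-free iterate $f_{T+1}$. It then bounds $\|f_{T+1,D_j}-f_{T+1}\|_\rho$ by $\|(\lambda I+L_K)^{\frac12}(f_{T+1,D_j}-f_{T+1})\|_K$ and applies \eqref{eq:max} with $D$ replaced by $D_j$, and it bounds $\|f_{T+1}-f_\rho\|_\rho$ by \eqref{eq:prop31} with $t=T+1$.
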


\begin{proof}
    We first decompose the error into the following two parts, 
    \[
        f_{T+1,D_j}-f_\rho=f_{T+1,D_j}-f_{T+1}+f_{T+1}-f_\rho.
    \]
    The estimates of $\|f_{T+1}-f_\rho\|_\rho$ is guaranteed by(\ref{eq:prop31}) in Proposition \ref{prop:prop3},
    \begin{equation*}
        \|f_{T+1}-f_\rho\|_\rho \leq \left(\frac{r}{e\eta G'_+(0)}\right)^{r} \|u_\rho\|_\rho T^{-r}+\sum_{i=1}^{T} \eta \left\|(\lambda I+L_K)^{\frac{1}{2}}\left(I-\eta G'_+(0) L_{K}\right)^{T-i}\right\| \|E_{i,\sigma}\|_K.
    \end{equation*}
    For the estimates of $\|f_{T+1,D_j}-f_{T+1}\|_\rho$, applying (\ref{eq:max}) with $D$ replaced by $D_j$ yields that  
    \begin{equation*}
        \begin{aligned}
        \|f_{T+1,D_j}-f_{T+1}\|_\rho & \leq\left\|(\lambda I+L_{K})^{\frac{1}{2}}(f_{T+1,D_j}-f_{T+1})\right\|_K\\
        &\leq (1+\eta G'_+(0) \lambda T)\mathcal{A}_{D_j,\lambda}^2(\mathcal{P}_{D_j,\lambda}+\mathcal{Q}_{D_j,\lambda}\|f_\rho\|_K)\\
        &\qquad+\sum_{i=1}^T \eta \left\|(\lambda I+L_{K,D_j})^{\frac{1}{2}}    \left(I-\eta G'_+(0) L_{K,D_j}\right)^{T-i}\right\| \mathcal{A}_{D_j,\lambda}\left\|E_{i,D_j,\sigma}-E_{i,\sigma}\right\|_K\\
        &\qquad+\sum_{i=1}^T \eta G'_+(0) \left\|(\lambda I+L_{K,D_j})\left(I-\eta G'_+(0) L_{K,D_j}\right)^{T-i}\right\|\mathcal{A}_{D_j,\lambda}^2\mathcal{Q}_{D_j,\lambda}\|f_i-f_\rho\|_K.\\
        \end{aligned}
    \end{equation*}
    Finally, the proof of Proposition \ref{prop:prop6} is completed by combining the bounds on $ \left\|f_{T+1,D_j}-f_{T+1}\right\|_\rho $ and $ \|f_{T+1}-f_\rho\|_\rho $  with the triangle inequality $\|f_{T+1,D_j}-f_\rho\|_\rho \leq \left\|f_{T+1,D_j}-f_{T+1}\right\|_\rho+\|f_{T+1}-f_\rho\|_\rho$.
\end{proof}

The second term on the right-hand side of (\ref{eq:expectationeq}) requires careful treatment. 
To establish a rigorous bound, we introduce a semi-supervised learning version of the local estimator $f_{T+1,D_j}$.
Specifically, define
$$
f_{T+1,D}^*:=\mathbb{E}^*\left[f_{T+1,D}\right]=\mathbb{E}\left[f_{T+1,D}|D(x)\right],
$$
where $\mathbb{E}^*$ denotes the conditional expectation with respect to the output $y$ given the fixed input data $D(x)=\{x_i\}_{i=1}^{|D|}$.
Since $\mathbb{E}^*[y_i]=f_\rho(x_i)$, it follows from \eqref{eq:algorithm1} that
\begin{equation}\label{eq:f*}
    f_{T+1,D}^* = \sum_{i=1}^{T} \eta G'_+(0) \left(I-\eta G'_+(0) L_{K,D}\right)^{T-i}L_{K,D} f_\rho+\sum_{i=1}^{T} \eta \left(I-\eta G'_+(0) L_{K,D}\right)^{T-i} \mathbb{E}^*[E_{i,D,\sigma}].
\end{equation}

For any fixed $ j \in \{ 1,2, \ldots m \} $, from Jensen's inequality, we have
\[
    \left\|\mathbb{E}[f_{T+1,D_j}]-f_\rho\right\|_\rho = \left\| \mathbb{E} [\mathbb{E}^*[f_{T+1,D_j}]-f_\rho] \right\|_\rho \leq \mathbb{E}\left[\left\|f_{T+1,D_j}^*-f_\rho\right\|_\rho\right].
\]
Hence, to estimate $\left\|\mathbb{E}[f_{T+1,D_j}]-f_\rho\right\|_\rho$, it suffices to derive the bound of $\left\|f_{T+1,D_j}^*-f_\rho\right\|_\rho$.
With $ D $ replaced by $ D_j $ in (\ref{eq:f*}), it follows from (\ref{eq:eqrg}) that 
\begin{equation*}
    \begin{aligned}
        f_{T+1,D_j}^*-f_\rho&=\left(\sum_{i=1}^{T} \eta G'_+(0) \left(I-\eta G'_+(0) L_{K,D_j}\right)^{T-i}L_{K,D_j}-I\right) f_\rho+\sum_{i=1}^{T} \eta \left(I-\eta G'_+(0) L_{K,D_j}\right)^{T-i}\mathbb{E}^*[E_{i,D_j,\sigma}]\\
        &=-\left(I-\eta G'_+(0) L_{K,D_j}\right)^{T}f_\rho+\sum_{i=1}^{T} \eta \left(I-\eta G'_+(0) L_{K,D_j}\right)^{T-i}\mathbb{E}^*[E_{i,D_j,\sigma}].
    \end{aligned}
\end{equation*}
Further, since (\ref{eq:regularity}) holds for $r\geq \frac{1}{2}$, we can derive the following error decomposition
\begin{align}\label{eq:expecf*}
    \left\|f_{T+1,D_j}^*-f_\rho\right\|_\rho & \leq  \left\|(\lambda I+L_K)^{\frac{1}{2}}\left(I-\eta G'_+(0) L_{K,D_j}\right)^{T} f_\rho\right\|_K 
    + \left\|\sum_{i=1}^{T} \eta (\lambda I+L_K)^{\frac{1}{2}} \left(I-\eta G'_+(0) L_{K,D_j}\right)^{T-i}\mathbb{E}^*[E_{i,D_j,\sigma}]\right\|_K\notag\\ 
    &\leq \mathcal{A}_{D_j,\lambda}\left\|(\lambda I+L_{K,D_j})^{\frac{1}{2}}\left(I-\eta G'_+(0) L_{K,D_j}\right)^T L_K^{r-\frac{1}{2}}\right\| \|u_\rho\|_\rho\notag\\ 
    & \qquad+\sum_{i=1}^{T} \eta \left\|(\lambda I+L_K)^{\frac{1}{2}}\left(I-\eta G'_+(0) L_{K,D_j}\right)^{T-i}\right\| \left\|\mathbb{E}^*[E_{i,D_j,\sigma}]\right\|_K.
\end{align}
Now we are in a position to estimate the above two terms separately, where the corresponding bounds are derived for different ranges of $r$.

\textbf{Case 1:}  $\frac{1}{2}\leq r\leq \frac{3}{2}$.
When $ \frac{1}{2} \leq r \leq 1 $, by rewriting $L_K^{r-\frac{1}{2}}$ as $L_K^{r-\frac{1}{2}}=(\lambda I+L_{K,D_j})^{r-\frac{1}{2}} (\lambda I+L_{K,D_j})^{-(r-\frac{1}{2})}(\lambda I+L_{K})^{r-\frac{1}{2}}(\lambda I+L_{K})^{-(r-\frac{1}{2})}L_K^{r-\frac{1}{2}}$, we can derive 
\begin{equation}\label{eq:expecr1}
    \begin{aligned}
        \left\|f_{T+1,D_j}^*-f_\rho\right\|_\rho &\leq \mathcal{A}_{D_j,\lambda}\biggl\|(\lambda I+L_{K,D_j})^{\frac{1}{2}} \left(I-\eta G'_+(0) L_{K,D_j}\right)^T (\lambda I+L_{K,D_j})^{r-\frac{1}{2}} \cdot \\
        & \qquad \qquad (\lambda I+L_{K,D_j})^{-(r-\frac{1}{2})}(\lambda I+L_{K})^{r-\frac{1}{2}}(\lambda I+L_{K})^{-(r-\frac{1}{2})}L_K^{r-\frac{1}{2}}\biggr\| \|u_\rho\|_\rho\\
        &\qquad+\sum_{i=1}^{T} \eta \left\|(\lambda I+L_K)^{\frac{1}{2}}\left(I-\eta G'_+(0) L_{K,D_j}\right)^{T-i}\right\| \left\|\mathbb{E}^*[E_{i,D_j,\sigma}]\right\|_K\\
        &\leq \left\|(\lambda I+L_{K,D_j})^r\left(I-\eta G'_+(0) L_{K,D_j}\right)^T \right\|\mathcal{A}_{D_j,\lambda}^{2r}\|u_\rho\|_\rho\\
        &\qquad+\sum_{i=1}^{T} \eta \left\|(\lambda I+L_K)^{\frac{1}{2}}\left(I-\eta G'_+(0) L_{K,D_j}\right)^{T-i}\right\| \left\|\mathbb{E}^*[E_{i,D_j,\sigma}]\right\|_K,
    \end{aligned}
\end{equation}
where the last inequality follows from the fact that for two positive operators $A,B$ on a Hilbert space and $s\in [0,1]$, there holds \cite{blanchard2010optimal}
\begin{equation}\label{eq:property1}
    \|A^sB^s\|\leq \|AB\|^s.
\end{equation}
When $1 < r\leq \frac{3}{2}$, considering that $ 2r-1>1 $, we can employ an approach analogous to the case of $\frac{1}{2}\leq r\leq 1$ to obtain
\begin{equation}\label{eq:expecr2}
    \begin{aligned}
        \left\|f_{T+1,D_j}^*-f_\rho\right\|_\rho &\leq \mathcal{A}_{D_j,\lambda}\left\|(\lambda I+L_{K,D_j})^{\frac{1}{2}} \left(I-\eta G'_+(0) L_{K,D_j}\right)^T (\lambda I+L_{K,D_j})^{r-\frac{1}{2}}\right\|\tilde{\mathcal{A}}_{D_j,\lambda}^{r-\frac{1}{2}}\|u_\rho\|_\rho \\
        &\qquad+\sum_{i=1}^{T} \eta \left\|(\lambda I+L_K)^{\frac{1}{2}}\left(I-\eta G'_+(0) L_{K,D_j}\right)^{T-i}\right\| \left\|\mathbb{E}^*[E_{i,D_j,\sigma}]\right\|_K\\
        &\leq \left\|(\lambda I+L_{K,D_j})^r\left(I-\eta G'_+(0) L_{K,D_j}\right)^T\right\|\tilde{\mathcal{A}}_{D_j,\lambda}^{r}\|u_\rho\|_\rho\\
        &\qquad+\sum_{i=1}^{T} \eta \left\|(\lambda I+L_K)^{\frac{1}{2}}\left(I-\eta G'_+(0) L_{K,D_j}\right)^{T-i}\right\| \left\|\mathbb{E}^*[E_{i,D_j,\sigma}]\right\|_K.
    \end{aligned}
\end{equation}
where 
\[
    \tilde{\mathcal{A}}_{D,\lambda}=\|(\lambda I+L_{K,D})^{-1}(\lambda I+L_{K})\|,
\]
and the last inequality holds by the result that $ \mathcal{A}_{D_j,\lambda} \leq \tilde{\mathcal{A}}^{\frac{1}{2}}_{D_j,\lambda} $.

\textbf{Case 2.} $ \frac{3}{2}<r\leq \frac{5}{2}$. 
Note that $ r-\frac{1}{2} >1 $, hence the property in (\ref{eq:property1}) cannot be applied directly. 
To address this issue, we adopt an operator decomposition technique as follows
\[
    (\lambda I+L_K)^{r-\frac{1}{2}}=(\lambda I+L_K)^{r-\frac{1}{2}}-(\lambda I+L_{K,D_j})^{r-\frac{1}{2}}+(\lambda I+L_{K,D_j})^{r-\frac{1}{2}}.
\]
By (\ref{eq:expecf*}), we have
\begin{equation}\label{eq:totalj01}
    \begin{aligned}
        \bigl\|f_{T+1,D_j}^*-f_\rho\bigr\|_\rho&\leq \mathcal{A}_{D_j,\lambda}\Bigl\|(\lambda I+L_{K,D_j})^{\frac{1}{2}} \left(I-\eta G'_+(0) L_{K,D_j}\right)^T \left((\lambda I+L_{K,D_j})^{r-\frac{1}{2}}-(\lambda I+L_K)^{r-\frac{1}{2}}\right)\Bigr\| \|u_\rho\|_\rho\\
        & \qquad +\mathcal{A}_{D_j,\lambda}\left\|(\lambda I+L_{K,D_j})^r\left(I-\eta G'_+(0) L_{K,D_j}\right)^T \right\| \|u_\rho\|_\rho \\
        &\qquad +\sum_{i=1}^{T} \eta \left\|(\lambda I+L_K)^{\frac{1}{2}}\left(I-\eta G'_+(0) L_{K,D_j}\right)^{T-i}\right\| \left\|\mathbb{E}^*[E_{i,D_j,\sigma}]\right\|_K.
    \end{aligned}
\end{equation}
To estimate the first term of the right-hand side of (\ref{eq:totalj01}), we introduce an operator decomposition in \cite{guoLearningTheoryDistributed2017}
\begin{equation}\label{eq:opdecom1}
    \begin{aligned}
        &(\lambda I+L_{K,D_j})^{r-\frac{1}{2}}-(\lambda I+L_K)^{r-\frac{1}{2}}\\
        =&(\lambda I+L_{K,D_j})\left( (\lambda I+L_{K,D_j})^{r-\frac{3}{2}}-(\lambda I+L_{K,D_j})^{-1}(\lambda I+L_K)^{r-\frac{1}{2}} \right) \\
        =&(\lambda I+L_{K,D_j})\left(((\lambda I+L_{K,D_j})^{r-\frac{3}{2}}-(\lambda I+L_K)^{r-\frac{3}{2}})+((\lambda I+L_{K})^{-1}-(\lambda I+L_{K,D_j})^{-1})(\lambda I+L_K)^{r-\frac{1}{2}}\right).
    \end{aligned}
\end{equation}
Combining (\ref{eq:opdecom1}) with (\ref{eq:totalj01}), we have
\begin{equation}\label{eq:totalj}
    \begin{aligned}
        \left\|f_{T+1,D_j}^*-f_\rho\right\|_\rho&\leq \mathcal{J}_{D_j,\lambda} + \mathcal{A}_{D_j,\lambda} \left\|(\lambda I+L_{K,D_j})^r \left(I-\eta G'_+(0) L_{K,D_j}\right)^T \right\| \|u_\rho\|_\rho\\
        &\qquad+\sum_{i=1}^{T} \eta \left\|(\lambda I+L_K)^{\frac{1}{2}}\left(I-\eta G'_+(0) L_{K,D_j}\right)^{T-i}\right\| \left\|\mathbb{E}^*[E_{i,D_j,\sigma}]\right\|_K,
    \end{aligned}
\end{equation}
where
\begin{equation*}
    \begin{aligned}
    \mathcal{J}_{D_j,\lambda}&:=\mathcal{A}_{D_j,\lambda}\biggl\|(\lambda I+L_{K,D_j})^{\frac{1}{2}}\left(I-\eta G'_+(0) L_{K,D_j}\right)^T(\lambda I+L_{K,D_j}) \cdot  \\
    & \qquad\qquad\left((\lambda I+L_{K,D_j})^{r-\frac{3}{2}}-(\lambda I+L_K)^{r-\frac{3}{2}}\right)\biggr\| \|u_\rho\|_\rho\\ 
    & \qquad+\mathcal{A}_{D_j,\lambda}\biggl\|(\lambda I+L_{K,D_j})^{\frac{1}{2}}\left(I-\eta G'_+(0) L_{K,D_j}\right)^T (\lambda I+L_{K,D_j}) \cdot \\
    & \qquad\qquad\left((\lambda I+L_{K})^{-1}-(\lambda I+L_{K,D_j})^{-1}\right)(\lambda I+L_K)^{r-\frac{1}{2}}\biggr\| \|u_\rho\|_\rho.
\end{aligned}
\end{equation*}

In the following, we will estimate $\mathcal{J}_{D_j,\lambda}$ for different ranges of $r$ to derive the error decomposition for $\|f_{T+1,D_j}^*-f_\rho\|_\rho$.
When $\frac{3}{2}<r\leq \frac{5}{2}$, we have $0 < r-\frac{3}{2}\leq 1$.
With the fact that
\begin{equation*}
    (\lambda I+L_{K,D_j})^{r-\frac{3}{2}}-(\lambda I+L_K)^{r-\frac{3}{2}}=(\lambda I+L_{K,D_j})^{r-\frac{3}{2}}\left(I-(\lambda I+L_{K,D_j})^{-(r-\frac{3}{2})}(\lambda I+L_K)^{r-\frac{3}{2}}\right),
\end{equation*}
and (\ref{eq:property1}), we can derive
\begin{equation}\label{eq:j1}
    \begin{aligned}
        \mathcal{J}_{D_j,\lambda}&\leq \mathcal{A}_{D_j,\lambda}\left\|(\lambda I+L_{K,D_j})^r \left(I-\eta G'_+(0) L_{K,D_j}\right)^T\right\|\left(1+\left\|(\lambda I+L_{K,D_j})^{-(r-\frac{3}{2})}(\lambda I+L_K)^{r-\frac{3}{2}}\right\|\right) \|u_\rho\|_\rho\\ 
        &\qquad+\mathcal{A}_{D_j,\lambda}^2 \left\|(\lambda I+L_{K,D_j})\left(I-\eta G'_+(0) L_{K,D_j}\right)^T \right\| \mathcal{Q}_{D_j,\lambda} \left\|(\lambda I+L_K)^{r-\frac{3}{2}}\right\| \|u_\rho\|_\rho\\
        &\leq \tilde{\mathcal{A}}_{D_j,\lambda}^{\frac{1}{2}}\left\|(\lambda I+L_{K,D_j})^r \left(I-\eta G'_+(0) L_{K,D_j}\right)^T\right\|\left(1+\tilde{\mathcal{A}}_{D_j,\lambda}^{r-\frac{3}{2}}\right) \|u_\rho\|_\rho\\ 
        &\qquad+\tilde{\mathcal{A}}_{D_j,\lambda} \left\|(\lambda I+L_{K,D_j}) \left(I-\eta G'_+(0) L_{K,D_j}\right)^T \right\| \mathcal{Q}_{D_j,\lambda} \left\|(\lambda I+L_K)^{r-\frac{3}{2}}\right\| \|u_\rho\|_\rho,
    \end{aligned}
\end{equation}
where the first inequality follows from the fact that
\[
    (\lambda I+L_{K,D_j})^{\frac{1}{2}}\left((\lambda I+L_{K})^{-1}-(\lambda I+L_{K,D_j})^{-1}\right)(\lambda I+L_K)=(\lambda I+L_{K,D_j})^{-\frac{1}{2}}(L_{K,D_j}-L_{K}).
\]
Putting (\ref{eq:j1}) into (\ref{eq:totalj}), we have
\begin{equation}\label{eq:expecr3}
    \begin{aligned}
        \bigl\|f_{T+1,D_j}^*-f_\rho\bigr\|_\rho & \leq \left(\tilde{\mathcal{A}}_{D_j,\lambda}^{\frac{1}{2}}+\tilde{\mathcal{A}}_{D_j,\lambda}^{r-1}\right)\left\|(\lambda I+L_{K,D_j})^r \left(I-\eta G'_+(0) L_{K,D_j}\right)^T\right\| \|u_\rho\|_\rho\\ 
        &\qquad+\tilde{\mathcal{A}}_{D_j,\lambda} \mathcal{Q}_{D_j,\lambda}\left\|(\lambda I+L_{K,D_j}) \left(I-\eta G'_+(0) L_{K,D_j}\right)^T \right\|\left\|(\lambda I+L_K)^{r-\frac{3}{2}}\right\| \|u_\rho\|_\rho\\
        &\qquad+ \tilde{\mathcal{A}}^{\frac{1}{2}}_{D_j,\lambda} \left\|(\lambda I+L_{K,D_j})^r \left(I-\eta G'_+(0) L_{K,D_j}\right)^T \right\| \|u_\rho\|_\rho\\
        &\qquad+\sum_{i=1}^{T} \eta \left\|(\lambda I+L_K)^{\frac{1}{2}}\left(I-\eta G'_+(0) L_{K,D_j}\right)^{T-i}\right\| \left\|\mathbb{E}^*[E_{i,D_j,\sigma}]\right\|_K.
    \end{aligned}
\end{equation}

\textbf{Case 3.} $ r>\frac{5}{2}$. 
Now we have $r-\frac{3}{2}>1$ and (\ref{eq:property1}) cannot be adopted directly. 
Lemma A.6 in \cite{blanchard2010optimal} states that for any two positive self-adjoint operators $A,B$ on a Hilbert space satisfying $\|A\| \leq c$ and $\|B\| \leq c$ for some constant $c>0$, there holds that for all $ s \geq 1$, 
\begin{equation*}
    \|A^s-B^s\|\leq s c^{s-1}\|A-B\|.
\end{equation*}
Recall that $\|L_{K}\|\leq \kappa^{2}$ and $\|L_{K,D_j}\|\leq \kappa^{2}$.
To estimate $ \mathcal{J}_{D_j,\lambda} $ in \eqref{eq:totalj}, we substitute $A=\lambda I+L_{K,D_j}$, $B=\lambda I+L_K$, and $s=r-\frac{3}{2}$ into the above inequality to yield
\begin{equation*}
    \begin{aligned}
        \mathcal{J}_{D_j,\lambda}&\leq \left(r-\frac{3}{2}\right) \kappa^{2r-5} \tilde{\mathcal{A}}_{D_j,\lambda}^{\frac{1}{2}}\left\|(\lambda I+L_{K,D_j})^{\frac{3}{2}} \left(I-\eta G'_+(0) L_{K,D_j}\right)^T \right\| \|L_K-L_{K,D_j}\| \|u_\rho\|_\rho\\ 
        & \qquad+\tilde{\mathcal{A}}_{D_j,\lambda} \left\|(\lambda I+L_{K,D_j}) \left(I-\eta G'_+(0) L_{K,D_j}\right)^T \right\| \mathcal{Q}_{D_j,\lambda} \left\|(\lambda I+L_K)^{r-\frac{3}{2}}\right\| \|u_\rho\|_\rho.
    \end{aligned}
\end{equation*}
By putting the above bound into (\ref{eq:totalj}), we have
\begin{equation}\label{eq:expecr4}
    \begin{aligned}
        \bigl\|f_{T+1,D_j}^*-f_\rho\bigr\|_\rho & \leq \left(r-\frac{3}{2}\right)  \kappa^{2r-5} \tilde{\mathcal{A}}_{D_j,\lambda}^{\frac{1}{2}} \left\|(\lambda I+L_{K,D_j})^{\frac{3}{2}} \left(I-\eta G'_+(0) L_{K,D_j}\right)^T \right\| \|L_K-L_{K,D_j}\| \|u_\rho\|_\rho\\ 
        &\qquad+\tilde{\mathcal{A}}_{D_j,\lambda} \mathcal{Q}_{D_j,\lambda}\left\|(\lambda I+L_{K,D_j}) \left(I-\eta G'_+(0) L_{K,D_j}\right)^T \right\|\left\|(\lambda I+L_K)^{r-\frac{3}{2}}\right\| \|u_\rho\|_\rho\\
        &\qquad+ \tilde{\mathcal{A}}^{\frac{1}{2}}_{D_j,\lambda} \left\|(\lambda I+L_{K,D_j})^r\left(I-\eta G'_+(0) L_{K,D_j}\right)^T \right\| \|u_\rho\|_\rho\\
        &\qquad+\sum_{i=1}^{T} \eta \left\|(\lambda I+L_K)^{\frac{1}{2}}\left(I-\eta G'_+(0) L_{K,D_j}\right)^{T-i}\right\| \left\|\mathbb{E}^*[E_{i,D_j,\sigma}]\right\|_K.
    \end{aligned}
\end{equation}
Thus, we have completed the error decomposition of $ \bigl\|f_{T+1,D_j}^*-f_\rho\bigr\|_\rho $.

\subsection{Error decomposition for DKRGD with communication}\label{sec:5.4}
In this subsection, we derive an error decomposition for DKRGD with communication based on (\ref{eq:ftl11}).
Firstly, we adopt $ f_{t+1,D} $ as an intermediate function to derive the following error decomposition 
$$
\left\|\bar{f}_{T+1,D}^l-f_\rho\right\|_\rho\leq \left\|\bar{f}_{T+1,D}^l-f_{T+1,D}\right\|_\rho + \left\|f_{T+1,D}-f_\rho\right\|_\rho.
$$
The bound on $ \left\|f_{T+1,D}-f_\rho\right\|_\rho $ has been derived in \cite{guo2018gradient} with the polynomial decay assumption on the eigenvalues of the integral operator $L_K$, in contrast to which we present a more general bound adapted to the capacity assumption as follows.
By substituting $ D_j $ with $ D $ in \eqref{eq:prop6}, we can follow a proof strategy analogous to that of Proposition \ref{prop:prop6} to derive 
\begin{equation}\label{eq:tt00}
    \begin{aligned}
        \|f_{T+1,D}-f_\rho\|_\rho &\leq (1+\eta G'_+(0) \lambda T)\mathcal{A}_{D,\lambda}^2(\mathcal{P}_{D,\lambda}+\mathcal{Q}_{D,\lambda}\|f_\rho\|_K)\\
        &\qquad+\sum_{i=1}^T \eta \left\|(\lambda I+L_{K,D})^{\frac{1}{2}}    \left(I-\eta G'_+(0) L_{K,D}\right)^{T-i}\right\| \mathcal{A}_{D,\lambda}\left\|E_{i,D,\sigma}-E_{i,\sigma}\right\|_K\\
        &\qquad+\sum_{i=1}^T \eta G'_+(0) \left\|(\lambda I+L_{K,D})\left(I-\eta G'_+(0) L_{K,D}\right)^{T-i}\right\|\mathcal{A}_{D,\lambda}^2\mathcal{Q}_{D,\lambda}\|f_i-f_\rho\|_K\\
        &\qquad+\left(\frac{r}{e\eta G'_+(0)}\right)^{r} \|u_\rho\|_\rho T^{-r}+\sum_{i=1}^{T} \eta \left\|(\lambda I+L_K)^{\frac{1}{2}}\left(I-\eta G'_+(0) L_{K}\right)^{T-i}\right\| \|E_{i,\sigma}\|_K.
    \end{aligned}
\end{equation}

Next, we will derive the error decomposition for $\left\|\bar{f}_{T+1,D}^l-f_{T+1,D}\right\|_\rho$.
Due to (\ref{eq:algorithm1}) and (\ref{eq:ftl11}), we get
\begin{equation*}
    \begin{aligned}
        f_{T+1,D}-\bar{f}_{T+1,D}^l= & -g_T(L_{K,D})\left[g_T^{-1}(L_{K,D})\bar{f}_{T+1,D}^{l-1}-\hat{f}_{K,D}\right]+\sum_{i=1}^T \eta  \left(I-\eta G'_+(0) L_{K,D}\right)^{T-i}E_{i,D,\sigma} \\
        &\qquad+\sum_{j=1}^m \frac{|D_j|}{|D|}g_T(L_{K,D_j})\left[g_T^{-1}(L_{K,D})\bar{f}_{T+1,D}^{l-1}-\hat{f}_{K,D}\right]\\
        &\qquad -\sum_{j=1}^m \frac{|D_j|}{|D|} g_T(L_{K,D_j}) g_T^{-1}(L_{K,D}) \sum_{i=1}^T \eta \left(I-\eta G'_+(0) L_{K,D}\right)^{T-i} E_{i,D,\sigma} \\
        =&\sum_{j=1}^m \frac{|D_j|}{|D|}\left(g_T(L_{K,D_j})-g_T(L_{K,D})\right)\left[g_T^{-1}(L_{K,D})\bar{f}_{T+1,D}^{l-1}-\hat{f}_{K,D}\right] \\
        &\qquad + \sum_{j=1}^m \frac{|D_j|}{|D|} \left(g_T(L_{K,D})-g_T(L_{K,D_j})\right)g_T^{-1}(L_{K,D}) \sum_{i=1}^T \eta \left(I-\eta G'_+(0) L_{K,D}\right)^{T-i}E_{i,D,\sigma}. 
    \end{aligned}
\end{equation*}
In order to simplify the above decomposition and represent $f_{T+1,D}-\bar{f}_{T+1,D}^l$ by $f_{T+1,D}-\bar{f}_{T+1,D}^{l-1}$ or $\bar{f}_{T+1,D}^{l-1}-f_{T+1,D}$, we rewrite $g_T^{-1}(L_{K,D})\bar{f}_{T+1,D}^{l-1}-\hat{f}_{K,D}$ as 
\begin{equation*}
    \begin{aligned}
        g_T^{-1}(L_{K,D})\bar{f}_{T+1,D}^{l-1}-\hat{f}_{K,D}&=g_T^{-1}(L_{K,D})\left[\bar{f}_{T+1,D}^{l-1}-g_T(L_{K,D})\hat{f}_{K,D}\right]\\
        &=g_T^{-1}(L_{K,D})\left[\bar{f}_{T+1,D}^{l-1}-f_{T+1,D}+f_{T+1,D}-g_T(L_{K,D})\hat{f}_{K,D}\right]\\
        &=g_T^{-1}(L_{K,D})\left[\bar{f}_{T+1,D}^{l-1}-f_{T+1,D}+\sum_{i=1}^T \eta  \left(I-\eta G'_+(0) L_{K,D}\right)^{T-i}E_{i,D,\sigma}\right],
    \end{aligned}
\end{equation*}
and further derive that
\begin{equation}\label{eq:tt}
    \begin{aligned}
        f_{T+1,D}-\bar{f}_{T+1,D}^l&=\sum_{j=1}^m \frac{|D_j|}{|D|}\left(g_T(L_{K,D_j})-g_T(L_{K,D})\right)g_T^{-1}(L_{K,D}) \left[\bar{f}_{T+1,D}^{l-1}-f_{T+1,D}\right] \\
        &\qquad+\sum_{j=1}^m \frac{|D_j|}{|D|}\left(g_T(L_{K,D_j})-g_T(L_{K,D})\right)g_T^{-1}(L_{K,D})\sum_{i=1}^T \eta  \left(I-\eta G'_+(0) L_{K,D}\right)^{T-i}E_{i,D,\sigma} \\
        &\qquad+ \sum_{j=1}^m \frac{|D_j|}{|D|} \left(g_T(L_{K,D})-g_T(L_{K,D_j})\right)g_T^{-1}(L_{K,D}) \sum_{i=1}^T \eta \left(I-\eta G'_+(0) L_{K,D}\right)^{T-i}E_{i,D,\sigma} \\
        &=\sum_{j=1}^m \frac{|D_j|}{|D|}\left(g_T(L_{K,D_j})-g_T(L_{K,D})\right)g_T^{-1}(L_{K,D})\left(\bar{f}_{T+1,D}^{l-1}-f_{T+1,D}\right). 
    \end{aligned}
\end{equation}
Note that it follows from the definition (\ref{eq:gt1}) of $ g_T(\cdot ) $ that
\begin{equation}\label{eq:358}
    \begin{aligned}
    &g_T(L_{K,D_j})-g_T(L_{K,D})\\
    =&\sum_{i=1}^T \eta G'_+(0)\left(I-\eta G'_+(0) L_{K,D_j}\right)^{T-i}-\sum_{i=1}^T \eta G'_+(0)\left(I-\eta G'_+(0) L_{K,D}\right)^{T-i}\\
    =&\sum_{i=1}^T \eta G'_+(0)\left(\left(I-\eta G'_+(0) L_{K,D_j}\right)^{T-i} - \left(I-\eta G'_+(0) L_{K,D}\right)^{T-i}\right)
    \end{aligned}
\end{equation}
To obtain a more refined estimate of $f_{T+1,D}-\bar{f}_{T+1,D}^l$, we need to adjust $\left(I-\eta G'_+(0) L_{K,D_j}\right)^{T-i} - \left(I-\eta G'_+(0) L_{K,D}\right)^{T-i}$ in (\ref{eq:358}).
To this end, we first introduce the following property that for two operators $X,Y$ and $n\in\mathbb{N}_+$, there holds
\begin{equation*}
    \begin{aligned}
    X^n-Y^n&=X^n-YX^{n-1}+YX^{n-1}-Y^2X^{n-2}+\cdots+Y^{n-1}X-Y^n\\
    &=(X-Y)X^{n-1}+Y(X-Y)X^{n-2}+\cdots+Y^{n-1}(X-Y)\\
    &=\cdots =\sum_{k=1}^{n} Y^{k-1} (X-Y) X^{n-k}.
    \end{aligned}
\end{equation*}
By applying the above property to $X=I-\eta G'_+(0) L_{K,D_j}$, $Y=I-\eta G'_+(0) L_{K,D}$ and $n=T-i$, we can deduce that
\begin{equation*}
    \begin{aligned}
    &\left(I-\eta G'_+(0) L_{K,D_j}\right)^{T-i} - \left(I-\eta G'_+(0) L_{K,D}\right)^{T-i}\\
    =&\sum_{k=1}^{T-i}\eta G'_+(0) \left(I-\eta G'_+(0) L_{K,D}\right)^{k-1} \left(L_{K,D}-L_{K,D_j}\right)\left(I-\eta G'_+(0) L_{K,D_j}\right)^{T-i-k} \\
    =&\sum_{k=i+1}^{T} \eta G'_+(0) \left(I-\eta G'_+(0) L_{K,D}\right)^{k-i-1} \left(L_{K,D}-L_{K,D_j}\right) \left(I-\eta G'_+(0) L_{K,D_j}\right)^{T-k}.
    \end{aligned}
\end{equation*}
Based on the results above, we can obtain an estimate of $\left\|\left(g_T(L_{K,D_j})-g_T(L_{K,D})\right)g_T^{-1}(L_{K,D})\right\|$ as follows
\begin{equation*}
    \begin{aligned}
        &\left\|\left(g_t(L_{K,D_j})-g_t(L_{K,D})\right)g_T^{-1}(L_{K,D})\right\|
        =\left\|g_T^{-1}(L_{K,D})\left(g_t(L_{K,D_j})-g_t(L_{K,D})\right)\right\| \\
        =&\left\|g_T^{-1}(L_{K,D})\sum_{i=1}^T \eta G'_+(0)\left(\left(I-\eta G'_+(0) L_{K,D_j}\right)^{T-i} - \left(I-\eta G'_+(0) L_{K,D}\right)^{T-i}\right)\right\| \\
        =&\left\|g_T^{-1}(L_{K,D})\sum_{i=1}^{T} \eta G'_+(0)  \sum_{k=i+1}^{T} \eta G'_+(0) \left(I-\eta G'_+(0) L_{K,D}\right)^{k-i-1} (L_{K,D}-L_{K,D_j}) \left(I-\eta G'_+(0) L_{K,D_j}\right)^{T-k} \right\|,
    \end{aligned}
\end{equation*}
where the first equality follows from the fact that $g_T(L_{K,D_j})-g_T(L_{K,D})$ and $g_T^{-1}(L_{K,D})$ are self-adjoint operators.
Furthermore, we can interchange the order of summation to obtain
\begin{equation*}
    \begin{aligned}
        &\left\|\left(g_t(L_{K,D_j})-g_t(L_{K,D})\right)g_T^{-1}(L_{K,D})\right\| \\
        =&\Biggl\|\sum_{k=2}^{T} \eta G'_+(0) g_T^{-1}(L_{K,D}) \sum_{i=1}^{k-1} \eta G'_+(0) \left(I-\eta G'_+(0) L_{K,D}\right)^{k-i-1} (L_{K,D}-L_{K,D_j}) \left(I-\eta G'_+(0) L_{K,D_j}\right)^{T-k} \Biggr\| \\
        \leq & \sum_{k=2}^{T} \eta G'_+(0) \Biggl\| g_T^{-1}(L_{K,D})   \sum_{i=1}^{k-1} \eta G'_+(0) \left(I-\eta G'_+(0) L_{K,D}\right)^{k-i-1}(L_{K,D}-L_{K,D_j}) \left(I-\eta G'_+(0) L_{K,D_j}\right)^{T-k} \Biggr\|\\
        \leq &\sum_{k=2}^{T} \eta G'_+(0) \left\| g_T^{-1}(L_{K,D})   \sum_{i=1}^{k-1} \eta G'_+(0) \left(I-\eta G'_+(0) L_{K,D}\right)^{k-i-1}\right\| \left\|(L_{K,D}-L_{K,D_j}) \left(I-\eta G'_+(0) L_{K,D_j}\right)^{T-k} \right\| \\
        \leq &\sum_{k=2}^{T} \eta G'_+(0)\left\|(L_{K,D}-L_{K,D_j}) \left(I-\eta G'_+(0) L_{K,D_j}\right)^{T-k} \right\|,
    \end{aligned}
\end{equation*}
where the last inequality follows from the fact that for any $ 2 \leq k \leq T $,
\[
    \left\| g_T^{-1}(L_{K,D})   \sum_{i=1}^{k-1} \eta G'_+(0) \left(I-\eta G'_+(0) L_{K,D}\right)^{k-i-1}\right\|\leq 1.
\]
Since $L_{K,D}-L_{K,D_j}$ can be decomposed as $L_{K,D}-L_K+L_K-L_{K,D_j}$, we see
\begin{align}\label{eq:gtgt}
    &\left\|\left(g_t(L_{K,D_j})-g_t(L_{K,D})\right)g_T^{-1}(L_{K,D})\right\| \notag \\ 
    \leq &\sum_{i=1}^{T} \eta G'_+(0)\left\|(L_{K,D}-L_{K})(\lambda I+L_{K})^{-\frac{1}{2}}\right\| \left\|(\lambda I+L_{K})^{\frac{1}{2}}(\lambda I+L_{K,D_j})^{-\frac{1}{2}}\right\| \left\|(\lambda I+L_{K,D_j})^{\frac{1}{2}}\left(I-\eta G'_+(0) L_{K,D_j}\right)^{T-i} \right\| \notag \\
    &+\sum_{i=1}^{T} \eta G'_+(0)\left\|(L_{K}-L_{K,D_j})(\lambda I+L_{K})^{-\frac{1}{2}}\right\| \left\|(\lambda I+L_{K})^{\frac{1}{2}}(\lambda I+L_{K,D_j})^{-\frac{1}{2}}\right\| \left\|(\lambda I+L_{K,D_j})^{\frac{1}{2}}\left(I-\eta G'_+(0) L_{K,D_j}\right)^{T-i} \right\| \notag \\
    = & (\mathcal{Q}_{D,\lambda}+ \mathcal{Q}_{D_j,\lambda}) \mathcal{A}_{D_j,\lambda} \sum_{i=1}^{T} \eta G'_+(0) \left\|(\lambda I+L_{K,D_j})^{\frac{1}{2}} \left(I-\eta G'_+(0) L_{K,D_j}\right)^{T-i} \right\|. 
\end{align}
Finally, it can be derived by combining (\ref{eq:tt}) and (\ref{eq:gtgt}) that
\begin{equation}\label{eq:communication}
    \begin{aligned}
        & \left\|f_{T+1,D}-\bar{f}_{T+1,D}^l\right\|_\rho \leq  \sum_{j=1}^m \frac{|D_j|}{|D|}\left\|\left(g_T(L_{K,D_j})-g_T(L_{K,D})\right)g_T^{-1}(L_{K,D})\right\| \left\|\bar{f}_{T+1,D}^{l-1}-f_{T+1,D}\right\|_\rho\\
        \leq &\left\{\sum_{j=1}^m \frac{|D_j|}{|D|}(\mathcal{Q}_{D,\lambda}+ \mathcal{Q}_{D_j,\lambda}) \mathcal{A}_{D_j,\lambda} \sum_{i=1}^{T} \eta G'_+(0) \left\|(\lambda I+L_{K,D_j})^{\frac{1}{2}} \left(I-\eta G'_+(0) L_{K,D_j}\right)^{T-i} \right\|\right\} \left\|\bar{f}_{T+1,D}^{l-1}-f_{T+1,D}\right\|_\rho\\
        \leq& \left\{\sum_{j=1}^m \frac{|D_j|}{|D|}(\mathcal{Q}_{D,\lambda}+ \mathcal{Q}_{D_j,\lambda}) \mathcal{A}_{D_j,\lambda} \sum_{i=1}^{T} \eta G'_+(0) \left\|(\lambda I+L_{K,D_j})^{\frac{1}{2}} \left(I-\eta G'_+(0) L_{K,D_j}\right)^{T-i} \right\|\right\}^l \left\|\bar{f}_{T+1,D}^{0}-f_{T+1,D}\right\|_\rho.
    \end{aligned}
\end{equation}
Recall that $ \bar{f}_{T+1,D}^{0}= \bar{f}_{T+1,D} $.
This decomposes the bound on $ \left\|f_{T+1,D}-\bar{f}_{T+1,D}^l\right\|_\rho $ into the product of an operator norm depending on $ l $ and the term $ \left\|\bar{f}_{T+1,D}-f_{T+1,D}\right\|_\rho $.
The latter can be further bounded via Proposition \ref{prop:prop2}.
Finally, the error decomposition for $ \left\|f_{T+1,D}-f_{\rho}\right\|_\rho $ is completed by combining (\ref{eq:tt00}) and (\ref{eq:communication}).

\section{Proofs of Main Results}\label{section:section5}
In this section, we present the proofs of the main results in Section \ref{section:section2}.
To this end, we need to introduce some preliminary estimates on the quantities defined in Section \ref{section:section4}, which are demonstrated in the following lemmas.
\subsection{Preliminary Lemmas}
\begin{lemma}\label{lemma:lemma3}
    Let $\{z_i=(x_i,y_i)\}_{i=1}^{|D|}$ be drawn independently according to $\rho$ and $0<\delta <1$.
    Under Assumption \ref{assum:1}, each of the following estimates holds with confidence at least $1-\delta$:
    \begin{align}
        &\mathcal{A}_{D,\lambda} = \left\|(\lambda I+L_K)^{\frac{1}{2}}(\lambda I+L_{K,D})^{-\frac{1}{2}}\right\| \leq \left(\frac{\mathcal{B}_{|D|,\lambda}}{\sqrt{\lambda}}+1\right)\log \frac{2}{\delta},  \label{eq:bound1} \\
        &\tilde{\mathcal{A}}_{D,\lambda}=\left\|(\lambda I+L_K)(\lambda I+L_{K,D})^{-1}\right\| \leq \left(\frac{\mathcal{B}_{|D|,\lambda}}{\sqrt{\lambda}}+1\right)^2\left(\log \frac{2}{\delta}\right)^2, \label{eq:bound110} \\
        &\mathcal{P}_{D,\lambda}=\left\|(\lambda I+L_K)^{-\frac{1}{2}}(\hat{f}_{K,D}-L_K f_\rho)\right\|_K\leq \mathcal{B}_{|D|,\lambda}\log \frac{2}{\delta}, \label{eq:bound2} \\
        &\mathcal{Q}_{D,\lambda}=\left\|(\lambda I+L_{K})^{-\frac{1}{2}}(L_K-L_{K,D})\right\|\leq \mathcal{B}_{|D|,\lambda}\log \frac{2}{\delta}, \label{eq:bound3} \\
        &\|L_K-L_{K,D}\|\leq \|L_K-L_{K,D}\|_{HS}\leq \frac{4\kappa^2}{\sqrt{|D|}}\log \frac{2}{\delta}, \label{eq:bound410}
    \end{align}
    where $\mathcal{B}_{|D|,\lambda}=\frac{2\kappa}{\sqrt{|D|}}\left\{\frac{\kappa}{\sqrt{|D|\lambda}}+\sqrt{\mathcal{N}(\lambda)}\right\}$ and $\|\cdot \|_{HS}$ denotes the norm of $ \mathrm{HS}(\mathcal{H}_{K})$, the Hilbert space of all Hilbert-Schmidt operators on $\mathcal{H}_K$.
\end{lemma}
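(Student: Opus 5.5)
Each of the five estimates in Lemma \ref{lemma:lemma3} will follow from a concentration inequality for sums of i.i.d. random elements of a Hilbert space, plus one operator inequality. I would use the Pinelis--Bernstein inequality: if $\xi_1,\dots,\xi_n$ are i.i.d. random variables in a Hilbert space with $\|\xi_i\|\le \tilde M$ and $\mathbb{E}\|\xi_i\|^2\le \Sigma^2$, then with confidence $1-\delta$,
\[
\Big\|\frac1n\sum_{i=1}^n(\xi_i-\mathbb{E}\xi_i)\Big\|\le \frac{2\tilde M\log(2/\delta)}{n}+\sqrt{\frac{2\Sigma^2\log(2/\delta)}{n}}.
\]
Since $\log(2/\delta)\ge \log 2$, the square-root term can be absorbed into a linear $\log(2/\delta)$ factor. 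The plan is to treat \eqref{eq:bound2}, \eqref{eq:bound3} and \eqref{eq:bound410} directly with this inequality, and then derive \eqref{eq:bound1} and \eqref{eq:bound110} from \eqref{eq:bound3}.

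For \eqref{eq:bound2}, take $\xi_i=y_i(\lambda I+L_K)^{-1/2}K_{x_i}\in\mathcal H_K$, whose mean is $(\lambda I+L_K)^{-1/2}L_Kf_\rho$. Two bounds are needed.
\begin{itemize}
\item Since $\|(\lambda I+L_K)^{-1/2}\|\le\lambda^{-1/2}$ and $\|K_x\|_K\le\kappa$, we get $\|\xi_i\|_K\le M\kappa/\sqrt\lambda$.
\item $\mathbb{E}\|\xi_i\|_K^2\le M^2\,\mathbb{E}\langle(\lambda I+L_K)^{-1}K_x,K_x\rangle_K=M^2\,\mathrm{Tr}((\lambda I+L_K)^{-1}L_K)=M^2\mathcal N(\lambda)$.
\end{itemize}
Plugging these in gives $\frac{2M\kappa}{|D|\sqrt\lambda}\log\frac2\delta+\sqrt{\frac{2M^2\mathcal N(\lambda)\log(2/\delta)}{|D|}}$, which is at most a constant multiple of $\mathcal B_{|D|,\lambda}\log\frac2\delta$. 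The stated constant has no $M$, so I expect the intended statement to carry a factor $M$ (or to take $M\le\kappa$ after normalization). I would keep $M$ explicitly, since only the order of the bound matters later.

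For \eqref{eq:bound3} and \eqref{eq:bound410}, apply the same inequality in $\mathrm{HS}(\mathcal H_K)$.
\begin{itemize}
\item For \eqref{eq:bound3}, take $\xi_i=(\lambda I+L_K)^{-1/2}K_{x_i}\otimes K_{x_i}$. Its Hilbert--Schmidt norm is $\|(\lambda I+L_K)^{-1/2}K_{x_i}\|_K\,\kappa\le\kappa^2/\sqrt\lambda$, and its second moment is bounded by $\kappa^2\mathcal N(\lambda)$. The operator norm is dominated by the HS norm, and this yields exactly $\mathcal B_{|D|,\lambda}\log\frac2\delta$.
\item For \eqref{eq:bound410}, take $\xi_i=K_{x_i}\otimes K_{x_i}$, with $\|\xi_i\|_{HS}\le\kappa^2$. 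Here $\|L_K-L_{K,D}\|\le\|L_K-L_{K,D}\|_{HS}$ trivially, and the concentration bound gives $\frac{2\kappa^2\log(2/\delta)}{|D|}+\kappa^2\sqrt{2\log(2/\delta)/|D|}\le\frac{4\kappa^2}{\sqrt{|D|}}\log\frac2\delta$.
\end{itemize}

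The main step is \eqref{eq:bound1}, which should follow from \eqref{eq:bound3} without a new concentration argument. Write
\[
(\lambda I+L_{K,D})^{-1}(\lambda I+L_K)=I+(\lambda I+L_{K,D})^{-1}(L_K-L_{K,D}).
\]
Taking norms,
\[
\tilde{\mathcal A}_{D,\lambda}\le 1+\|(\lambda I+L_{K,D})^{-1}(\lambda I+L_K)^{1/2}\|\,\mathcal Q_{D,\lambda}\le 1+\lambda^{-1/2}\mathcal A_{D,\lambda}\mathcal Q_{D,\lambda}.
\]
We also have $\mathcal A_{D,\lambda}^2=\|(\lambda I+L_K)^{1/2}(\lambda I+L_{K,D})^{-1}(\lambda I+L_K)^{1/2}\|\le\tilde{\mathcal A}_{D,\lambda}$ (the order of the factors does not matter, by the adjoint argument used in the proof of Proposition \ref{prop:prop1}). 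Combining the two gives $\mathcal A^2\le 1+\lambda^{-1/2}\mathcal Q\,\mathcal A$. Solving this quadratic inequality gives $\mathcal A\le 1+\mathcal Q/\sqrt\lambda$, and inserting \eqref{eq:bound3} yields \eqref{eq:bound1}, using $\log\frac2\delta\ge\log2$ to absorb the constant term. For \eqref{eq:bound110}, the display above gives $\tilde{\mathcal A}\le 1+\lambda^{-1/2}\mathcal Q\,\mathcal A\le(1+\mathcal Q/\sqrt\lambda)^2$, which is the squared bound.

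The constant bookkeeping here is the delicate part: matching exactly the factor $(\log\frac2\delta)$ rather than $\max\{1,\log\frac2\delta\}$ requires $\log\frac2\delta\ge1$. If that fails for $\delta$ near $1$, I would simply note that the bound holds up to an absolute constant.
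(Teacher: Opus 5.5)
Your argument is correct. It is the standard one that the paper does not reproduce but only cites \cite{caponnetto2007optimal,lin2017distributed,guoLearningTheoryDistributed2017}. You apply Pinelis--Bernstein concentration in $\mathcal H_K$ and in $\mathrm{HS}(\mathcal H_K)$ to get \eqref{eq:bound2}, \eqref{eq:bound3} and \eqref{eq:bound410}. You then obtain \eqref{eq:bound1} and \eqref{eq:bound110} by a deterministic operator argument on the event of \eqref{eq:bound3}, so the confidence level is not degraded.

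One step needs a different justification. Write $A=\lambda I+L_{K,D}$ and $B=\lambda I+L_K$. The inequality $\mathcal A_{D,\lambda}^2=\|B^{1/2}A^{-1}B^{1/2}\|\le\|BA^{-1}\|=\tilde{\mathcal A}_{D,\lambda}$ does not follow from the adjoint trick in Proposition \ref{prop:prop1}. Moving $B^{1/2}$ from one side of $B^{1/2}A^{-1}$ to the other involves a non-self-adjoint factor, and that move preserves only the spectral radius, not the norm. The inequality is nevertheless true, for either of two reasons:
\begin{itemize}
\item the Cordes inequality \eqref{eq:property1} with $s=\frac12$;
\item $B^{1/2}A^{-1}B^{1/2}$ is positive and similar to $BA^{-1}$, so its norm equals a spectral radius that is at most $\|BA^{-1}\|$.
\end{itemize}
You can also avoid it entirely: $B^{1/2}A^{-1}B^{1/2}=I+B^{1/2}A^{-1}(B-A)B^{-1/2}$ gives $\mathcal A^2\le 1+\lambda^{-1/2}\mathcal A\,\mathcal Q$ directly.

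Your two caveats are not weaknesses of your proof; they are genuine defects of the statement.
\begin{itemize}
\item Bound \eqref{eq:bound2} is homogeneous of degree one in $y$, while $\mathcal B_{|D|,\lambda}$ does not depend on $M$. So it cannot hold uniformly in $M$; the correct bound is $\frac{M}{\kappa}\mathcal B_{|D|,\lambda}\log\frac2\delta$, exactly as you found.
\item Bounds \eqref{eq:bound1} and \eqref{eq:bound110} are false for $\delta>2/e$. When $\mathcal H_K$ is infinite-dimensional, $L_{K,D}$ has a nontrivial kernel, and testing on that kernel forces $\mathcal A_{D,\lambda}\ge 1$. Meanwhile the right-hand side tends to $\log\frac2\delta<1$ as $\mathcal B_{|D|,\lambda}/\sqrt\lambda\to 0$.
\end{itemize}
By contrast, \eqref{eq:bound3} and \eqref{eq:bound410} hold exactly for every $\delta\in(0,1)$. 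Absorbing the square-root term only requires $\sqrt{2\log\frac2\delta}\le 2\log\frac2\delta$, i.e.\ $\log\frac2\delta\ge\frac12$, which always holds because of the factor $2$ in $\mathcal B_{|D|,\lambda}$. The condition $\log\frac2\delta\ge 1$ that you mention is needed only to absorb the additive constant $1$ in \eqref{eq:bound1} and \eqref{eq:bound110}.
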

The aforementioned bounds are well-established in the literature. 
Specifically, the proofs for (\ref{eq:bound1}), (\ref{eq:bound3}), and (\ref{eq:bound410}) follow from \cite{caponnetto2007optimal,lin2017distributed,zhou2002covering}, while (\ref{eq:bound110}) and (\ref{eq:bound2}) can be found in \cite{guoLearningTheoryDistributed2017} and \cite{caponnetto2007optimal}, respectively.
It is crucial to note that the high-probability bounds in Lemma \ref{lemma:lemma3} do not hold simultaneously.
Consequently, we must invoke the union bound in the subsequent analysis to control the overall failure probability.
This necessity is reflected in the accumulation of constants within the logarithmic terms of the final error bound.

By leveraging a new concentration inequality for self-adjoint operators in \cite{minsker2017some}, the following bound on $ \mathcal{R}_{D,\lambda} $ in Proposition \ref{prop:prop2} are proved in \cite{zhou2020distributed}, which contributes to loosening the restriction on the number of local machines $m$.
\begin{lemma}\label{lemma:lemma4}
    Let $0<\delta\leq 1$, if $0<\lambda\leq 1$ and $\mathcal{N}(\lambda)>1$, then there holds with confidence at least $1-\delta$ that
    \begin{equation*}
        \begin{aligned}
        \mathcal{R}_{D,\lambda}=&\|(\lambda I+L_{K})^{-\frac{1}{2}}(L_K-L_{K,D})(\lambda I+L_{K})^{-\frac{1}{2}}\|\\
        \leq& \max\left\{\frac{\kappa^2+1}{3},2\sqrt{\kappa^2+1}\right\}\left\{\frac{1+\log \mathcal{N}(\lambda)}{\lambda |D|}+\sqrt{\frac{1+\log \mathcal{N}(\lambda)}{\lambda |D|}}\right\}\log \frac{4}{\delta}\\
        =:&\mathcal{R}_{|D|,\lambda}\log \frac{4}{\delta}.
        \end{aligned}
    \end{equation*}        
\end{lemma}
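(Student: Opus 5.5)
The plan is to treat $\mathcal{R}_{D,\lambda}$ as the norm of a sum of i.i.d. centered self-adjoint random operators and apply Minsker's dimension-free Bernstein inequality \cite{minsker2017some}, in which the ambient dimension is replaced by an intrinsic dimension. Concretely, I write
\[
(\lambda I+L_K)^{-\frac12}(L_K-L_{K,D})(\lambda I+L_K)^{-\frac12}=\frac{1}{|D|}\sum_{i=1}^{|D|}\bigl(\mathbb{E}[\xi]-\xi_i\bigr),\qquad \xi_i=(\lambda I+L_K)^{-\frac12}(K_{x_i}\otimes K_{x_i})(\lambda I+L_K)^{-\frac12}.
\]
Here $\mathbb{E}[\xi]=(\lambda I+L_K)^{-1}L_K$, so the summands are centered.

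Next I verify the two hypotheses of Minsker's inequality.

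\textbf{Uniform bound.} $\xi_i$ has rank one with norm $\|(\lambda I+L_K)^{-1/2}K_{x_i}\|_K^2\le\kappa^2/\lambda$. Hence $\|\xi_i-\mathbb{E}\xi\|\le(\kappa^2+1)/\lambda$, using $\|\mathbb{E}\xi\|\le1\le1/\lambda$ since $\lambda\le1$.

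\textbf{Variance bound.} Since $\xi^2\preceq(\kappa^2/\lambda)\,\xi$, we get
\[
\mathbb{E}(\xi-\mathbb{E}\xi)^2\preceq\mathbb{E}\xi^2\preceq\frac{\kappa^2}{\lambda}(\lambda I+L_K)^{-1}L_K=:V.
\]
Then $\|V\|\le\kappa^2/\lambda$, and the intrinsic dimension satisfies $\mathrm{Tr}(V)/\|V\|=\mathcal{N}(\lambda)/\|(\lambda I+L_K)^{-1}L_K\|$. Because $\mathcal{N}(\lambda)>1$ forces $\|L_K\|>\lambda$, we have $\|(\lambda I+L_K)^{-1}L_K\|=\|L_K\|/(\lambda+\|L_K\|)\ge\frac12$. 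So the intrinsic dimension is at most $2\mathcal{N}(\lambda)$; this is exactly where the hypothesis $\mathcal{N}(\lambda)>1$ enters.

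Minsker's inequality then gives, for $t\ge\sigma_V/\,|D|+\ldots$ (the usual threshold),
\[
\mathrm{Prob}\bigl(\mathcal{R}_{D,\lambda}\ge t\bigr)\le 14\,\mathcal{N}(\lambda)\exp\Bigl(-\frac{|D|t^2/2}{\kappa^2/\lambda+(\kappa^2+1)t/(3\lambda)}\Bigr).
\]
I set the right-hand side equal to $\delta$ and solve the quadratic in $t$. This yields
\[
t\lesssim\frac{(\kappa^2+1)\log(14\mathcal{N}(\lambda)/\delta)}{3\lambda|D|}+\sqrt{\frac{2\kappa^2\log(14\mathcal{N}(\lambda)/\delta)}{\lambda|D|}}.
\]

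The remaining work is bookkeeping to reach the stated form. I use $\log(14\mathcal{N}(\lambda)/\delta)\le c\,(1+\log\mathcal{N}(\lambda))\log\frac4\delta$, which holds since $\log(ab)\le(1+\log a)(1+\log b)$-type splitting together with $\log\frac4\delta\ge\log4>1$. I also use $\sqrt{\log(4/\delta)}\le\log(4/\delta)$. Absorbing constants into $\max\{\frac{\kappa^2+1}{3},2\sqrt{\kappa^2+1}\}$ then gives the claim.

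I expect the main obstacle to be matching the exact constants: the factor 14 in Minsker's bound and the threshold condition on $t$ (Minsker's inequality holds only above $\sqrt{\|V\|/|D|}+\|\xi\|/(3|D|)$). I would handle the threshold by noting that the chosen $t$ already exceeds it. I would handle the constant 14 by absorbing it into $\log(4/\delta)$ via $(1+\log\mathcal{N})\ge1$, at the cost of possibly slightly adjusting numerical constants. The structural form of the bound is robust to these choices.
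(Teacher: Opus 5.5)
Your overall route is the one the paper relies on: you write $(\lambda I+L_K)^{-\frac12}(L_K-L_{K,D})(\lambda I+L_K)^{-\frac12}$ as an average of centered rank-one operators and apply an intrinsic-dimension Bernstein inequality. The paper gives no proof and defers to \cite{zhou2020distributed}, which uses \cite{minsker2017some}. Your uniform bound and the domination $\mathbb{E}(\xi-\mathbb{E}\xi)^2\preceq V=\frac{\kappa^2}{\lambda}(\lambda I+L_K)^{-1}L_K$ are correct.

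\textbf{The gap is in the intrinsic-dimension step.} The claim that $\mathcal{N}(\lambda)>1$ forces $\|L_K\|>\lambda$ is false. The sum $\mathcal{N}(\lambda)=\sum_j\gamma_j/(\gamma_j+\lambda)$ can exceed $1$ through many eigenvalues that all lie below $\lambda$. For instance, take $\mathcal{X}=\{1,\dots,N\}$ with uniform $\rho_X$ and $K(x,x')=2\,\mathbb{I}_{\{x=x'\}}$, so $\kappa^2=2$. Then $L_K=\frac{2}{N}I$, and with $\lambda=1$:
\begin{itemize}
\item $\mathcal{N}(\lambda)=\frac{2N}{N+2}\in(1,2)$ for $N>2$;
\item $\|L_K\|=\frac2N<\lambda$;
\item $\mathrm{Tr}(V)/\|V\|=N$.
\end{itemize}
So the prefactor your argument actually produces is $\mathcal{N}(\lambda)(\lambda+\|L_K\|)/\|L_K\|$. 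The resulting $\log\bigl(\mathcal{N}(\lambda)(\lambda+\|L_K\|)/\|L_K\|\bigr)$ depends on the distribution and is not controlled by $1+\log\mathcal{N}(\lambda)$. Here it grows like $\log N$ while $\kappa$ and $\mathcal{N}(\lambda)$ stay fixed. The effective rank $\mathrm{Tr}(\cdot)/\|\cdot\|$ is not monotone in the Loewner order, so observing that $V$ dominates the variance does not repair this. The dominating operator has to be chosen deliberately.

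\textbf{The fix.} Use the form of the inequality that accepts an arbitrary trace-class $V'\succeq\mathbb{E}(\xi-\mathbb{E}\xi)^2$ with prefactor $\mathrm{Tr}(V')/\|V'\|$. This is Tropp's intrinsic-dimension matrix Bernstein inequality and its Hilbert-space extension; check that the version of Minsker's result you cite is stated this way. Then inflate $V$: let $a_1=\|(\lambda I+L_K)^{-1}L_K\|$, let $e_1$ be a corresponding unit eigenvector, and set $V'=V+\frac{\kappa^2}{\lambda}(1-a_1)\,e_1\otimes e_1$. This gives $\|V'\|=\kappa^2/\lambda$ exactly and $\mathrm{Tr}(V')\le\frac{\kappa^2}{\lambda}(\mathcal{N}(\lambda)+1)$. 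Hence $\mathrm{Tr}(V')/\|V'\|\le\mathcal{N}(\lambda)+1<2\mathcal{N}(\lambda)$. This bound, together with $\log\mathcal{N}(\lambda)>0$ in your logarithm splitting, is where the hypothesis $\mathcal{N}(\lambda)>1$ genuinely enters. The rest of your argument then goes through.

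\textbf{A smaller point on constants.} Your final absorbing step does not reproduce the stated coefficient $\max\{\frac{\kappa^2+1}{3},2\sqrt{\kappa^2+1}\}$. The Bernstein linear term $\frac{2B}{3|D|}\log\frac{14d}{\delta}$, with $B\ge\kappa^2/\lambda$, is already at least $\frac{2\kappa^2}{3\lambda|D|}\log\frac4\delta$. That exceeds the stated linear term when $\kappa$ is large and $\mathcal{N}(\lambda)$ is close to $1$. You therefore obtain the lemma with a larger absolute constant. This is harmless for how the paper uses the lemma, but you should state it rather than claim the exact constant.
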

Furthermore, leveraging Lemma \ref{lemma:lemma4}, we refine the bound on 
$ \mathcal{A}_{D,\lambda} $ in (\ref{eq:bound1}) as follows.
\begin{lemma}\label{lemma:lemmaa}
    Let $0<\delta\leq 1$, if $0<\lambda\leq 1$ and $\mathcal{N}(\lambda)>1$, then there holds with confidence at least $1-\delta$ that
    \begin{equation*}
        \mathcal{A}_{D,\lambda}\leq \left(1+\mathcal{R}_{|D|,\lambda}^2\left(\frac{\mathcal{B}_{|D|,\lambda}}{\sqrt{\lambda}}+1\right)^2  +\mathcal{R}_{|D|,\lambda}\right)^{\frac{1}{2}}\left(\log \frac{8}{\delta}\right)^2=:\mathcal{A}_{|D|,\lambda}\left(\log \frac{8}{\delta}\right)^2.
    \end{equation*}
\end{lemma}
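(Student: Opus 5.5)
The bound will come from an algebraic identity relating $\mathcal{A}_{D,\lambda}$ to $\mathcal{R}_{D,\lambda}$, combined with Lemma \ref{lemma:lemma4}, the bound \eqref{eq:bound1} from Lemma \ref{lemma:lemma3}, and a union bound. Write $A=\lambda I+L_K$ and $B=\lambda I+L_{K,D}$. Then $\mathcal{A}_{D,\lambda}^2=\|A^{1/2}B^{-1}A^{1/2}\|$. Expanding $B^{-1}=A^{-1}+A^{-1}(A-B)B^{-1}$ once and then again gives
\begin{equation*}
B^{-1}=A^{-1}+A^{-1}(A-B)A^{-1}+A^{-1}(A-B)B^{-1}(A-B)A^{-1}.
\end{equation*}
Note that $A-B=L_K-L_{K,D}$. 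Sandwiching this identity between $A^{1/2}$ on both sides yields
\begin{equation*}
A^{1/2}B^{-1}A^{1/2}=I+S+S\,\bigl(A^{1/2}B^{-1}A^{1/2}\bigr)\,S,\qquad S:=A^{-1/2}(L_K-L_{K,D})A^{-1/2}.
\end{equation*}
By definition, $\|S\|=\mathcal{R}_{D,\lambda}$. Taking norms therefore gives
\begin{equation*}
\mathcal{A}_{D,\lambda}^2\le 1+\mathcal{R}_{D,\lambda}+\mathcal{R}_{D,\lambda}^2\,\mathcal{A}_{D,\lambda}^2 .
\end{equation*}
On the right-hand side, the factor $\mathcal{A}_{D,\lambda}^2$ will be estimated crudely through \eqref{eq:bound1}, rather than absorbed into the left-hand side.

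Next comes the probabilistic step. Apply Lemma \ref{lemma:lemma4} with confidence $1-\delta/2$. Since $\log(8/\delta)\ge 1$, this gives $\mathcal{R}_{D,\lambda}\le \mathcal{R}_{|D|,\lambda}\log\frac{8}{\delta}$. Apply \eqref{eq:bound1} with $\delta$ replaced by $\delta/2$, which gives $\mathcal{A}_{D,\lambda}\le (\mathcal{B}_{|D|,\lambda}/\sqrt{\lambda}+1)\log\frac{4}{\delta}$. By the union bound, both estimates hold simultaneously with confidence at least $1-\delta$. Substituting them into the inequality above yields
\begin{equation*}
\mathcal{A}_{D,\lambda}^2\le 1+\mathcal{R}_{|D|,\lambda}\log\tfrac{8}{\delta}+\mathcal{R}_{|D|,\lambda}^2\Bigl(\tfrac{\mathcal{B}_{|D|,\lambda}}{\sqrt{\lambda}}+1\Bigr)^2\bigl(\log\tfrac{8}{\delta}\bigr)^2\bigl(\log\tfrac{4}{\delta}\bigr)^2 .
\end{equation*}
Since every logarithm involved is at least $1$, the right-hand side is bounded by $\bigl(1+\mathcal{R}_{|D|,\lambda}+\mathcal{R}^2_{|D|,\lambda}(\mathcal{B}_{|D|,\lambda}/\sqrt{\lambda}+1)^2\bigr)(\log\frac{8}{\delta})^4$. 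Taking square roots gives the stated bound $\mathcal{A}_{|D|,\lambda}(\log\frac{8}{\delta})^2$.

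The main point requiring care is the operator identity: $B$ must be invertible, which holds because $\lambda>0$. The products must also be ordered so that each factor $L_K-L_{K,D}$ is flanked by $A^{-1/2}$ on both sides; this is exactly what turns the relevant quantity into $\mathcal{R}_{D,\lambda}$ rather than the cruder $\mathcal{Q}_{D,\lambda}$. The remaining work is bookkeeping of the confidence levels so that the logarithmic powers collapse to $(\log\frac{8}{\delta})^2$. The hypotheses $\lambda\le1$ and $\mathcal{N}(\lambda)>1$ enter only through Lemma \ref{lemma:lemma4}.
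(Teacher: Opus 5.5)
Your proposal is correct and follows essentially the paper's own proof. Both use a second-order resolvent expansion to get $\mathcal{A}_{D,\lambda}^2\le 1+\mathcal{R}_{D,\lambda}+\mathcal{R}_{D,\lambda}^2\mathcal{A}_{D,\lambda}^2$, bound the right-hand $\mathcal{A}_{D,\lambda}^2$ crudely via \eqref{eq:bound1} instead of absorbing it, and use a $\delta/2$ union bound to collapse the logarithms into $(\log\frac{8}{\delta})^4$. The only cosmetic difference is that your sandwich $I+S+S(A^{1/2}B^{-1}A^{1/2})S$ is the symmetric form, whereas the paper uses Lin et al.'s asymmetric identity with $S^2$ followed by the sandwiched inverse; both give the same norm bound.
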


\begin{proof}
To derive a tighter upper bound for $\mathcal{A}_{D,\lambda}$ than (\ref{eq:bound1}), consider
\begin{equation}\label{eq:abound}
    \begin{aligned}
    \mathcal{A}_{D,\lambda} &=\left\|(\lambda I+L_K)^{\frac{1}{2}}(\lambda I+L_{K,D})^{-\frac{1}{2}}\right\| =\left\|\left[(\lambda I+L_K)^{\frac{1}{2}}(\lambda I+L_{K,D})^{-1}(\lambda I+L_K)^{\frac{1}{2}}\right]^{\frac{1}{2}}\right\|\\
    &\leq \left\|(\lambda I+L_K)^{\frac{1}{2}}(\lambda I+L_{K,D})^{-1}(\lambda I+L_K)^{\frac{1}{2}}\right\|^{\frac{1}{2}},
    \end{aligned}
\end{equation}
then we only need to estimate $\left\|(\lambda I+L_K)^{\frac{1}{2}}(\lambda I+L_{K,D})^{-1}(\lambda I+L_K)^{\frac{1}{2}}\right\|$. 

It is derived by Lemma 16 in \cite{lin2017distributed} that for any two invertible operators $A$ and $B$ on a Banach space, there holds
$$
A^{-1}-B^{-1}=B^{-1}(B-A)B^{-1}(B-A)A^{-1}+B^{-1}(B-A)B^{-1}.
$$
With the above second order decomposition of inverse operator difference, we have
\begin{equation}\label{eq:decomposition}
    B^{\frac{1}{2}}A^{-1}B^{\frac{1}{2}}=B^{\frac{1}{2}}\left(A^{-1}-B^{-1}+B^{-1}\right)B^{\frac{1}{2}}=B^{-\frac{1}{2}}(B-A)B^{-1}(B-A)A^{-1}B^{\frac{1}{2}}+B^{-\frac{1}{2}}(B-A)B^{-\frac{1}{2}}+I. 
\end{equation}
Inserting $A=\lambda I+L_{K,D}$ and $B=\lambda I+L_K$ into (\ref{eq:decomposition}), we obtain
\begin{equation*}
    \begin{aligned}
    (\lambda I+L_K)^{\frac{1}{2}}(\lambda I+L_{K,D})^{-1}(\lambda I+L_K)^{\frac{1}{2}} =(\lambda I&+L_K)^{-\frac{1}{2}}(L_K-L_{K,D})(\lambda I+L_{K})^{-1}(L_K-L_{K,D})(\lambda I+L_{K,D})^{-1} \cdot \\
    &(\lambda I+L_K)^{\frac{1}{2}}+(\lambda I+L_K)^{-\frac{1}{2}}(L_K-L_{K,D})(\lambda I+L_K)^{-\frac{1}{2}}+I,
    \end{aligned}
\end{equation*}
and further derive that
\begin{equation*}
    \begin{aligned}
    &\left\|(\lambda I+L_K)^{\frac{1}{2}}(\lambda I+L_{K,D})^{-1}(\lambda I+L_K)^{\frac{1}{2}}\right\|\\
    \leq&\Bigl\|(\lambda I+L_K)^{-\frac{1}{2}}(L_K-L_{K,D})(\lambda I+L_K)^{-\frac{1}{2}}(\lambda I+L_K)^{-\frac{1}{2}}(L_K-L_{K,D})(\lambda I+L_K)^{-\frac{1}{2}}(\lambda I+L_K)^{\frac{1}{2}}(\lambda I+L_{K,D})^{-1} \cdot \\
    & \qquad (\lambda I+L_K)^{\frac{1}{2}} +(\lambda I+L_K)^{-\frac{1}{2}}(L_K-L_{K,D})(\lambda I+L_K)^{-\frac{1}{2}}+I\Bigr\|\\
    \leq&1+\mathcal{R}_{D,\lambda}^2\mathcal{A}_{D,\lambda}^2+\mathcal{R}_{D,\lambda}.
    \end{aligned}
\end{equation*}
Then for $0<\delta <1$, by Lemma \ref{lemma:lemma3} and Lemma \ref{lemma:lemma4} and scaling $ 2 \delta $ to $ \delta $, there holds with confidence at least $1−\delta $ that
\begin{equation}\label{eq:newbound}
    \begin{aligned}
    \left\|(\lambda I+L_K)^{\frac{1}{2}}(\lambda I+L_{K,D})^{-1}(\lambda I+L_K)^{\frac{1}{2}}\right\|& \leq  1+\mathcal{R}_{|D|,\lambda}^2\left(\frac{\mathcal{B}_{|D|,\lambda}}{\sqrt{\lambda}}+1\right)^2 \left(\log \frac{8}{\delta}\right)^2 \left(\log \frac{4}{\delta}\right)^2 +\mathcal{R}_{|D|,\lambda}\log \frac{8}{\delta} \\
    & \leq \left(1+\mathcal{R}_{|D|,\lambda}^2\left(\frac{\mathcal{B}_{|D|,\lambda}}{\sqrt{\lambda}}+1\right)^2  +\mathcal{R}_{|D|,\lambda}\right)\left(\log \frac{8}{\delta}\right)^4.
    \end{aligned}
\end{equation}
Finally, the proof of Lemma \ref{lemma:lemmaa} is completed by substituting (\ref{eq:newbound}) into (\ref{eq:abound}).
\end{proof}

To derive explicit learning rates based on Proposition~\ref{prop:prop4}, it is essential to first estimate the operator norms appearing in the upper bounds, specifically for $\left(I-\eta G'_+(0) L_{K}\right)^{T-i}$ and $\left(I-\eta G'_+(0) L_{K,D}\right)^{T-i}$.
These estimates are provided in the following two lemmas.
\begin{lemma}\label{lemma:lemma6}
    For any integer $ T \geq 3 $  and $0<\lambda\leq 1$, we have
    \begin{equation}\label{eq:eq82}
        \sum_{i=1}^T \eta G'_+(0) \left\|(\lambda I+L_{K,D})^{\frac{1}{2}} \left(I-\eta G'_+(0) L_{K,D}\right)^{T-i}\right\|\leq C'_\eta \left(\lambda^{\frac{1}{2}} T+ T^{\frac{1}{2}}\right),
    \end{equation}
    and
    \begin{equation}
        \sum_{i=1}^T \eta G'_+(0) \left\|(\lambda I+L_{K,D}) \left(I-\eta G'_+(0) L_{K,D}\right)^{T-i}\right\|\leq 2\eta G'_+(0)\lambda T+\left( 2\eta G'_+(0) \kappa^{2} + \frac{4}{e} \right)\log T,
    \end{equation}
    where $C'_\eta = \sqrt{2}\eta (1+\kappa) G'_+(0) + 2 \sqrt{2} \left(\frac{\eta G'_+(0)}{e}\right)^{\frac{1}{2}} $. The same bounds hold with $ L_{K,D} $ replaced by $ L_{K} $.
\end{lemma}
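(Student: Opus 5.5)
The plan is to reduce both inequalities to scalar estimates via the spectral theorem. Write $a:=\eta G'_+(0)$. The condition $\eta\le 1/(\kappa^2\max\{G'_+(0),C_G\})$ gives $a\|L_{K,D}\|\le a\kappa^2\le 1$. Hence every eigenvalue $x$ of $L_{K,D}$ (or of $L_K$) lies in $[0,\kappa^2]$, and $0\le 1-ax\le 1$. For $k=T-i\in\{0,\dots,T-1\}$ and $\alpha\in\{\tfrac12,1\}$,
\[
\left\|(\lambda I+L_{K,D})^{\alpha}(I-aL_{K,D})^{k}\right\|\le \sup_{0\le x\le\kappa^2}(\lambda+x)^{\alpha}(1-ax)^{k}.
\]
The argument only uses that the operator is positive with spectrum in $[0,\kappa^2]$, so it applies verbatim to $L_K$.

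\textbf{Scalar bounds.} For $\alpha=\tfrac12$ I would use $(\lambda+x)^{1/2}\le\lambda^{1/2}+x^{1/2}$ together with $(1-ax)^k\le e^{-akx}$. As in the proof of Proposition \ref{prop:prop3}, maximizing $x^{1/2}e^{-akx}$ gives $(2eak)^{-1/2}$ for $k\ge1$. For $k=0$ I would use the crude bound $x^{1/2}\le\kappa$. Similarly, for $\alpha=1$ I would use $(\lambda+x)(1-ax)^k\le\lambda+xe^{-akx}$. Here $xe^{-akx}\le(eak)^{-1}$ for $k\ge1$, and $x\le\kappa^2$ for $k=0$.

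\textbf{Summation.} For the first inequality, summing over $i$ gives
\[
a\Bigl(T\lambda^{1/2}+\kappa+\sum_{k=1}^{T-1}(2eak)^{-1/2}\Bigr).
\]
I would then apply $\sum_{k=1}^{T-1}k^{-1/2}\le 2\sqrt{T}$, so the sum term is at most $2\sqrt{a/(2e)}\sqrt T\le \sqrt2\,(a/e)^{1/2}\sqrt T$. The term $a\kappa$ is absorbed using $T^{1/2}\ge1$. Altogether the bound is
\[
a\lambda^{1/2}T+\bigl(a\kappa+\sqrt2\,(a/e)^{1/2}\bigr)T^{1/2},
\]
which is dominated by $C'_\eta(\lambda^{1/2}T+T^{1/2})$; the factors $\sqrt2$ in $C'_\eta$ leave room to spare.

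For the second inequality the same summation gives
\[
a\lambda T+a\kappa^2+e^{-1}\sum_{k=1}^{T-1}k^{-1},
\]
and $\sum_{k=1}^{T-1}k^{-1}\le1+\log(T-1)\le 2\log T$. The hypothesis $T\ge3$ is used exactly here: it ensures $\log T\ge1$, so both the constant $1$ and $a\kappa^2$ can be absorbed into multiples of $\log T$. This yields $a\lambda T+(a\kappa^2+2/e)\log T$, which is within the stated bound $2a\lambda T+(2a\kappa^2+4/e)\log T$.

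The argument has no genuine obstacle; it is bookkeeping. The only care needed is to treat the $k=0$ term separately, because $k^{-1/2}$ and $k^{-1}$ blow up there, and to invoke $T\ge3$ so that $\log T\ge1$. Note also that the requirement $\lambda\le1$ is not needed above; at most it would enter if one preferred to bound $(\lambda+x)^{1/2}$ by $\sqrt{1+\kappa^2}$ for $k=0$, which would explain the factor $(1+\kappa)$ in $C'_\eta$.
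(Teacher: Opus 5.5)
Your proof is correct and has the same structure as the paper's. The paper imports the operator bound $\|(\lambda I+L_{K,D})^{\tau}(I-\eta G'_+(0)L_{K,D})^{T-i}\|\le 2^{\tau}\bigl(\lambda^{\tau}+(\tau/(eG'_+(0)))^{\tau}(\eta(T-i))^{-\tau}\bigr)$ for $i<T$ from \cite{guo2018gradient}, treats the $i=T$ term separately, and sums $k^{-1/2}$ and $k^{-1}$ just as you do. Your spectral derivation via $(\lambda+x)^{1/2}\le\lambda^{1/2}+x^{1/2}$ and $(1-ax)^k\le e^{-akx}$ makes that bound self-contained and drops the factor $2^{\tau}$, which is why your constants are smaller and you never need $\lambda\le 1$ (the paper uses it only in the $i=T$ term, through $\lambda+\kappa^2\le 2\kappa^2$).
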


\begin{proof}
    By setting $\theta=0$ and $\eta_1=\eta$, Equation (21) in \cite[Prop.~4.2]{guo2018gradient} implies that for any $\tau,\lambda>0$ and $ 1\leqslant i<T $, there holds
    \begin{equation}\label{eq:21robust}
        \left\|(\lambda I+L_{K,D})^\tau \left(I-\eta G'_+(0) L_{K,D}\right)^{T-i}\right\|\leq 2^\tau\left(\lambda^\tau + \left(\frac{\tau}{eG'_+(0)}\right)^\tau \left(\eta (T-i)\right)^{-\tau} \right).
    \end{equation}
    To derive the first inequality in Lemma \ref{lemma:lemma6}, we can set $\tau=\frac{1}{2}$ in the above inequality to obtain
    \begin{equation*}
        \begin{aligned}
        &\sum_{i=1}^{T-1} \eta G'_+(0) \left\|(\lambda I+L_{K,D})^{\frac{1}{2}} \left(I-\eta G'_+(0) L_{K,D}\right)^{T-i}\right\|
        \leq \sqrt{2} \eta G'_+(0) \left(\sum_{i=1}^{T-1}\lambda^{\frac{1}{2}} + \sum_{i=1}^{T-1}\left(\frac{1}{eG'_+(0)}\right)^{\frac{1}{2}} (\eta (T-i))^{-\frac{1}{2}}\right).
        \end{aligned}
    \end{equation*}
    Since 
    \[
        \sum_{i=1}^{T-1} (T-i)^{-\frac{1}{2}}= \sum_{i=1}^{T-1} i^{-\frac{1}{2}} \leq 1+ \int_1^{T-1} x^{-\frac{1}{2}} dx \leq 2T^{\frac{1}{2}},
    \]
    we have 
    \begin{equation*}
        \begin{aligned}
        \sum_{i=1}^{T-1} \eta G'_+(0) \left\|(\lambda I+L_{K,D})^{\frac{1}{2}} \left(I-\eta G'_+(0) L_{K,D}\right)^{T-i}\right\|
        \leq \sqrt{2} \eta G'_+(0) \lambda^{\frac{1}{2}} T + 2\sqrt{2} \left(\frac{\eta G'_+(0)}{e}\right)^{\frac{1}{2}} T^{\frac{1}{2}}.
        \end{aligned}
    \end{equation*}
    This together with the fact that $\left\|(\lambda I+L_{K,D})^{\frac{1}{2}} \left(I-\eta G'_+(0) L_{K,D}\right)^{0}\right\|=\left\|(\lambda I+L_{K,D})^{\frac{1}{2}}\right\|\leq (\lambda + \kappa^{2})^{\frac{1}{2}} \leqslant \sqrt{2} \kappa$ completes the proof of the first inequality in Lemma \ref{lemma:lemma6}.

    By setting $\tau=1$ in \eqref{eq:21robust}, we can derive that
    \begin{equation*}
        \begin{aligned}
        &\sum_{i=1}^T \eta G'_+(0) \left\|(\lambda I+L_{K,D}) \left(I-\eta G'_+(0) L_{K,D}\right)^{T-i}\right\| \\ 
        & \leq \eta G'_+(0) \left\|\lambda I+L_{K,D} \right\| +2\eta G'_+(0) \left(\sum_{i=1}^{T-1}\lambda + \sum_{i=1}^{T-1}\left(\frac{1}{eG'_+(0)}\right) (\eta (T-i))^{-1}\right)\\
        & \leq  2\eta G'_+(0) \kappa^{2} + 2\eta G'_+(0)\lambda T+\frac{2}{e} \sum_{i=1}^{T-1} (T-i)^{-1} \\
        & \leq 2\eta G'_+(0) \kappa^{2} + 2\eta G'_+(0)\lambda T+\frac{4}{e}\log T \leq 2\eta G'_+(0)\lambda T+ \left( 2\eta G'_+(0) \kappa^{2} + \frac{4}{e} \right) \log T,
        \end{aligned}
    \end{equation*}
    which completes the proof of the second inequality in Lemma \ref{lemma:lemma6}.
    Finally, the same bounds apply to the population operator $ L_{K} $, as the preceding derivations rely solely on the operator being positive, compact, and satisfying $\|\cdot \|\leq \kappa^2$, common to both operators.
\end{proof}

\begin{lemma}\label{lemma:lemma9}
    For any integer $ T \geq 3 $, $0 < \lambda\leq 1$ and $r\geq \frac{1}{2}$, we have
    \begin{equation}
        \begin{aligned}
        &\sum_{i=1}^T \eta G'_+(0) \left\|(\lambda I+L_{K,D}) \left(I-\eta G'_+(0) L_{K,D}\right)^{T-i}\right\|i^{-(r-\frac{1}{2})}\\
        \leq & \left( \eta G'_+(0) (1 + \kappa^{2}) + (2\eta G'_+(0)  + \frac{2}{e}) C_{r,1} \right)  \max\left\{\lambda T^{-(r-\frac{3}{2})}\log T,\lambda,T^{-(r-\frac{1}{2})}\log T,T^{-1}\right\},
        \end{aligned}
    \end{equation}
    where $C_{r,1}$ is a constant given by \eqref{eq:Cr1}.
    The same bound holds with $ L_{K,D} $ replaced by $ L_{K} $.
\end{lemma}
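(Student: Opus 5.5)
We bound the sum by splitting it into the endpoint term $i=T$ and the bulk $1\le i\le T-1$. For $i=T$ the operator is $\lambda I+L_{K,D}$, whose norm is at most $\lambda+\kappa^2\le 1+\kappa^2$. This term therefore contributes at most $\eta G'_+(0)(1+\kappa^2)T^{-(r-\frac12)}$. Since $T^{-(r-\frac12)}\le T^{-(r-\frac12)}\log T$ for $T\ge 3$, it is dominated by the maximum on the right-hand side.

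For $1\le i\le T-1$ we invoke \eqref{eq:21robust} with $\tau=1$, which gives
\begin{equation*}
\eta G'_+(0)\left\|(\lambda I+L_{K,D})(I-\eta G'_+(0)L_{K,D})^{T-i}\right\|\le 2\eta G'_+(0)\lambda+\frac{2}{e}(T-i)^{-1}.
\end{equation*}
The bulk of the sum is then bounded by two scalar sums:
\begin{equation*}
S_1:=\sum_{i=1}^{T-1} i^{-(r-\frac12)},\qquad S_2:=\sum_{i=1}^{T-1}(T-i)^{-1}i^{-(r-\frac12)}.
\end{equation*}
It suffices to show $\lambda S_1\le C_{r,1}\max\{\lambda T^{-(r-\frac32)}\log T,\lambda\}$ and $S_2\le C_{r,1}\max\{T^{-(r-\frac12)}\log T,T^{-1}\}$. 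The constant $C_{r,1}$ of \eqref{eq:Cr1} is presumably defined to collect exactly these integral-comparison constants.

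For $S_1$ we compare with $\int_1^T x^{-(r-\frac12)}dx$ and split into three cases. If $\frac12\le r<\frac32$, the sum is $O(T^{\frac32-r})\le O(T^{-(r-\frac32)}\log T)$. If $r=\frac32$, it is $O(\log T)$, which again matches $T^{-(r-\frac32)}\log T$. If $r>\frac32$, it is bounded by the constant $1+(r-\frac32)^{-1}$, matching the term $\lambda$ in the maximum.

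For $S_2$ we split the index range at $T/2$. On $i\le T/2$ we have $(T-i)^{-1}\le 2/T$, so this part is at most $\frac{2}{T}\sum_{i\le T/2}i^{-(r-\frac12)}$. The same trichotomy applies. For $r<\frac32$ it gives $O(T^{-1}T^{\frac32-r})=O(T^{-(r-\frac12)})$. For $r=\frac32$ it gives $O(T^{-1}\log T)=O(T^{-(r-\frac12)}\log T)$. For $r>\frac32$ it gives $O(T^{-1})$. On $i>T/2$ we have $i^{-(r-\frac12)}\le 2^{r-\frac12}T^{-(r-\frac12)}$, and $\sum_{i>T/2}(T-i)^{-1}\le 1+\log T\le 2\log T$. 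This part is therefore $O(T^{-(r-\frac12)}\log T)$.

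Collecting the endpoint term and the two sums yields the stated bound with prefactor $\eta G'_+(0)(1+\kappa^2)+(2\eta G'_+(0)+\frac2e)C_{r,1}$. The argument uses only positivity and $\|L_{K,D}\|\le\kappa^2$, so the same bound holds verbatim for $L_K$, as in Lemma \ref{lemma:lemma6}.

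The main obstacle is the case bookkeeping around $r=\frac32$ in both $S_1$ and $S_2$. One must choose $C_{r,1}$ uniformly across cases, and each regime must land in the correct entry of the four-term maximum. In particular the logarithm appears only at $r=\frac32$ in $S_1$, but always appears in the $i>T/2$ part of $S_2$.
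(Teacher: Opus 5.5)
Your proposal is correct and follows the paper's proof essentially step for step: isolate the $i=T$ term using $\lambda+\kappa^2\le 1+\kappa^2$, apply \eqref{eq:21robust} with $\tau=1$ to the remaining indices, bound $\sum_{i=1}^{T-1} i^{-(r-\frac12)}$ by the three-case integral comparison, and split $\sum_{i=1}^{T-1}(T-i)^{-1}i^{-(r-\frac12)}$ at $T/2$. The only difference is in constant bookkeeping: your estimate $\sum_{i>T/2}(T-i)^{-1}\le 1+\log T\le 2\log T$ produces $2^{r+\frac12}$ where \eqref{eq:Cr1} has $2^{r-\frac12}$, and to match \eqref{eq:Cr1} exactly you should use the sharper bound $\sum_{1\le j<T/2}j^{-1}\le \log T$, which holds for all $T\ge 3$.
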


\begin{proof}
    From \eqref{eq:21robust} with $ \tau = 1 $, we get for every $ 1 \le i < T $,
    \[
    \left\| (\lambda I + L_{K,D}) (I - \eta G'_+(0)L_{K,D})^{T-i} \right\| \le 2 \left( \lambda + \frac{1}{e\,G'_+(0)\,\eta\,(T-i)} \right).
    \]
    For the index $ i = T $, we have $ \eta G'_+(0) \|\lambda I + L_{K,D}\| T^{-(r-\frac{1}{2})}  \leq \eta G'_+(0) (\lambda + \kappa^{2}) T^{-(r-\frac{1}{2})} $ and thus
    \begin{equation}\label{eq:111}
        \begin{aligned}
            &\sum_{i=1}^T \eta G'_+(0) \left\| (\lambda I + L_{K,D}) (I - \eta G'_+(0)L_{K,D})^{T-i}  \right\|  \, i^{-(r-\frac{1}{2})} \\
            &\quad \le \eta G'_+(0) (\lambda + \kappa^{2}) T^{-(r-\frac{1}{2})} 
            + 2\eta G'_+(0)\lambda \sum_{i=1}^{T-1} i^{-(r-\frac{1}{2})}
            + \frac{2}{e} \sum_{i=1}^{T-1} \frac{i^{-(r-\frac{1}{2})}}{T-i}.
        \end{aligned}
    \end{equation}

    The bound for the first sum follows immediately from Lemma A.1 in \cite{hu2020distributed} that
    \[
        \sum_{i=1}^{T-1} i^{-(r-\frac{1}{2})} \leq \begin{cases}
            \frac{2T^{-(r-\frac{3}{2})}}{3-2r}, & \frac{1}{2} < r < \frac{3}{2},\\
            2\log T, & r = \frac{3}{2},\\
            \frac{2r-1}{2r-3}, & r > \frac{3}{2}.
        \end{cases}
    \]
    To bound $\sum_{i=1}^{T-1} \left(T-i\right)^{-1} i^{-(r-\frac{1}{2})}$, notice that when $1\leq i\leq\frac{T}{2}$, 
    \begin{equation*}
        \sum_{1\leq i\leq \frac{T}{2}} \left(T-i\right)^{-1} i^{-(r-\frac{1}{2})}\leq 2T^{-1} \sum_{1\leq i \leq \frac{T}{2}}i^{-(r-\frac{1}{2})}\leq \begin{cases}
            \frac{4T^{-(r-\frac{1}{2})}}{3-2r}, & \frac{1}{2} \leq r < \frac{3}{2},\\
            4 T^{-1}\log T, & r = \frac{3}{2},\\
            \frac{2(2r-1)}{2r-3} T^{-1}, & r > \frac{3}{2},
        \end{cases}
    \end{equation*}
    and for $\frac{T}{2}< i\leq T-1$,
    \begin{equation*}
        \sum_{\frac{T}{2}< i\leq T-1}  \left(T-i\right)^{-1} i^{-(r-\frac{1}{2})}\leq 2^{r-\frac{1}{2}}T^{-(r-\frac{1}{2})}\sum_{\frac{T}{2}< i\leq T-1}  \left(T-i\right)^{-1} \leq 2^{r-\frac{1}{2}}T^{-(r-\frac{1}{2})} \log T.
    \end{equation*}
    Therefore, we have
    \begin{equation*}
        \sum_{i=1}^{T-1} \left(T-i\right)^{-1} i^{-(r-\frac{1}{2})}\leq C_{r,1}\max\left\{T^{-(r-\frac{1}{2})}\log T,T^{-1}\right\},
    \end{equation*}
    where $C_{r,1}$ is a constant given by
    \begin{equation}\label{eq:Cr1}
        C_{r,1}=
            \begin{cases}
                \frac{4}{3-2r}+2^{r-\frac{1}{2}},&\frac{1}{2}\leq r<\frac{3}{2},\\
                6,&r=\frac{3}{2},\\
                \frac{2(2r-1)}{2r-3} +2^{r-\frac{1}{2}},&r>\frac{3}{2}.
            \end{cases}    
    \end{equation}
    To ensure consistency in the constant factors across our theoretical results, we alternatively derive 
    \begin{equation*}
        \sum_{i=1}^{T-1} i^{-(r-\frac{1}{2})}\leq C_{r,1} \max\left\{T^{-(r-\frac{3}{2})}\log T,1\right\}.
    \end{equation*}

    Finally, putting the bound on each sum term into (\ref{eq:111}), we have
    \begin{equation*}
        \begin{aligned}
        &\sum_{i=1}^T \eta G'_+(0) \left\|(\lambda I+L_{K,D}) \left(I-\eta G'_+(0) L_{K,D}\right)^{T-i}\right\|i^{-(r-\frac{1}{2})}\\
        \leq & \eta G'_+(0) (\lambda + \kappa^{2}) T^{-(r-\frac{1}{2})} + 2\eta G'_+(0) C_{r,1} \max\left\{ \lambda T^{-(r-\frac{3}{2})}\log T,\lambda \right\}+\frac{2}{e}C_{r,1}\max\left\{T^{-(r-\frac{1}{2})}\log T,T^{-1}\right\}\\
        \leq & \left( \eta G'_+(0) (1 + \kappa^{2}) + 2\eta G'_+(0) C_{r,1} + \frac{2}{e}C_{r,1} \right)  \max\left\{\lambda T^{-(r-\frac{3}{2})}\log T,\lambda,T^{-(r-\frac{1}{2})}\log T,T^{-1}\right\},
        \end{aligned}
    \end{equation*}
    which completes the proof of Lemma \ref{lemma:lemma9}.
\end{proof}

In the end of this subsection, we aim at estimating $E_{t,\sigma}$, $ \mathbb{E}^*[E_{t,D,\sigma}] $ and the difference between $E_{t,D,\sigma}$ and $E_{t,\sigma}$, respectively.
\begin{lemma}\label{lemma:lemma5}
    Under Assumption \ref{assum:1}, we have that for any $1\leq t \leq T$, 
    \begin{equation}\label{eq:bounde1}
        \|E_{t,\sigma}\|_K\leq \frac{\kappa c_p (M+\kappa\|f_t\|_K)^{2p+1}}{\sigma^{2p}}\leq \kappa c_p (2M)^{2p+1}\frac{t^{p+\frac{1}{2}}}{\sigma^{2p}},
    \end{equation}
    \begin{equation}\label{eq:bounde2}
        \left\|\mathbb{E}^*[E_{t,D,\sigma}]\right\|_K \leq \kappa c_p \left(2M\right)^{2p+1}\frac{t^{p+\frac{1}{2}}}{\sigma^{2p}},
    \end{equation}
    \begin{equation}\label{eq:bounde3}
        \|E_{t,D,\sigma}-E_{t,\sigma}\|_K \leq 2\kappa c_p \left(2M\right)^{2p+1}\frac{t^{p+\frac{1}{2}}}{\sigma^{2p}}. 
    \end{equation}
\end{lemma}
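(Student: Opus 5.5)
The common ingredient is a pointwise bound on the integrand of $E_{t,\sigma}$ and $E_{t,D,\sigma}$. For a generic function $f\in\mathcal{H}_K$ and $z=(x,y)$, condition \eqref{eq:windowing2} with $s=(f(x)-y)^2/\sigma^2$ gives
\[
\bigl|G'_+(0)-G'(s)\bigr|\,|f(x)-y|\le c_p\frac{|f(x)-y|^{2p+1}}{\sigma^{2p}}.
\]
By the reproducing property, $|f(x)|\le\kappa\|f\|_K$, and Assumption \ref{assum:1} gives $|y|\le M$. Together with $\|K_x\|_K\le\kappa$, the $\mathcal{H}_K$-norm of the integrand is therefore at most $\kappa c_p(M+\kappa\|f\|_K)^{2p+1}/\sigma^{2p}$.

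\textbf{Bound \eqref{eq:bounde1}.} Apply the pointwise estimate with $f=f_t$. The Bochner integral is then bounded by the sup of the integrand, which gives the first inequality. For the second, invoke Lemma \ref{lemma:lemma2}: $\kappa\|f_t\|_K\le M(t-1)^{1/2}$. Hence
\[
M+\kappa\|f_t\|_K\le M\bigl(1+(t-1)^{1/2}\bigr)\le 2M\,t^{1/2},
\]
and raising to the power $2p+1$ yields $(2M)^{2p+1}t^{p+\frac12}$.

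\textbf{Bound \eqref{eq:bounde2}.} Here the same estimate is needed with $f_{t,D}$ in place of $f_t$. This is the main obstacle: we need $\kappa\|f_{t,D}\|_K\le M(t-1)^{1/2}$ almost surely, uniformly in the sample. I would prove this by rerunning the argument behind Lemma \ref{lemma:lemma2} (adapted from \cite{guo2018gradient}) with the empirical measure on $D$ replacing $\rho$. That argument uses only $|y|\le M$, $0\le G'\le C_G$, $\kappa\ge1$ and the step size restriction $\eta\le 1/(\kappa^2\max\{G'_+(0),C_G\})$. All of these hold for any empirical distribution, so the bound $\|f_{t,D}\|_K\le M\sqrt{\eta C_G(t-1)}\le \frac{M}{\kappa}(t-1)^{1/2}$ holds for every realization of $D$. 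Concretely, I would check that the energy-descent step, which controls $\|f_{t+1}-f_t\|_K$ via the decrease of the risk, goes through verbatim for the empirical risk \eqref{eq:emprisk}.

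Given that uniform bound, each summand of $E_{t,D,\sigma}$ is bounded in $\mathcal{H}_K$-norm by $\kappa c_p(2M)^{2p+1}t^{p+\frac12}/\sigma^{2p}$. Averaging over the sample preserves this bound. Conditional expectation $\mathbb{E}^*$ does too, by Jensen's inequality for Bochner integrals ($\|\mathbb{E}^*X\|_K\le\mathbb{E}^*\|X\|_K$). This gives \eqref{eq:bounde2}.

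\textbf{Bound \eqref{eq:bounde3}.} Use the triangle inequality $\|E_{t,D,\sigma}-E_{t,\sigma}\|_K\le\|E_{t,D,\sigma}\|_K+\|E_{t,\sigma}\|_K$. The second term is controlled by \eqref{eq:bounde1}. The first is controlled by the almost-sure bound on $\|E_{t,D,\sigma}\|_K$ established for \eqref{eq:bounde2}, before taking $\mathbb{E}^*$. Adding the two produces the factor $2$.
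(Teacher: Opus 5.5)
Your proposal is correct and follows essentially the same route as the paper: a pointwise bound via \eqref{eq:windowing2} and the reproducing property, Lemma \ref{lemma:lemma2} for $\|f_t\|_K$, Jensen's inequality for $\mathbb{E}^*$, and the triangle inequality for \eqref{eq:bounde3}. The only difference is that the paper gets the almost-sure bound on $\|E_{t,D,\sigma}\|_K$ (equivalently $\|f_{t,D}\|_K\le \frac{M}{\kappa}(t-1)^{1/2}$) by citing \cite[Proposition 5.1]{guo2018gradient}, whereas you rerun the appendix proof of Lemma \ref{lemma:lemma2} with the empirical measure; that rerun does work verbatim, because the appendix argument is a direct expansion of $\|f_{t+1}\|_K^2$ with completion of the square (not an energy-descent argument) and uses only $|y|\le M$, $0\le G'\le C_G$ and $\eta\kappa^2 C_G\le 1$.
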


\begin{proof}
    Recall that $E_{t,\sigma}=\int_{\mathcal{Z}}\left(G'_+(0)-G'(\xi_{t,\sigma}(z))\right)\left(f_t(x)-y\right)K_x\mathrm{d}\rho$ and  $\xi_{t,\sigma}(z)=\frac{\left(y-f_t(x)\right)^2}{\sigma^2}$.
    Since $E_{t,\sigma} $ is the expectation of $E_{t,D,\sigma}$ with respect to the data $ D $, it follows that
    \begin{equation*}
        \begin{aligned}
            \|E_{t,\sigma}\|_K &\leq \int_{\mathcal{Z}} \left|G'_+(0)-G'(\xi_{t,\sigma}(z))\right| \left|y-f_t(x)\right| \|K_x\|_K d\rho \leq \frac{\kappa c_p (M+\kappa\|f_t\|_K)^{2p+1}}{\sigma^{2p}},
        \end{aligned}
    \end{equation*}
    where the last inequality is due to condition \eqref{eq:windowing2} of the function $ G $. Combining the above inequality with the bound on $\|f_t\|_K$ in Lemma \ref{lemma:lemma2} yields \eqref{eq:bounde1}.

    The uniform bound on $E_{t,D,\sigma}$ defined in (\ref{eq:EtD}) has been established in \cite[Proposition 5.1]{guo2018gradient}, based on which we can further set $ \eta_1=\eta $ and $ \theta=0 $ to derive that
    \[
        \|E_{t,D,\sigma}\|_K \leq \frac{\kappa c_p (M+\kappa\|f_{t,D}\|_K)^{2p+1}}{\sigma^{2p}} \leq \kappa c_p \left(2M\right)^{2p+1}\frac{t^{p+\frac{1}{2}}}{\sigma^{2p}}.
    \]
    Then (\ref{eq:bounde3}) can be proved by the triangle inequality $ \|E_{t,D,\sigma}-E_{t,\sigma}\|_K \leq \|E_{t,D,\sigma}\|_K +\|E_{t,\sigma}\|_K $.
    
    Finally, \eqref{eq:bounde2} follows from the Jensen’s inequality for conditional expectation and the bound on $\|E_{t,D,\sigma}\|_K$.
    This completes the proof of Lemma \ref{lemma:lemma5}.
\end{proof}

\subsection{Proof of Theorem \ref{thm:theorem1}}

We now proceed to prove Theorem \ref{thm:theorem1} by leveraging the error decomposition in Proposition \ref{prop:prop4}. Specifically, we bound each term on the right-hand side of (\ref{eq:prop4}) individually, and aggregate these estimates to obtain the desired learning rates.

For the first term of (\ref{eq:prop4}), combining Assumption \ref{assum:2}, \eqref{eq:bound2} and \eqref{eq:bound3} in Lemma \ref{lemma:lemma3} yields that with confidence at least $1-\delta$, there holds
\begin{equation}\label{eq:step3}
    \mathcal{P}_{D,\lambda}+\mathcal{Q}_{D,\lambda}\|f_\rho\|_K \leq \left(1+\kappa^{2r-1}\|u_\rho\|_\rho\right)\mathcal{B}_{|D|,\lambda}\log \frac{4}{\delta}.
\end{equation}

We then turn to estimate the second and third terms.
Lemma \ref{lemma:lemma5} provides a uniform bound for $\|E_{t,D,\sigma}-E_{t,\sigma}\|_K$ for any $t\in \{1,\ldots,T\}$.
Note that $T=\lceil \lambda^{-1} \rceil\in \mathbb{N}$, we have $ 1 \leq \lambda T < 2 $.
By combining Lemma \ref{lemma:lemma6} and substituting $ D $ with $ D_j $ in Lemma \ref{lemma:lemma5}, we have
\begin{equation}\label{eq:step1}
    \begin{aligned}
        &\sum_{i=1}^T \eta\left\|(\lambda I+L_K)^{\frac{1}{2}} \left(I-\eta G'_+(0) L_{K}\right)^{T-i}\right\| \sum_{j=1}^{m}\frac{|D_j|}{|D|}\left\|E_{i,D_j,\sigma}-E_{i,\sigma}\right\|_K\\
        \leq&\max_{1\leq j\leq m}\sum_{i=1}^T \eta\left\|(\lambda I+L_K)^{\frac{1}{2}} \left(I-\eta G'_+(0) L_{K}\right)^{T-i}\right\|\left\|E_{i,D_j,\sigma}-E_{i,\sigma}\right\|_K\\
        \leq & \frac{C'_\eta}{G'_+(0)} \left(\lambda^{\frac{1}{2}} T+ T^{\frac{1}{2}}\right)2\kappa c_p \left(2M\right)^{2p+1}\frac{T^{p+\frac{1}{2}}}{\sigma^{2p}}
        = C_{r,2}\frac{T^{p+1}}{\sigma^{2p}},
    \end{aligned}
\end{equation}
where $C_{r,2}:=2^{\frac{5}{2}}\frac{C'_\eta}{G'_+(0)} \kappa c_p \left(2M\right)^{2p+1}$ is a constant independent of $T$, $ \lambda $ and $\sigma$.
Moreover, by \eqref{eq:21robust} with $\tau=1$, we have
\begin{equation*}
    \begin{aligned}
        &\sum_{i=1}^T\eta G'_+(0) \left\|(\lambda I+L_K)\left(I-\eta G'_+(0) L_{K}\right)^{T-i}\right\| \|f_i-f_\rho\|_K\\
        \leq & 2 \eta G'_+(0) \left( \lambda + \frac{1}{e\,G'_+(0)\,\eta\,(T-1)} \right) \kappa^{2r-1} \left\| u_{\rho} \right\|_{\rho}  + \sum_{i=2}^T\eta G'_+(0) \left\|(\lambda I+L_K)\left(I-\eta G'_+(0) L_{K}\right)^{T-i}\right\| \|f_i-f_\rho\|_K.
    \end{aligned}
\end{equation*}
From \eqref{eq:prop3}, Lemma \ref{lemma:lemma6} and Lemma \ref{lemma:lemma9}, there holds
\begin{equation*}
    \begin{aligned}
        &\sum_{i=2}^T\eta G'_+(0) \left\|(\lambda I+L_K)\left(I-\eta G'_+(0) L_{K}\right)^{T-i}\right\| \|f_i-f_\rho\|_K\\
        \leq& \sum_{i=2}^T\eta G'_+(0) \left\|(\lambda I+L_K) \left(I-\eta G'_+(0) L_{K}\right)^{T-i}\right\|\left[\left(\frac{r-\frac{1}{2}}{e\eta G'_+(0)}\right)^{r-\frac{1}{2}} \|u_\rho\|_\rho (i-1)^{-(r-\frac{1}{2})}+\sum_{l=1}^{i-1} \eta \|E_{l,\sigma}\|_K\right]\\
        \leq & \left(\frac{r-\frac{1}{2}}{e\eta G'_+(0)}\right)^{r-\frac{1}{2}} \|u_\rho\|_\rho \left( \eta G'_+(0) (1 + \kappa^{2}) + (2\eta G'_+(0)  + \frac{2}{e}) C_{r,1} \right)  \max\left\{\lambda T^{-(r-\frac{3}{2})}\log T,\lambda,T^{-(r-\frac{1}{2})}\log T,T^{-1}\right\} \\
        &+ \left( 2\eta G'_+(0)\lambda T+\left( 2\eta G'_+(0) \kappa^{2} + \frac{4}{e} \right)\log T \right) \eta \kappa c_p \left(2M\right)^{2p+1}\frac{T^{p+\frac{3}{2}}}{\sigma^{2p}}.
    \end{aligned}
\end{equation*}
Therefore, we obtain that
\begin{equation}\label{eq:step2}
    \begin{aligned}
        &\sum_{i=1}^T\eta G'_+(0) \left\|(\lambda I+L_K)\left(I-\eta G'_+(0) L_{K}\right)^{T-i}\right\| \|f_i-f_\rho\|_K \leq C_{r,3}\max\left\{ T^{-(r-\frac{1}{2})}\log T,T^{-1},\frac{T^{p+\frac{3}{2}}\log T}{\sigma^{2p}}\right\}.
    \end{aligned}
\end{equation}
where $C_{r,3}:= 2 \left(\frac{r-\frac{1}{2}}{e\eta G'_+(0)}\right)^{r-\frac{1}{2}} \|u_\rho\|_\rho \left( \eta G'_+(0) (1 + \kappa^{2} +2 C_{r,1} ) +  \frac{2 C_{r,1} }{e} \right) + \left( 2\eta G'_+(0) (1 + \kappa^{2}) + \frac{4}{e} \right) \eta \kappa c_p \left(2M\right)^{2p+1} $.

For any fixed $j \in \{1, \ldots, m\}$, by applying Lemma \ref{lemma:lemma3}-\ref{lemma:lemmaa} with $ D $ replaced by $ D_j $, we can derive that with confidence at least $1−\frac{\delta}{m} $, there holds
\begin{equation*}
    \begin{aligned}
        \mathcal{R}_{D_j,\lambda}\mathcal{A}_{D_j,\lambda}^2 \left(\mathcal{P}_{D_j,\lambda}+\mathcal{Q}_{D_j,\lambda}\|f_\rho\|_K\right)
        \leq \left(1+\kappa^{2r-1}\|u_\rho\|_\rho\right)\mathcal{R}_{|D_j|,\lambda}\mathcal{B}_{|D_j|,\lambda}\mathcal{A}_{|D_j|,\lambda}^2\left(\log \frac{16m}{\delta}\right)^6.
    \end{aligned}
\end{equation*}
Since Lemma \ref{lemma:lemma6} and Lemma \ref{lemma:lemma9} also hold with $ L_{K} $ replaced by $ L_{K,D} $, we have the same bound in (\ref{eq:step1}) and (\ref{eq:step2}) when $ L_K $ is replaced by $ L_{K,D} $.
Therefore, combining (\ref{eq:step1}), (\ref{eq:step2}) and the above result, we can derive that with confidence at least $1−\frac{\delta}{m} $, there holds
\begin{equation*}
    \begin{aligned}
    \mathcal{K}&_{D_j,\lambda}\leq \left( 4\eta G'_+(0)+ 2\eta G'_+(0) \kappa^{2} + 4 / e  \right) \log T \cdot \left(1+ 2\eta G'_+(0) \right) \left(1+\kappa^{2r-1}\|u_\rho\|_\rho\right) \mathcal{R}_{|D_j|,\lambda}\mathcal{B}_{|D_j|,\lambda}\mathcal{A}_{|D_j|,\lambda}^2\left(\log \frac{16m}{\delta}\right)^6 \\
    & +\left( 4\eta G'_+(0)+ 2\eta G'_+(0) \kappa^{2} + 4 / e  \right) \log T \cdot C_{r,2} \frac{T^{p+1}}{\sigma^{2p}}\mathcal{R}_{|D_j|,\lambda}\mathcal{A}_{|D_j|,\lambda}\left(\log \frac{16m}{\delta}\right)^3 \\
    & + \left( 4\eta G'_+(0)+ 2\eta G'_+(0) \kappa^{2} + 4 / e  \right) \log T \cdot \mathcal{R}_{|D_j|,\lambda}\mathcal{B}_{|D_j|,\lambda}\mathcal{A}_{|D_j|,\lambda}^2 \left(\log \frac{16m}{\delta}\right)^6 C_{r,3}\max\left\{ T^{-(r-\frac{1}{2})}\log T,T^{-1},\frac{T^{p+\frac{3}{2}}\log T}{\sigma^{2p}}\right\} \\
    &\leq  C'_{r,2}\mathcal{R}_{|D_j|,\lambda}\mathcal{A}_{|D_j|,\lambda}\frac{T^{p+1}}{\sigma^{2p}} \log T \left(\log \frac{16m}{\delta}\right)^3 \\
    & \qquad+ C'_{r,3} \mathcal{R}_{|D_j|,\lambda}\mathcal{B}_{|D_j|,\lambda}\mathcal{A}_{|D_j|,\lambda}^2\max\left\{\log T, T^{-(r-\frac{1}{2})}(\log T)^{2},\frac{T^{p+\frac{3}{2}}(\log T)^2}{\sigma^{2p}}\right\}\left(\log \frac{16m}{\delta}\right)^6.
    \end{aligned}
\end{equation*}
where $C'_{r,2}:=\left( 4\eta G'_+(0)+ 2\eta G'_+(0) \kappa^{2} + 4 / e  \right)C_{r,2}$ and $C'_{r,3}:= \left( 4\eta G'_+(0)+ 2\eta G'_+(0) \kappa^{2} + 4 / e  \right) \left(1+ 2\eta G'_+(0) \right) (1+\kappa^{2r-1}  \cdot $ $\|u_\rho\|_\rho) + \left( 4\eta G'_+(0)+ 2\eta G'_+(0) \kappa^{2} + 4 / e  \right) C_{r,3} $.

Plugging the above estimates into \eqref{eq:prop4}, with confidence at least $1−\delta $, we have
\begin{equation}\label{eq:probbound}
    \begin{aligned}
    \left\|\bar{f}_{T+1,D}-f_\rho\right\|_\rho&\leq (1+2\eta G'_+(0)) (1+\kappa^{2r-1}\|u_\rho\|_\rho)\mathcal{B}_{|D|,\lambda}\log \frac{12}{\delta}+\left(\frac{r}{e\eta G'_+(0)}\right)^{r} \|u_\rho\|_\rho T^{-r}\\
    &+\frac{3C_{r,2}}{2} \frac{T^{p+1}}{\sigma^{2p}}+C_{r,3}\mathcal{B}_{|D|,\lambda}\max\left\{T^{-(r-\frac{1}{2})}\log T,T^{-1},\frac{T^{p+\frac{3}{2}}\log T}{\sigma^{2p}}\right\}\log \frac{6}{\delta}\\
    &+C'_{r,2} \max_{1\leq j\leq m} \left\{ \mathcal{R}_{|D_j|,\lambda}\mathcal{A}_{|D_j|,\lambda} \right\} \frac{T^{p+1}}{\sigma^{2p}} \log T \left(\log \frac{48m}{\delta}\right)^3 \\
    &+C'_{r,3} \max_{1\leq j\leq m} \left\{ \mathcal{R}_{|D_j|,\lambda}\mathcal{B}_{|D_j|,\lambda}\mathcal{A}_{|D_j|,\lambda}^2 \right\}  \max\left\{\log T, T^{-(r-\frac{1}{2})}(\log T)^{2},\frac{T^{p+\frac{3}{2}}(\log T)^2}{\sigma^{2p}}\right\}\left(\log \frac{48m}{\delta}\right)^6.
    \end{aligned}
\end{equation}
Note that the condition on $ m $ in \eqref{eq:m1} implies
\begin{equation}
    \label{eq:condition_m}
    m|D|^{-\frac{4r+2s-1}{4r+2s}}\leq \sqrt{m}|D|^{-\frac{r}{2r+s}} \qquad \text{ and } \qquad m|D|^{-\frac{2r+s-1}{2r+s}}\leq \sqrt{m}|D|^{-\frac{2r+s-1}{4r+2s}}.
\end{equation}
Let $\lambda=|D|^{-\frac{1}{2r+s}}$, then we can adopt Assumption \ref{assume:3} and $|D_1|=\cdots=|D_m|$ to obtain that
\[
    \mathcal{B}_{|D|,\lambda} \leq 2 \kappa \left( \kappa |D|^{-\frac{4r+2s-1}{4r+2s}} + \sqrt{C_0 } |D|^{-\frac{r}{2r+s}} \right)  \leq C_{0,1} |D|^{-\frac{r}{2r+s}},
\]
and for any $j\in \{1,\ldots,m\}$,
\[
    \mathcal{B}_{|D_j|,\lambda} \leq 2 \kappa \left( \kappa m |D|^{-\frac{4r+2s-1}{4r+2s}} + \sqrt{C_0 m} |D|^{-\frac{r}{2r+s}} \right)  \leq C_{0,1} \sqrt{m}|D|^{-\frac{r}{2r+s}},
\]
\[
    \begin{aligned}
        \mathcal{R}_{|D_j|,\lambda} & \leq \max\left\{\frac{\kappa^2+1}{3},2\sqrt{\kappa^2+1}\right\} \left\{ \frac{2s}{2r+s} m |D|^{-\frac{2r+s-1}{2r+s}} \log |D| + \sqrt{\frac{2s}{2r+s} m |D|^{-\frac{2r+s-1}{2r+s}} \log |D|} \right\} \\
        & \leq C_{0,2}\sqrt{m}|D|^{-\frac{2r+s-1}{4r+2s}}\log |D|,
    \end{aligned}
\]
where $C_{0,1}:=2\kappa(\kappa+\sqrt{C_0})$, and $C_{0,2}=2\max\left\{\frac{\kappa^2+1}{3},2\sqrt{\kappa^2+1}\right\}\sqrt{\frac{2s}{2r+s}}$.

Additionally, by leveraging the above results, we can derive the following bound for $\mathcal{A}_{|D|,\lambda}$:
\begin{equation*}
    \begin{aligned}
        \mathcal{A}^2_{|D_j|,\lambda}&=1+\frac{\mathcal{R}_{|D_j|,\lambda}^2\mathcal{B}_{|D_j|,\lambda}^2}{\lambda}+2\frac{\mathcal{R}_{|D_j|,\lambda}^2\mathcal{B}_{|D_j|,\lambda}}{\sqrt{\lambda}}+\mathcal{R}_{|D_j|,\lambda}^2+\mathcal{R}_{|D_j|,\lambda}\\
        &\leq 1+C_{0,1}^2C_{0,2}^2m^2|D|^{-\frac{4r+s-2}{2r+s}}\left(\log |D|\right)^2+2C_{0,1}C_{0,2}^2m^{\frac{3}{2}}|D|^{-\frac{3r+s-\frac{3}{2}}{2r+s}}\left(\log |D|\right)^2\\
        & \qquad+C_{0,2}^2m|D|^{-\frac{2r+s-1}{2r+s}}\left(\log |D|\right)^2+C_{0,2}\sqrt{m}|D|^{-\frac{2r+s-1}{4r+2s}}\log |D|\\
        & \leq C_{0,3}+1. 
    \end{aligned}  
\end{equation*}
where  $C_{0,3}=C_{0,1}^2C_{0,2}^2+2C_{0,1}C_{0,2}^2+C_{0,2}^2+C_{0,2}$ and the last inequality follows from the condition on $ m $ in \eqref{eq:m1}.

Finally, putting the aforementioned results, $\lambda=|D|^{-\frac{1}{2r+s}}$ and $T=\left\lceil |D|^{\frac{1}{2r+s}} \right\rceil$ into (\ref{eq:probbound}), we can derive that, with confidence at least $1−\delta $, there holds
\begin{equation*}
    \begin{aligned}
    &\left\|\bar{f}_{T+1,D}-f_\rho\right\|_\rho\leq \left[ C_{0,1} (1+2\eta G'_+(0)) (1+\kappa^{2r-1}\|u_\rho\|_\rho)+\left(\frac{r}{e\eta G'_+(0)}\right)^{r} \|u_\rho\|_\rho\right]|D|^{-\frac{r}{2r+s}}\log \frac{12}{\delta}\\
    &\quad+3\cdot  2^{p} C_{r,2} \frac{|D|^{\frac{p+1}{2r+s}}}{\sigma^{2p}}+ 2^{p+\frac{3}{2}} C_{r,3} C_{0,1} |D|^{-\frac{r}{2r+s}}\max\left\{|D|^{-\frac{r-\frac{1}{2}}{2r+s}}\log |D|,|D|^{-\frac{1}{2r+s}},\frac{|D|^{\frac{p+\frac{3}{2}}{2r+s}}\log |D|}{\sigma^{2p}}\right\}\log \frac{6}{\delta}\\
    &\quad+ 2^{p+1} C'_{r,2}(C_{0,3}+1)^{\frac{1}{2}}C_{0,2}\sqrt{m}|D|^{-\frac{2r+s-1}{4r+2s}}(\log |D|)^{2} \frac{|D|^{\frac{p+1}{2r+s}}}{\sigma^{2p}}\left(\log \frac{48m}{\delta}\right)^3 \\
    &\quad + 2^{p+\frac{3}{2}} C'_{r,3} C_{0,1}C_{0,2}(C_{0,3}+1)m|D|^{-\frac{4r+s-1}{4r+2s}} \max\left\{(\log |D|)^{2}, |D|^{-\frac{r-\frac{1}{2}}{2r+s}}(\log |D|)^{3}, \frac{|D|^{\frac{p+\frac{3}{2}}{2r+s}}\left(\log |D|\right)^{3}}{\sigma^{2p}}\right\}\left(\log \frac{48m}{\delta}\right)^6.
    \end{aligned}
\end{equation*}
Note that 
\[
    \log^{6} \frac{48m}{\delta} \leq 2^6 \left( \log^6 \frac{48}{\delta}+ \log^6 m \right) \leq  2^6 \log^6 \frac{48}{\delta}\left( \log^6 |D| +1 \right),
\]
and $ D^{-\alpha}\log |D|=\alpha^{-1}D^{-\alpha}\log |D|^{\alpha} \leq \alpha^{-1} $ with $ \alpha= \frac{r-\frac{1}{2}}{2r+s}$.
Since $ r>\frac{1}{2} $, by combining the condition on $ m $ in \eqref{eq:m1}, we have with confidence at least $1−\delta $,
\[
\left\|\bar{f}_{T+1,D}-f_\rho\right\|_\rho\leq \tilde{C}_1\max\left\{|D|^{-\frac{r}{2r+s}},\frac{|D|^{\frac{p+1}{2r+s}}}{{\sigma^{2p}}}\right\}\left(\log \frac{48}{\delta}\right)^6,
\]
where $\tilde{C}_1:= C_{0,1} (1+2\eta G'_+(0)) (1+\kappa^{2r-1}\|u_\rho\|_\rho)+ \left(\frac{r}{e\eta G'_+(0)}\right)^{r} \|u_\rho\|_\rho + 3\cdot  2^{p} C_{r,2} + 2^{p+\frac{3}{2}} C_{r,3} C_{0,1} \frac{4r+2s}{2r-1} + 2^{p+5} C'_{r,2}(C_{0,3}+1)^{\frac{1}{2}}C_{0,2} + 2^{p+\frac{15}{2}} C'_{r,3} C_{0,1}C_{0,2}(C_{0,3}+1) \frac{4r+2s}{2r-1}$.
This completes the proof of Theorem \ref{thm:theorem1}. \qed

\subsection{Proof of Theorem \ref{thm:theorem2}}
In this subsection, we aim to derive the explicit learning rates in expectation for the DKRGD based on the error decomposition in Subsection \ref{sec:5.3}.
To achieve this, we first establish the error bounds in probability for $ \left\|f_{T+1,D_j}-f_\rho\right\|_\rho $ and $ \left\|f_{T+1,D_j}^*-f_\rho\right\|_\rho $ for each $j\in \{1,\ldots,m\}$, and then obtain their expected error bounds by leveraging the formula
\begin{equation}
    \label{eq:expectation}
    \mathbb{E}[\xi]=\int_{0}^{\infty} \mathrm{Prob}(\xi>t) \mathrm{d}t,
\end{equation}
for nonnegative random variables $\xi$.

For the nonnegative random variable $\xi=\left\|\bar{f}_{T+1,D}-f_\rho\right\|_\rho^2$, (\ref{eq:prob1}) in Theorem \ref{thm:theorem1} immediately yields that, for any $ u $ satisfying $u\geq \tilde{C}_1^2\left(\log 48\right)^{12}\max\left\{|D|^{-\frac{r}{2r+s}},\frac{|D|^{\frac{p+1}{2r+s}}}{{\sigma^{2p}}}\right\}^2$, there holds 
$$
\mathrm{Prob}(\xi>u)=\mathrm{Prob}(\sqrt{\xi}>\sqrt{u})\leq 48\exp \left\{-\left(\tilde{C}_1\right)^{-\frac{1}{6}} \max\left\{|D|^{-\frac{r}{2r+s}},\frac{|D|^{\frac{p+1}{2r+s}}}{{\sigma^{2p}}}\right\}^{-\frac{1}{6}} u^{\frac{1}{12}}\right\}.
$$
Then applying the formula $\mathbb{E}[\xi]=\int_{0}^{\infty} \mathrm{Prob}(\xi>t) \mathrm{d}t$ to  $\xi$, we have
\begin{equation}\label{eq:eq107}
    \begin{aligned}
    &\mathbb{E}[\left\|\bar{f}_{T+1,D}-f_\rho\right\|_\rho^2] \\
    \leq& \tilde{C}_1^2\left(\log 48\right)^{12}\max\left\{|D|^{-\frac{r}{2r+s}},\frac{|D|^{\frac{p+1}{2r+s}}}{{\sigma^{2p}}}\right\}^2+48\int_{0}^{\infty} \exp \left\{-(\tilde{C}_1)^{-\frac{1}{6}} \max\left\{|D|^{-\frac{r}{2r+s}},\frac{|D|^{\frac{p+1}{2r+s}}}{{\sigma^{2p}}}\right\}^{-\frac{1}{6}} u^{\frac{1}{12}}\right\} \mathrm{d}u\\
    =&\tilde{C}_1^2\left(\log 48\right)^{12}\max\left\{|D|^{-\frac{r}{2r+s}},\frac{|D|^{\frac{p+1}{2r+s}}}{{\sigma^{2p}}}\right\}^2+576 \tilde{C}_1^2\max\left\{|D|^{-\frac{r}{2r+s}},\frac{|D|^{\frac{p+1}{2r+s}}}{{\sigma^{2p}}}\right\}^2\int_{0}^{\infty} t^{11} e^{-t}\mathrm{d}t\\
    =& \tilde{C}_1^2\left[\left(\log 48\right)^{12}+576\Gamma(12) \right]\max\left\{|D|^{-\frac{2r}{2r+s}},\frac{|D|^{\frac{2p+2}{2r+s}}}{{\sigma^{4p}}}\right\}.
    \end{aligned}
\end{equation}

We now turn to estimate the term $ \left\|f_{T+1,D_j}-f_\rho\right\|_\rho $.
By substituting the bounds in Lemma \ref{lemma:lemma3} and Lemma \ref{lemma:lemmaa} into Proposition \ref{prop:prop6} and employing the intermediate results in proving Theorem \ref{thm:theorem1}, we can derive that with confidence at least $1−3 \delta $, there holds
\begin{equation}\label{eq:ftd-ft}
    \begin{aligned}
    \left\|f_{T+1,D_j}-f_\rho\right\|_\rho &\leq (1+\eta G'_+(0) \lambda T)(1+\kappa^{2r-1}\|u_\rho\|_\rho)\mathcal{A}_{|D_j|,\lambda}^2\mathcal{B}_{|D_j|,\lambda}\left(\log \frac{8}{\delta}\right)^5\\
    &+ C_{r,2} \left(\mathcal{A}_{|D_j|,\lambda}+\frac{1}{2}\right)\frac{T^{p+1}}{\sigma^{2p}}\left(\log \frac{8}{\delta}\right)^2+\left(\frac{r}{e\eta G'_+(0)}\right)^{r} \|u_\rho\|_\rho T^{-r}\\
    &+C_{r,3} \mathcal{A}_{|D_j|,\lambda}^2\mathcal{B}_{|D_j|,\lambda}\max\left\{ T^{-(r-\frac{1}{2})}\log T,T^{-1},\frac{T^{p+\frac{3}{2}}\log T}{\sigma^{2p}}\right\} \left(\log \frac{8}{\delta}\right)^5.
    \end{aligned}
\end{equation}
Since the condition on $ m $ in \eqref{eq:m2} also implies \eqref{eq:condition_m}, we can adopt
the bounds on $\mathcal{B}_{|D_j|,\lambda}$, $\mathcal{R}_{|D_j|,\lambda}$ and $\mathcal{A}_{|D_j|,\lambda}$ mentioned in the proof of Theorem \ref{thm:theorem1} to further derive explicit rate in probability for $ \left\|f_{T+1,D_j}-f_\rho\right\|_\rho $.
Since $\lambda=|D|^{-\frac{1}{2r+s}}$ and $T=\left\lceil |D|^{\frac{1}{2r+s}} \right\rceil$, scaling $ 3\delta $ to $ \delta $ yields that with confidence at least $1−\delta $, there holds
\begin{equation*}
    \begin{aligned}
    \left\|f_{T+1,D_j}-f_\rho\right\|_\rho &\leq \tilde{C}_{r,2}\max\left\{\mathcal{B}_{|D_j|,\lambda},\frac{T^{p+1}}{\sigma^{2p}}, \lambda^{r},  \mathcal{B}_{|D_j|,\lambda} \frac{T^{p+\frac{3}{2}}\log T}{\sigma^{2p}}  \right\}\left(\log \frac{24}{\delta}\right)^5,
    \end{aligned}
\end{equation*}
where $\tilde{C}_{r,2}:=\left((1+ 2 \eta G'_+(0))(1+\kappa^{2r-1}\|u_\rho\|_\rho)+\frac{3}{2}C_{r,2}+ \frac{2 C_{r,3}}{2r-1} \right)(C_{0,3}+1) + \left(\frac{r}{e\eta G'_+(0)}\right)^{r} \|u_\rho\|_\rho $.
Then, we can easily apply \eqref{eq:expectation} to obtain that for any $ j \in \{ 1, \ldots ,m \} $,
\begin{equation}
    \label{eq:expectation1}
    \begin{aligned}
        \mathbb{E}\left[\left\|f_{T+1,D_j}-f_\rho\right\|_\rho^2\right] &\leq \tilde{C}_{r,2}^2 \max\left\{\mathcal{B}_{|D_j|,\lambda},\frac{T^{p+1}}{\sigma^{2p}}, \lambda^{r},  \mathcal{B}_{|D_j|,\lambda} \frac{T^{p+\frac{3}{2}}\log T}{\sigma^{2p}}  \right\}^2\left(\log 24\right)^{10}\\
        &+24\int_{0}^{\infty} \exp \left\{-(\tilde{C}_{r,2})^{-\frac{1}{5}}  \max\left\{\mathcal{B}_{|D_j|,\lambda},\frac{T^{p+1}}{\sigma^{2p}}, \lambda^{r},  \mathcal{B}_{|D_j|,\lambda} \frac{T^{p+\frac{3}{2}}\log T}{\sigma^{2p}} \right\}^{-\frac{1}{5}} u^{\frac{1}{10}}\right\} \mathrm{d}u\\
        &=\tilde{C}_{r,2}^2\left[\left(\log 24\right)^{10}+240\Gamma(10)\right] \max\left\{\mathcal{B}_{|D_j|,\lambda},\frac{T^{p+1}}{\sigma^{2p}}, \lambda^{r},  \mathcal{B}_{|D_j|,\lambda} \frac{T^{p+\frac{3}{2}}\log T}{\sigma^{2p}} \right\}^2.
    \end{aligned}
\end{equation}

In the following, we will estimate the term $ \left\|f_{T+1,D_j}^*-f_\rho\right\|_\rho $ for each $j\in \{1,\ldots,m\}$ by applying the derived decomposition of $f_{T+1,D_j}^*-f_\rho$ for different cases of $r$ in Subsection \ref{sec:5.3}.
By invoking Equation (22) in \cite[Prop.~4.2]{guo2018gradient} under the constant step size $ \eta $, we obtain that for any $\tau,\lambda>0$, there holds
$$
\left\|(\lambda I+L_{K,D})^\tau \left(I-\eta G'_+(0) L_{K,D}\right)^{T}\right\|=C_\tau\left(\lambda^\tau + T^{-\tau} \right).
$$
where $C_\tau:= 2^\tau\left[1 + \left(\frac{\tau}{eG'_+(0)}\right)^\tau \eta^{-\tau} \right]$.
For brevity, we denote $ C_{r} := C_{\tau}|_{\tau=r} $.

For the regime $\frac{1}{2} < r\leq 1$, by substituting the corresponding estimates in Lemma \ref{lemma:lemmaa}, Lemma \ref{lemma:lemma6} and Lemma \ref{lemma:lemma5} into (\ref{eq:expecr1}), we obtain that with confidence at least $1-\delta$,
\begin{equation*}
    \begin{aligned}
        \left\|f_{T+1,D_j}^*-f_\rho\right\|_\rho&\leq C_{r}\left(\lambda^r+T^{-r}\right) \mathcal{A}_{|D_j|,\lambda}^{2r}\left(\log \frac{8}{\delta}\right)^{4r} \left\| u_{\rho} \right\|_{\rho}  +\frac{C'_\eta}{G'_+(0)} \left(\lambda^{\frac{1}{2}} T + T^{\frac{1}{2}}\right) \kappa c_p \left(2M\right)^{2p+1}\frac{T^{p+\frac{1}{2}}}{\sigma^{2p}}.
    \end{aligned}
\end{equation*}
For $\lambda=|D|^{-\frac{1}{2r+s}}$ and $T=\left\lceil |D|^{\frac{1}{2r+s}} \right\rceil$, combining the partition size $m$ specified in (\ref{eq:m2}) and the bound on $ \mathcal{A}_{|D_j|,\lambda} $, the preceding inequality simplifies to the following high-probability guarantee:
\begin{equation*}
    \begin{aligned}
        \left\|f_{T+1,D_j}^*-f_\rho\right\|_\rho&\leq C_{r,4}\max\left\{\lambda^r,\frac{T^{p+1}}{\sigma^{2p}}\right\}\left(\log \frac{8}{\delta}\right)^{4r},
    \end{aligned}
\end{equation*}
where $C_{r,4}:=2C_{r}(C_{0,3}+1)^r \left\| u_{\rho} \right\|_{\rho} + \frac{C'_\eta}{G'_+(0)} (\sqrt{2}+1) \kappa c_p \left(2M\right)^{2p+1}$.
Finally, applying the tail-expectation identity \eqref{eq:expectation} to the random variable $ \xi= \left\|f_{T+1,D_j}^*-f_\rho\right\|_\rho $ yields the expected error bound
\begin{equation*}
    \mathbb{E} \left[  \left\|f_{T+1,D_j}^*-f_\rho\right\|_{\rho} \right]  \leq C_{r,4}\left[(\log 8)^{4r} + 32r\Gamma(4r)\right]\max\left\{\lambda^r,\frac{T^{p+1}}{\sigma^{2p}}\right\}.
\end{equation*}

For $1 < r\leq \frac{3}{2}$, combining (\ref{eq:expecr2}), Lemma \ref{lemma:lemma3}, Lemma \ref{lemma:lemma6} and Lemma \ref{lemma:lemma5}, it follows that with confidence at least $1-\delta$,
\begin{equation*}
    \begin{aligned}
        \left\|f_{T+1,D_j}^*-f_\rho\right\|_\rho&\leq C_{r}\left(\lambda^r+T^{-r}\right)\left(\frac{\mathcal{B}_{|D_j|,\lambda}}{\sqrt{\lambda}}+1\right)^{2r}\left(\log \frac{2}{\delta}\right)^{2r} \left\| u_{\rho} \right\|_{\rho} + \frac{C'_\eta}{G'_+(0)} \left(\lambda^{\frac{1}{2}} T+ T^{\frac{1}{2}}\right) \kappa c_p \left(2M\right)^{2p+1}\frac{T^{p+\frac{1}{2}}}{\sigma^{2p}}.
    \end{aligned}
\end{equation*}
Note that $\lambda=|D|^{-\frac{1}{2r+s}}$ and $T=\left\lceil |D|^{\frac{1}{2r+s}} \right\rceil$.
When $r>1$ and $m$ satisfies (\ref{eq:m2}), we have
\[
    \frac{\mathcal{B}_{|D_j|,\lambda}}{\sqrt{\lambda}}+1\leq C_{0,1}\sqrt{m}|D|^{-\frac{r-\frac{1}{2}}{2r+s}} +1 \leq C_{0,1}+1.
\]
Then, we have that with confidence at least $1−\delta $, there holds
\begin{equation*}
    \begin{aligned}
        \left\|f_{T+1,D_j}^*-f_\rho\right\|_\rho&\leq C_{r,5}\max\left\{\lambda^r,\frac{T^{p+1}}{\sigma^{2p}}\right\}\left(\log \frac{2}{\delta}\right)^{2r}.
    \end{aligned}
\end{equation*}
where $C_{r,5}:=2 C_{r} (C_{0,1}+1)^{2r} \left\| u_{\rho} \right\|_{\rho} + \frac{C'_\eta}{G'_+(0)} (\sqrt{2}+1) \kappa c_p \left(2M\right)^{2p+1}$.
Further, we can also derive the expected error bound
\begin{equation*}
    \mathbb{E}\left[\left\|f_{T+1,D_j}^*-f_\rho\right\|_{\rho} \right]\leq C_{r,5}\left[(\log 2)^{2r} + 4r\Gamma(2r)\right]\max\left\{\lambda^r,\frac{T^{p+1}}{\sigma^{2p}}\right\}.
\end{equation*}

For the case of $\frac{3}{2}<r\leq \frac{5}{2}$, putting the estimates in  Lemma \ref{lemma:lemma3}, Lemma \ref{lemma:lemma6} and Lemma \ref{lemma:lemma5} into (\ref{eq:expecr3}) yields that with confidence at least $1-\delta$,
\begin{equation*}
    \begin{aligned}
        \bigl\|f_{T+1,D_j}^*-f_\rho\bigr\|_\rho& \leq C_{r}\left(\lambda^r+T^{-r}\right)\|u_\rho\|_\rho\left(\frac{\mathcal{B}_{|D_j|,\lambda}}{\sqrt{\lambda}}+1\right)\left(1+\left(\frac{\mathcal{B}_{|D_j|,\lambda}}{\sqrt{\lambda}}+1\right)^{2r-3}\right)\left(\log \frac{4}{\delta}\right)^{3}\\
        & \qquad +C_{1} \left(\lambda+T^{-1}\right) \|u_\rho\|_\rho(1 + \kappa^{2}) \left(\frac{\mathcal{B}_{|D_j|,\lambda}}{\sqrt{\lambda}}+1\right)^{2} \mathcal{B}_{|D_j|,\lambda}\left(\log \frac{4}{\delta}\right)^{3}\\
        & \qquad +C_{r} \left(\lambda^r+T^{-r}\right) \left(\frac{\mathcal{B}_{|D_j|,\lambda}}{\sqrt{\lambda}}+1\right) \|u_\rho\|_\rho \log \frac{4}{\delta} + \frac{C'_\eta}{G'_+(0)} \left(\lambda^{\frac{1}{2}} T+ T^{\frac{1}{2}}\right) \kappa c_p \left(2M\right)^{2p+1}\frac{T^{p+\frac{1}{2}}}{\sigma^{2p}}.
    \end{aligned}
\end{equation*}
Analogously, for $\lambda=|D|^{-\frac{1}{2r+s}}$ and $T=\left\lceil |D|^{\frac{1}{2r+s}} \right\rceil$, we have that with confidence at least $1−\delta $,
\begin{equation*}
    \begin{aligned}
        \bigl\|f_{T+1,D_j}^*-f_\rho\bigr\|_\rho &\leq C_{r,6}\max\left\{\lambda^r,\lambda \mathcal{B}_{|D_j|,\lambda},\frac{T^{p+1}}{\sigma^{2p}}\right\}\left(\log \frac{4}{\delta}\right)^{3}.
    \end{aligned}
\end{equation*}
where 
$C_{r,6}:=6 C_{r} \left\| u_{\rho} \right\|_{\rho} (C_{0,1}+1)^{2r-2} + 2 C_{1}  \left\| u_{\rho} \right\|_{\rho} (1+\kappa^{2}) (C_{0,1}+1)^{2} + \frac{C'_\eta}{G'_+(0)} (\sqrt{2}+1) \kappa c_p \left(2M\right)^{2p+1} $.
Then it can be easily derived that
\begin{equation*}
    \mathbb{E}\left[\bigl\|f_{T+1,D_j}^*-f_\rho\bigr\|_\rho\right]\leq C_{r,6}\left[(\log 4)^{3} + 12\Gamma(3)\right]\max\left\{\lambda^r,\lambda \mathcal{B}_{|D_j|,\lambda},\frac{T^{p+1}}{\sigma^{2p}}\right\}.
\end{equation*}

When $r>\frac{5}{2}$, by (\ref{eq:expecr4}), Lemma \ref{lemma:lemma3}, Lemma \ref{lemma:lemma6} and Lemma \ref{lemma:lemma5}, with confidence at least $1-\delta$, there holds
\begin{equation*}
    \begin{aligned}
        \bigl\|f_{T+1,D_j}^*-f_\rho\bigr\|_\rho & \leq \left(r-\frac{3}{2}\right) \kappa^{2r-5} C_{\frac{3}{2}} \left(\lambda^{\frac{3}{2}}+T^{-\frac{3}{2}}\right) \frac{4\kappa^{2}}{\sqrt{|D_j|}}  \|u_\rho\|_\rho \left(\frac{\mathcal{B}_{|D_j|,\lambda}}{\sqrt{\lambda}}+1\right) \left( \log \frac{6}{\delta} \right)^{2} \\
        & \qquad + C_{1} (1 + \kappa^{2})^{r-\frac{3}{2}} \|u_\rho\|_\rho \left(\lambda+T^{-1}\right)\left(\frac{\mathcal{B}_{|D_j|,\lambda}}{\sqrt{\lambda}}+1\right)^2  \mathcal{B}_{|D_j|,\lambda}\left(\log \frac{6}{\delta}\right)^{3}\\
        & \qquad + C_{r} \left(\lambda^r+T^{-r}\right) \left(\frac{\mathcal{B}_{|D_j|,\lambda}}{\sqrt{\lambda}}+1\right) \|u_\rho\|_\rho \log \frac{6}{\delta} + \frac{C'_\eta}{G'_+(0)} \left(\lambda^{\frac{1}{2}} T+ T^{\frac{1}{2}}\right) \kappa c_p \left(2M\right)^{2p+1}\frac{T^{p+\frac{1}{2}}}{\sigma^{2p}},
    \end{aligned}
\end{equation*}
Note that with the partition size $m$ specified in (\ref{eq:m2}), we also have $ \lambda^{-\frac{1}{2}} \mathcal{B}_{|D_j|,\lambda} \leq C_{0,1} +1 $.
Hence, for $\lambda=|D|^{-\frac{1}{2r+s}}$ and $T=\left\lceil |D|^{\frac{1}{2r+s}} \right\rceil$, there holds with confidence at least $1−\delta $ that
\begin{equation*}
    \begin{aligned}
        \bigl\|f_{T+1,D_j}^*-f_\rho\bigr\|_\rho &\leq C_{r,7}\max\left\{\lambda^r,\frac{\lambda^{\frac{3}{2}}}{\sqrt{|D_j|}},\lambda \mathcal{B}_{|D_j|,\lambda},\frac{T^{p+1}}{\sigma^{2p}}\right\}\left(\log \frac{6}{\delta}\right)^{3}.
    \end{aligned}
\end{equation*}
where $C_{r,7}:= \|u_\rho\|_\rho (C_{0,1}+1)^{2} \left[ (8r-12) \kappa^{2r-3} C_{\frac{3}{2}} + 2C_{1} (1+\kappa^{2})^{r-\frac{3}{2}} +2 C_{r} \right] + \frac{C'_\eta}{G'_+(0)} (\sqrt{2}+1) \kappa c_p \left(2M\right)^{2p+1} $.
Further, we can apply \eqref{eq:expectation} to obtain
\begin{equation*}
    \mathbb{E}\left[\bigl\|f_{T+1,D_j}^*-f_\rho\bigr\|_\rho\right]\leq C_{r,7}\left[(\log 6)^{3} + 18\Gamma(3)\right]\max\left\{\lambda^r,\frac{\lambda^{\frac{3}{2}}}{\sqrt{|D_j|}},\lambda \mathcal{B}_{|D_j|,\lambda},\frac{T^{p+1}}{\sigma^{2p}}\right\}.
\end{equation*}

Before concluding the proof, we remark that for $ r > \frac{5}{2} $, the constraint on partition size $ m $ derived from the above bound is more stringent than condition \eqref{eq:m2}.
Specifically, ensuring $ \frac{\lambda^{\frac{3}{2}}}{\sqrt{|D_j|}} = \mathcal{O}(|D|^{-\frac{2r}{2r+s}})$ requires $ m \leq |D|^{\frac{3+s}{2r+s}} $, which imposes a tighter restriction on $ m $ for lager $r$.
To overcome this bottleneck, we employ (\ref{eq:eq107}) to derive the explicit learning rates in Theorem \ref{thm:theorem2} for $r>\frac{5}{2}$.
Conversely, for $ \frac{5-s}{2} < r \leq \frac{5}{2} $, condition \eqref{eq:m1} is less restrictive than \eqref{eq:m2}; thus, we also apply \eqref{eq:eq107} to establish the rates.
This distinction clarifies why the regularity regimes in \eqref{eq:m2} differ from those implied by the intermediate technical conditions.

Finally, to derive the explicit learning rates in Theorem \ref{thm:theorem2}, we need the following results
$$
\lambda^2 \mathcal{B}_{|D_j|,\lambda}^2=C_{0,1}^2 m |D|^{-\frac{2r+2}{2r+s}}\leq C_{0,1}^2 |D|^{-\frac{2r}{2r+s}},
$$
and
$$
\sum_{j=1}^{m} \frac{|D_j|^2}{|D|^2} \mathcal{B}_{|D_j|,\lambda}^2=\sum_{j=1}^{m} \frac{|D_j|^2}{|D|^2} C_{0,1}^{2} m |D|^{-\frac{2r}{2r+s}} \leq C_{0,1}^2 |D|^{-\frac{2r}{2r+s}},
$$
which can be verified by the partition size $m$ specified in (\ref{eq:m2}).
Since $\left\|\mathbb{E}\left[f_{T+1,D_j}\right]-f_\rho\right\|_\rho\leq \mathbb{E} \left[ \left\|f_{T+1,D_j}^*-f_\rho\right\|_{\rho} \right] $, it follows from (\ref{eq:expectationeq}) that
\begin{equation*}
    \begin{aligned}
    \mathbb{E}\left[\left\|\bar{f}_{T+1,D}-f_\rho\right\|_\rho^2\right] \leq  \sum_{j=1}^{m} \frac{|D_j|^2}{|D|^2} \mathbb{E}\left[\left\|f_{T+1,D_j}-f_\rho\right\|_\rho^2\right]+\sum_{j=1}^{m} \frac{|D_j|}{|D|} \left\{\mathbb{E}\left[\left\|f_{T+1,D_j}^*-f_\rho\right\|_\rho\right]\right\}^2.
    \end{aligned}
\end{equation*}
For $ \frac{1}{2} < r \leq \frac{5-s}{2} $, combining \eqref{eq:expectation1} and the above bounds on $ \mathbb{E} \left[ \left\|f_{T+1,D_j}^*-f_\rho\right\|_{\rho} \right] $ yields that
\begin{equation*}
    \begin{aligned}
    & \mathbb{E}\Bigl[\bigl\|\bar{f}_{T+1,D}-f_\rho\bigr\|_\rho^2\Bigr] \\
    \leq{} & A_r \sum_{j=1}^{m} \frac{|D_j|^2}{|D|^2} 
    \max\left\{\mathcal{B}_{|D_j|,\lambda},\frac{T^{p+1}}{\sigma^{2p}}, \lambda^{r},  
    \mathcal{B}_{|D_j|,\lambda} \frac{T^{p+\frac{3}{2}}\log T}{\sigma^{2p}} \right\}^2 + B_r \sum_{j=1}^{m} \frac{|D_j|}{|D|} 
    \max \left\{ \lambda^r,\frac{T^{p+1}}{\sigma^{2p}}, \lambda \mathcal{B}_{|D_j|,\lambda} \right\}^{2} \\
    \leq{} &  2^{2p+2} C_{0,1}^{2} \left( A_r  \frac{8(2r+s)^{2}}{(2r-1)^{2}} + B_r   \right)
    \max\left\{ |D|^{-\frac{2r}{2r+s}}, \frac{T^{2p+2}}{\sigma^{4p}} \right\},
    \end{aligned}
\end{equation*}
where $A_r:=\tilde{C}_{r,2}^2\bigl[\left(\log 24\right)^{10}+240\Gamma(10)\bigr]$ and $B_r:= C_{r,4}^{2} \left[(\log 8)^{4r} + 32r\Gamma(4r)\right]^{2} + C_{r,5}^2\bigl[\left(\log 2\right)^{3} + 6\Gamma(3)\bigr]^2+C_{r,6}^2\bigl[\left(\log 6\right)^{3} + 18\Gamma(3)\bigr]^2$.

The proof is completed by combining \eqref{eq:eq107} for $r>\frac{5-s}{2}$ and the above bound for $ \frac{1}{2} < r \leq \frac{5-s}{2} $, where
\begin{align*}
    \tilde{C}_2&:=\tilde{C}_1^2\bigl[ \left(\log 48\right)^{12}+576\Gamma(12) \bigr] + 2^{2p+2} C_{0,1}^{2} \left( A_r  \frac{8(2r+s)^{2}}{(2r-1)^{2}} + B_r   \right).
\end{align*}

\subsection{Proof of Theorem \ref{thm:theorem3}}

To prove Theorem \ref{thm:theorem3}, we adopt the error decomposition in Subsection \ref{sec:5.4} and establish the high-probability bounds for $ \left\|\bar{f}_{T+1,D}^l-f_{T+1,D}\right\|_\rho $ and $ \left\|f_{T+1,D}-f_\rho\right\|_\rho $, respectively.

First, we can directly apply the derived high-probability bounds for $ \left\|f_{T+1,D_j}-f_\rho\right\|_\rho $ to estimate $ \left\|f_{T+1,D}-f_\rho\right\|_\rho $ by substituting $ D_j $ with $ D $ in \eqref{eq:ftd-ft}.
Specifically, there holds with confidence at least $1-\delta$ that
\begin{equation*}
    \begin{aligned}
    \left\|f_{T+1,D}-f_\rho\right\|_\rho &\leq (1+\eta G'_+(0) \lambda T)(1+\kappa^{2r-1}\|u_\rho\|_\rho)\mathcal{A}_{|D|,\lambda}^2\mathcal{B}_{|D|,\lambda}\left(\log \frac{24}{\delta}\right)^5\\
    &+ C_{r,2} \left(\mathcal{A}_{|D|,\lambda}+\frac{1}{2}\right)\frac{T^{p+1}}{\sigma^{2p}}\left(\log \frac{24}{\delta}\right)^2+\left(\frac{r}{e\eta G'_+(0)}\right)^{r} \|u_\rho\|_\rho T^{-r}\\
    &+C_{r,3} \mathcal{A}_{|D|,\lambda}^2\mathcal{B}_{|D|,\lambda}\max\left\{ T^{-(r-\frac{1}{2})}\log T,T^{-1},\frac{T^{p+\frac{3}{2}}\log T}{\sigma^{2p}}\right\} \left(\log \frac{24}{\delta}\right)^5.
    \end{aligned}
\end{equation*}
Since $ \lambda= |D|^{-\frac{1}{2r+s}} $ and $ T=\left\lceil |D|^{\frac{1}{2r+s}} \right\rceil $, it can be directly derived that $ \mathcal{B}_{|D|,\lambda} \leq C_{0,1} |D|^{-\frac{r}{2r+s}} $ and $ \mathcal{A}_{|D|,\lambda}^2 \leq C_{0,3} +1 $ by taking $ m=1 $ on the bounds on $\mathcal{B}_{|D_j|,\lambda}$, $\mathcal{R}_{|D_j|,\lambda}$ and $\mathcal{A}_{|D_j|,\lambda}$ in the proof of Theorem \ref{thm:theorem1}.
Hence, we have that with confidence at least $1-\delta$,
\begin{equation}
    \label{eq:tt002}
    \left\|f_{T+1,D}-f_\rho\right\|_\rho \leq \tilde{C}_{r,3}\max\left\{|D|^{-\frac{r}{2r+s}},\frac{|D|^{\frac{p+1}{2r+s}}}{\sigma^{2p}}\right\}\left(\log \frac{24}{\delta}\right)^5,
\end{equation}
where $ \tilde{C}_{r,3} := \left( C_{0,1} (1+ 2 \eta G'_+(0))(1+\kappa^{2r-1}\|u_\rho\|_\rho)+ 3\cdot 2^{p} C_{r,2}+ 2^{p+\frac{3}{2}} C_{0,1} C_{r,3} \frac{ 4r+2s}{2r-1} \right)(C_{0,3}+1) + \left(\frac{r}{e\eta G'_+(0)}\right)^{r} \|u_\rho\|_\rho$.
It is evident that this result differs from that in \cite{guo2018gradient} only by a constant factor.

Now we turn to estimate $ \left\|\bar{f}^{l}_{T+1,D}-f_{T+1,D}\right\|_\rho $.
By (\ref{eq:communication}), we first need to estimate the bound of $\left\|\bar{f}_{T+1,D}^{0}-f_{T+1,D}\right\|_\rho$, which equals to $ \left\|\bar{f}_{T+1,D}-f_{T+1,D}\right\|_\rho $.
For convenience, the decomposition $ \bar{f}_{T+1,D}-f_{T+1,D}=\bar{f}_{T+1,D}-f_{T+1}+f_{T+1}-f_{T+1,D}$ is adopted, which allows us to separately estimate the two terms $ \left\|\bar{f}_{T+1,D}-f_{T+1}\right\|_\rho $ and $ \left\|f_{T+1,D}-f_{T+1}\right\|_\rho $.

To estimate $\bar{f}_{T+1,D}-f_{T+1}$, by Proposition \ref{prop:prop2} and the proof of Theorem \ref{thm:theorem1}, we have that with confidence at least $1−\delta $, there holds
\begin{equation*}
    \begin{aligned}
        \left\|\bar{f}_{T+1,D}-f_{T+1}\right\|_\rho&\leq (1+ 2\eta G'_+(0) ) (1+\kappa^{2r-1}\|u_\rho\|_\rho)\mathcal{B}_{|D|,\lambda}\log \frac{12}{\delta}  \\
        &+C_{r,2} \frac{T^{p+1}}{\sigma^{2p}}+C_{r,3}\mathcal{B}_{|D|,\lambda}\max\left\{T^{-(r-\frac{1}{2})}\log T,T^{-1},\frac{T^{p+\frac{3}{2}}\log T}{\sigma^{2p}}\right\}\log \frac{6}{\delta} \\
        &+C'_{r,2} \max_{1\leq j\leq m} \left\{ \mathcal{R}_{|D_j|,\lambda}\mathcal{A}_{|D_j|,\lambda} \right\} \frac{T^{p+1}}{\sigma^{2p}} \log T \left(\log \frac{48m}{\delta}\right)^3   \\
        &+C'_{r,3} \max_{1\leq j\leq m} \left\{ \mathcal{R}_{|D_j|,\lambda}\mathcal{B}_{|D_j|,\lambda}\mathcal{A}_{|D_j|,\lambda}^2 \right\}  \max\left\{\log T, T^{-(r-\frac{1}{2})}(\log T)^{2},\frac{T^{p+\frac{3}{2}}(\log T)^2}{\sigma^{2p}}\right\}\left(\log \frac{48m}{\delta}\right)^6.  
    \end{aligned}
\end{equation*}
Since the condition \eqref{eq:m3} still implies \eqref{eq:condition_m}, we can apply the bounds on $\mathcal{B}_{|D_j|,\lambda}$, $\mathcal{R}_{|D_j|,\lambda}$ and $\mathcal{A}_{|D_j|,\lambda}$ in the proof of Theorem \ref{thm:theorem1} to further simplify the above bound as
\begin{equation}\label{eq:communication11}
    \begin{aligned}
    &\left\|\bar{f}_{T+1,D}-f_{T+1}\right\|_\rho \\
    \leq& \tilde{C}_{3,1} \max\left\{\mathcal{B}_{|D|,\lambda},\frac{T^{p+1}}{\sigma^{2p}}\right\}\left(\log \frac{48m}{\delta}\right)^6 + \tilde{C}_{3,2} \max_{1\leq j\leq m} \left\{ \mathcal{R}_{|D_j|,\lambda}\mathcal{B}_{|D_j|,\lambda} \right\} \log T\left(\log \frac{48m}{\delta}\right)^6,
    \end{aligned}
\end{equation}
where $ \tilde{C}_{3,1} := (1+ 2 \eta G'_+(0) ) (1+\kappa^{2r-1}\|u_\rho\|_\rho) +C_{r,2} + \sqrt{2} C_{r,3} C_{0,1} \frac{4r+2s}{2r-1} + C'_{r,2}C_{0,2}(C_{0,3}+1)^{\frac{1}{2}} + \sqrt{2} C'_{r,3}C_{0,1}C_{0,2}(C_{0,3}+1)  $ and $\tilde{C}_{3,2} := \frac{2}{2r-1} C'_{r,3}(C_{0,3}+1) $.

Since for each $ f \in \mathcal{H}_K$, there holds $\|f\|_\rho=\left\|L_K^{\frac{1}{2}}f\right\|_K$.
Considering that $f_{T+1,D}, f_{T+1}\in \mathcal{H}_K$, we have
$$
\|f_{T+1,D}-f_{T+1}\|_\rho=\left\|L_K^{\frac{1}{2}}(f_{T+1,D}-f_{T+1})\right\|_K\leq \left\|(\lambda I+L_{K})^{\frac{1}{2}}(f_{T+1,D}-f_{T+1})\right\|_K.
$$
By Proposition \ref{prop:prop1}, the proof of Theorem \ref{thm:theorem1} and the partition size $m$ specified in (\ref{eq:m3}), we can obtain that with confidence at least $1−\delta $, there holds
\begin{equation}\label{eq:communication12}
    \begin{aligned}
    \left\|f_{T+1,D}-f_{T+1}\right\|_\rho &\leq (1 + 2 \eta G'_+(0) )\mathcal{A}_{D,\lambda}^2(\mathcal{P}_{D,\lambda}+\mathcal{Q}_{D,\lambda}\|f_\rho\|_K)\\
    &+C_{r,2} \mathcal{A}_{D,\lambda} \frac{T^{p+1}}{\sigma^{2p}} + C_{r,3} \mathcal{A}_{D,\lambda}^2\mathcal{Q}_{D,\lambda} \max\left\{ T^{-(r-\frac{1}{2})}\log T,T^{-1},\frac{T^{p+\frac{3}{2}}\log T}{\sigma^{2p}}\right\}  \\
    &\leq \tilde{C}_{3,3} \max\left\{\mathcal{B}_{|D|,\lambda},\frac{T^{p+1}}{\sigma^{2p}}\right\} \left(\log \frac{24}{\delta}\right)^5,
    \end{aligned}
\end{equation}
where $ \tilde{C}_{3,3} := (1+ 2 \eta G'_+(0) ) (1+\kappa^{2r-1}\|u_\rho\|_\rho)(C_{0,3}+1) + C_{r,2}(C_{0,3}+1)^{\frac{1}{2}} + \sqrt{2} C_{r,3} C_{0,1} \frac{4r+2s}{2r-1} (C_{0,3}+1) $.

With the triangle inequality, we combine (\ref{eq:communication11}) and (\ref{eq:communication12})to derive that with confidence at least $1−2\delta $, there holds
\begin{equation*}
    \begin{aligned}
        \left\|\bar{f}_{T+1,D}^{0}-f_{T+1,D}\right\|_\rho&=\left\|\bar{f}_{T+1,D}-f_{T+1,D}\right\|_\rho \leq \left\|\bar{f}_{T+1,D}-f_{T+1}\right\|_\rho+\left\|f_{T+1,D}-f_{T+1}\right\|_\rho\\
        &\leq ( \tilde{C}_{3,1} + \tilde{C}_{3,3}) \max\left\{\mathcal{B}_{|D|,\lambda},\frac{T^{p+1}}{\sigma^{2p}}\right\}\left(\log \frac{48m}{\delta}\right)^6\\
        & \qquad + \tilde{C}_{3,2} \max_{1\leq j\leq m} \left\{ \mathcal{R}_{|D_j|,\lambda}\mathcal{B}_{|D_j|,\lambda} \right\} \log T \left(\log \frac{48m}{\delta}\right)^6.
    \end{aligned}
\end{equation*}
Then we can adopt (\ref{eq:communication}) to bound $ \left\|\bar{f}^{l}_{T+1,D}-f_{T+1,D}\right\|_\rho $.
Note that bounding the sum term in (\ref{eq:communication}) relies on the high-probability bounds of $\mathcal{Q}_{|D|,\lambda}$, $\mathcal{Q}_{|D_j|,\lambda}$ and $\mathcal{A}_{|D_j|,\lambda}$, which are also analyzed in deriving the bound of $\left\|\bar{f}_{T+1,D}^{0}-f_{T+1,D}\right\|_\rho$.
Hence, by Lemma \ref{lemma:lemma3} and Lemma \ref{lemma:lemmaa}, we have that with confidence at least $1−2\delta $, there holds
\begin{equation}\label{eq:communication13}
    \begin{aligned}
        &\left\|f_{T+1,D}-\bar{f}_{T+1,D}^l\right\|_\rho \leq  \max_{1\leq j\leq m} \left( (\mathcal{Q}_{D,\lambda} + \mathcal{Q}_{D_j,\lambda}) \mathcal{A}_{D_j,\lambda} C'_\eta \left(\lambda^{\frac{1}{2}} T+ T^{\frac{1}{2}}\right)\right)^l \left\|\bar{f}_{T+1,D}^{0}-f_{T+1,D}\right\|_\rho   \\
        \leq & (4+2\sqrt{2})^{l} (C'_\eta)^l  (C_{0,3}+1)^{\frac{l}{2}} \lambda^{-\frac{l}{2}}  \max_{1\leq j\leq m}  \mathcal{B}_{|D_j|,\lambda}^{l} \cdot  ( \tilde{C}_{3,1} + \tilde{C}_{3,3}) \max\left\{\mathcal{B}_{|D|,\lambda},\frac{T^{p+1}}{\sigma^{2p}}\right\}\left(\log \frac{48m}{\delta}\right)^{6+3l}  \\
        & \quad + (4+2\sqrt{2})^{l} (C'_\eta)^l  (C_{0,3}+1)^{\frac{l}{2}} \lambda^{-\frac{l}{2}}  \max_{1\leq j\leq m}  \mathcal{B}_{|D_j|,\lambda}^{l} \cdot \tilde{C}_{3,2} \max_{1\leq j\leq m} \left\{ \mathcal{R}_{|D_j|,\lambda}\mathcal{B}_{|D_j|,\lambda} \right\} \log T \left(\log \frac{48m}{\delta}\right)^{6+3l}. \\
        & \leq \tilde{C}_{3,4}\max\left\{|D|^{-\frac{r}{2r+s}},\frac{|D|^{\frac{p+1}{2r+s}}}{\sigma^{2p}}\right\} \left(\log \frac{48}{\delta}\right)^{6+3l}   .
    \end{aligned}
\end{equation}
where $ \tilde{C}_{3,4}:= (4+2\sqrt{2})^{l} 2^{6+3l} (C'_\eta)^l C_{0,1}^{l} (C_{0,3}+1)^{\frac{l}{2}} \left[ (C_{0,1} + 2^{p+1}) ( \tilde{C}_{3,1} +  \tilde{C}_{3,3}) +  C_{0,1} C_{0,2}  \tilde{C}_{3,2} \right] $.

It is observed that the derivation of both high-probability upper bounds for $ \left\|f_{T+1,D}-f_{T+1}\right\|_\rho $ and $ \left\|f_{T+1,D}-f_{\rho}\right\|_\rho $ relies on the estimation of the bounds of $\mathcal{P}_{|D|,\lambda}$, $\mathcal{Q}_{|D|,\lambda}$ and $\mathcal{A}_{|D|,\lambda}$.
Consequently, these two bounds hold simultaneously.
Applying the triangle inequality and scaling $ 2 \delta $ to $ \delta $, we can combine \eqref{eq:tt002} and \eqref{eq:communication13} to derive that with confidence at least $1-\delta$, there holds
\begin{equation*}
    \left\|\bar{f}_{T+1,D}^l-f_\rho\right\|_\rho\leq \tilde{C}_3\max\left\{|D|^{-\frac{r}{2r+s}},\frac{|D|^{\frac{p+1}{2r+s}}}{\sigma^{2p}}\right\} \left(\log \frac{96}{\delta}\right)^{6+3l}.
\end{equation*}
where $\tilde{C}_3= \tilde{C}_{3,4} + \tilde{C}_{r,3} $.
This completes the proof of Theorem \ref{thm:theorem3}.

\section*{Appendix}
In this appendix, we give the proof of  Lemma \ref{lemma:lemma2} and Lemma \ref{lemma:lemma1}.

\noindent\textbf{Proof of Lemma \ref{lemma:lemma2}.}
We prove \eqref{eq:lemma2} by induction, $f_1=0$ is obviously satisfies the inequality (\ref{eq:lemma2}). 
Then for $t=2$,  (\ref{eq:ft1}) indicates that
\begin{equation*}
    \|f_2\|_K=\left\|-\eta \int_{\mathcal{Z}} G'(\xi_{t,\sigma}(z)) (-y)K_x\mathrm{d}\rho\right\|_K\leq \eta \kappa M C_G \leq M \sqrt{\eta C_G} \leq \frac{M}{\kappa},
\end{equation*}
which means that the inequality (\ref{eq:lemma2}) also holds for $t=2$.
When $t>2$, we denote $H_t=\int_{\mathcal{Z}} G'(\xi_{t,\sigma}(z)) (f_t(x)-y)K_x\mathrm{d}\rho$ and have that
\begin{align}\label{eq:lemma12}
    &\|f_{t+1}\|_K^2= \|f_{t}-\eta H_t\|_K^2 =\|f_{t}\|_K^2-2\eta \langle f_t,H_t \rangle_K + \eta^2\|H_t\|_K^2 \notag\\
    \leq &\|f_{t}\|_K^2+\int_\mathcal{Z}\left(-2\eta  G'(\xi_{t,\sigma}(z)) (f_t(x)-y)f_t(x) + \eta^2 \kappa^2  G'(\xi_{t,\sigma}(z))^2 (f_t(x)-y)^2\right)\mathrm{d}\rho \notag \\
    =&\|f_{t}\|_K^2+\int_\mathcal{Z}\eta \left(\eta \kappa^2 G'(\xi_{t,\sigma}(z)) (f_t(x)-y)^2 - 2(f_t(x)-y)f_t(x) \right)G'(\xi_{t,\sigma}(z))\mathrm{d}\rho\\
    =&\|f_{t}\|_K^2+\int_\mathcal{Z}\eta \left\{\left(\eta \kappa^2 G'(\xi_{t,\sigma}(z))-2\right) \left((f_t(x)-y)- \frac{y}{\eta \kappa^2 G'(\xi_{t,\sigma}(z))-2}\right)^2 +\frac{y^2}{2-\eta \kappa^2 G'(\xi_{t,\sigma}(z))} \right\}G'(\xi_{t,\sigma}(z))\mathrm{d}\rho, \notag
\end{align}
where the first inequality is due to the Jensen's inequality and the fact that $K(x,x')\leq \kappa^{2}$ for any $x,x'\in \mathcal{X}$, and the last equality is due to the completion of squares.

Since $G'(\xi_{t,\sigma}(z))\leq C_G$, we have that $\eta \kappa^2 G'(\xi_{t,\sigma}(z))\leq 1$, and further $\eta \kappa^2 G'(\xi_{t,\sigma}(z))-2\leq 0$ and $2-\eta \kappa^2 G'(\xi_{t,\sigma}(z))\geq 1$ for any $ z \in \mathcal{Z} $. 
Putting the above bound with $|y|\leq M$  and the induction assumption $‖f_t‖_K^2 \leq \eta C_G M^2(t-1)$ into (\ref{eq:lemma12}) yields that 
$$
\|f_{t+1}\|_K^2 \leq \|f_{t}\|_K^2+\eta C_G M^2 \leq \eta C_G M^2 t,
$$
and thus
$$
\|f_{t+1}\|_K\leq M\sqrt{\eta C_G t}\leq \frac{M}{\kappa}t^\frac{1}{2}.
$$
This completes the proof of Lemma \ref{lemma:lemma2}. \qed

\noindent\textbf{Proof of Lemma \ref{lemma:lemma1}.}
With the choice of $\eta$, we have 
\begin{equation*}
    \left\|\left(I-\eta G'_+(0) L_{K}\right)^{T-i}\right\|\leq 1, \quad \forall i=0,1,\cdots,T.
\end{equation*}
Based on the proof of Theorem 1 (i) in \cite[Page 15]{guo2018gradient}, it can be derived with $\theta=0$ and $\eta_1$ replaced by $\eta$ that 
\[
    \left\| \sum_{i=1}^T \eta G'_+(0) (\lambda I+L_{K,D}) \left(I-\eta G'_+(0) L_{K,D}\right)^{T-i}\right\| \leq \eta G'_+(0)\lambda T+1.
\]
Analogously, we can replace $ L_{K,D} $ with $ L_{K} $ to obtain the same bound of the operator norm concerning $L_{K}$.
Therefore, the proof is completed.

\bibliographystyle{gmcm}
\bibliography{reference}

\appendix % Start the appendix section

\end{document}